\newtheorem{assumption}{Assumption}
\newtheorem{theorem}{Theorem}
\newtheorem{condition}{Condition}
\newtheorem{lemma}{Lemma}
\DeclareMathOperator*{\argmax}{arg\,max}
\newcommand{\bP}{\mathbb{P}}
\newcommand{\bE}{\mathbb{E}}
\newcommand{\bI}{\mathbb{I}}
\newcommand{\bR}{\mathbb{R}}
\newcommand{\cA}{\mathcal A}
\newcommand{\cB}{\mathcal B}
\newcommand{\cC}{\mathcal C}
\newcommand{\cE}{\mathcal E}
\newcommand{\cF}{\mathcal F}
\newcommand{\cL}{\mathcal L}
\newcommand{\cM}{\mathcal M}
\newcommand{\cN}{\mathcal N}
\newcommand{\cO}{\mathcal O}
\newcommand{\cQ}{\mathcal Q}
\newcommand{\cS}{\mathcal S}
\newcommand{\cV}{\mathcal V}
\newcommand{\cY}{\mathcal Y}
\title{Safe Reinforcement Learning with Instantaneous Constraints: The Role of Aggressive Exploration}
\author{
    Honghao Wei\textsuperscript{\rm 1},
    Xin Liu\textsuperscript{\rm 2},
    Lei Ying\textsuperscript{\rm 3}
}
\begin{document}

\maketitle 

\begin{abstract}
This paper studies safe Reinforcement Learning (safe RL) with linear function approximation and under hard instantaneous constraints where unsafe actions must be avoided at each step. Existing studies have considered safe RL with hard instantaneous constraints, but their approaches rely on several key assumptions: $(i)$ the RL agent knows a safe action set for {\it every} state or knows a {\it safe graph} in which all the state-action-state triples are safe, and $(ii)$ the constraint/cost functions are {\it linear}. In this paper, we consider safe RL with instantaneous hard constraints without assumption $(i)$ and generalize $(ii)$ to Reproducing Kernel Hilbert Space (RKHS). Our proposed algorithm, LSVI-AE, achieves $\tilde{\cO}(\sqrt{d^3H^4K})$ regret and  $\tilde{\cO}(H \sqrt{dK})$ hard constraint violation when the cost function is linear and $\cO(H\gamma_K \sqrt{K})$ hard constraint violation when the cost function belongs to RKHS. Here $K$ is the learning horizon, $H$ is the length of each episode, and $\gamma_K$ is the information gain w.r.t the kernel used to approximate cost functions. Our results achieve the optimal dependency on the learning horizon $K$, matching the lower bound we provide in this paper and demonstrating the efficiency of LSVI-AE. Notably, the design of our approach encourages aggressive policy exploration, providing a unique perspective on safe RL with general cost functions and no prior knowledge of safe actions,  which may be of independent interest.
\end{abstract}

\section{Introduction}
Reinforcement Learning (RL) has shown significant empirical success in improving online decision-making in various applications, including games \citep{SilSchJul_17}, robotic control \cite{AndMarBow_20}, etc. However, in many real-world scenarios, it is essential to consider more than just maximizing rewards. Safety, ethical considerations, and adherence to predefined constraints are crucial aspects, particularly in critical domains like robotics, finance, and healthcare.

RL with instantaneous constraints addresses this need by introducing constraints that the agent must adhere to at every single time step during the learning process. Unlike constraints imposed on the entire trajectory or episode \citep{WeiLiuYin_22-2,WeiLiuYin_22,GhoZhoShr_22,DinWeiYan_20,LiuZhoKal_21,BurHasKal_21,WeiGhoShr_23,SinGupShr_20,DinWeiYan_20,CheJaiLuo_22,EfrManPir_20}, instantaneous constraints demand strict compliance with specified limitations at each moment of decision-making so that unsafe actions should be avoided at each step. For instance, in autonomous vehicles, RL agents must consistently adhere to traffic rules and avoid dangerous maneuvers in {\it any time} to ensure safety. In healthcare, RL algorithms that respect privacy and confidentiality restrictions can recommend personalized treatment plans without violating patient data protection regulations. By enforcing instantaneous hard constraints, RL agents can be trusted and relied upon to operate responsibly in complex and dynamic environments while avoiding unnecessary exploratory actions and adhering to safety guidelines.

Existing literature on safe RL with hard instantaneous constraints has explored various aspects of this complex problem. \cite{AmaAliThr_19,PacGhaBar_21} studied the linear bandit problem with instantaneous constraints, which was extended to safe linear MDP with instantaneous constraints in \cite{AmaThrYan_21}. The most recent work \cite{ShiLiaShr_23} studied designing a safe policy for both unsafe states and actions. However, it is important to note that all of the existing works have made restrictive assumptions. \cite{AmaThrYan_21} requires the knowledge of a safe action for every state, while \cite{ShiLiaShr_23} relies on a known safe subgraph where all state-action-state transitions are guaranteed to be safe. Additionally, all of these approaches require the cost function to have a linear structure, which imposes practical limitations on its applicability. In light of these assumptions, their approaches can ensure safe learning during the entire learning process with high probability due to the inherent capability of the algorithm to construct confidence sets along the state feature vector associated with the known safe actions. This, in turn, engenders a more conservative exploration strategy for the agent. A comparison of the theoretical results and the basic assumptions between our paper and the existing results can be found in Table \ref{tab:results}.

\begin{table*}[!ht]
	\begin{center} 
		\begin{tabular}{|l|c|c|c|}
			\toprule
			 {\bf Algorithm} & {\bf Regret} &  {\bf Cost Function} &{\bf  Assumptions} \\
			\hline
			 LSVI-NEW  & \multirow{2}*{$\tilde{\mathcal{O}}(\frac{dH^3\sqrt{dK}}{\Delta_c} )$}  & 	\multirow{2}*{linear}  &  { {known safe subgraph, star convex sets}}  \\
       \cite{ShiLiaShr_23} & & & Lipschitz rewards/transitions  \\
			\hline
			 SLUCB-QVI  &	\multirow{2}*{$\tilde{\mathcal{O}}(\sqrt{d^3H^4K})$} & 	\multirow{2}*{linear} &  {known safe action for each state}  \\
    \cite{AmaThrYan_21} &  & & star convex sets \\
            \hline
			 {\bf This Paper (LSVI-AE)} &	{$\tilde{\mathcal{O}}(\sqrt{d^3H^4K})$}   &  linear / RKHS & \ding{55} \\
			\bottomrule
		\end{tabular}
	\end{center}
	\caption{Regret on linear MDPs for safe RL with instantaneous hard constraints. Here $d$ is the dimension of the feature mapping, $H$ is the duration of each episode, $K$ is the total number of episodes, and $\Delta_c$ is a safety-related parameter.}\label{tab:results}
\end{table*}

In this paper, we study safe RL with minimal assumptions, where we neither assume any prior knowledge of cost/constraint functions nor any form of safe guidance (e.g., safe actions or graphs), except the necessary assumption that the cost functions are within Reproducing Kernel Hilbert Space (RKHS) to guarantee the learnability of cost functions. Since the agent requires to explore the environment from scratch, the constraint violation is {\it unavoidable}. We consider the strict  hard constraint violation, defined as $\sum_{k=1}^K\sum_{h=1}^H g_h(x_h^k,a_h^k)_+$, which prohibits the cancellation across different steps. Here, $K$ represents the total number of episodes, $H$ is the horizon of the (MDP), $g_h$ is the cost function at step $h$, $(x_h^k,a_h^k)$ denotes the state-action pair selected at step $h$ during episode $k$, and $g_h(\cdot)_+:=\max\{g_h(\cdot),0\}$. The hard constraint violation is much more challenging to minimize than the ``soft" constraint violation $[\sum_{k=1}^K\sum_{h=1}^H g_h(x_h^k,a_h^k)]_+$. For example, if we consider a sequence of decisions such that $g_h(x_h^k,a_h^k)=-1$ if $k$ is odd and $+1$ if $k$ is even. Any positive value of the cost indicates a violation of the constraint. Then, assuming $K=100$, it becomes evident that the soft violation is $0$, but the constraint actually violates half of the $K$ episodes. Therefore, an agent/policy with minimal hard violations can guarantee strong safety. Our main contributions of this paper are summarized below:

\begin{itemize}[leftmargin=*]
\item We propose a novel algorithm, LSVI-AE, an acronym for {\bf L}east-squares {\bf V}alue {\bf I}teration with {\bf A}ggressive {\bf E}xploration, which integrates adaptive penalty-based optimization with double optimistic learning. The algorithm guarantees fast learning in an uncertain environment while keeping the hard violation minimal (safe and aggressive exploration). Our design is based on the intuition that aggressive exploration in the initial periods can significantly improve safety and efficiency for the majority of subsequent periods, which is in contrast to the conventional idea of conservative exploration, typically employed in the previous study of safe bandits or RL.

\item We prove that LSVI-AE achieves a regret of $\cO(\sqrt{d^3H^4K})$, and a hard constraint violation of $\cO(H\gamma_K\sqrt{K})$ (the violation becomes $\tilde{\cO}(H\sqrt{dK})$ when the cost functions are linear). To show the sharpness of these results, we provide lower bounds on regret, which is $\Omega(Hd\sqrt{HK})$, and on the violation which is $\Omega(\sqrt{HK})$.  The lower bounds show that LSVI-AE achieves the order-optimal regret and violation w.r.t. the episode length $K$, while the dependencies on $d$ and $H$ can be further improved to match the lower bound using the technique of the ``rare-switching'' idea \citep{HuCheHua_22,HeZhaZHo_22}. To the best of our knowledge, these are the first results in safe RL with instantaneous hard constraints. Further, the numerical experiments verify the ``safe learning'' of our algorithm. 
\end{itemize}

\subsection{Related Work}
 Safe RL, especially those with expected cumulative constraints, has been extensively studied under model-free approaches \citep{WeiLiuYin_22,WeiLiuYin_22-2,WeiGhoShr_23,GhoZhoShr_22}, and model-based approaches\citep{DinWeiYan_20,LiuZhoKal_21,BurHasKal_21,SinGupShr_20,DinWeiYan_20,CheJaiLuo_22}. There are also many works \citep{LiuJiaLi_22,WuCheYan_18,cARDimMor_14} that have studied the knapsack constraints, wherein the learning process stops whenever the budget has run out. \cite{AmaAliThr_19,PacGhaBar_21} studied safe linear bandits which require a linear safety value for each step to be bounded. \cite{ThuBerFel_17,WacSuiYue_18} investigated instantaneous hard constraints with unsafe states under deterministic transitions. \cite{AmaThrYan_21,ShiLiaShr_23} studied linear MDPs with instantaneous hard constraints but with known safe actions or a safe subgraph, and only for the case with linear cost functions.
\section{Problem Formulation}
We consider an episodic Markov decision process (MDP) denoted by $M=(\cS,\cA, H,\bP,r,g),$ where $\cS$ is the state set, $\cA$ is the action set, $H$ is the length of each episode, $\bP=\{\bP_h \}_{h=1}^H$ are the transition kernels at step $h,$ $r=\{r_h\}_{h=1}^H$ are the reward functions, and $g=\{r_h\}_{h=1}^H$ are the cost functions. We assume that $\cS$ is a measurable space with possibly infinite number of elements, $\cA$ is a finite action set. For any $h\in[H],$ the reward function $r_h:\cS\times\cA\rightarrow [0,1],$ is assumed to be deterministic. However, it can be readily extended to settings where $r_h$ is random. The unknown safety measures for taking an action $a$ at state $x$ is a random variable $G_h(x,a)$ with expectation $\bE[G_h(x,a)]=g_h(x,a).$ Without loss of generality, we assume $g_h(x,a):\cS\times\cA\rightarrow [-1,1].$ 

A policy $\pi =\{\pi_h\}_{h=1}^H$ for an agent is a set of functions with $\pi_h:\cS \rightarrow \cA.$ In an episodic MDP, every episode starts by arbitrarily selecting an initial state $x_1$.
In each subsequent step, an agent observes the state $x_h\in\cS,$ takes an action $a_h\in\cA$ according to policy $\pi_h,$ and receives a reward $r_h(x_h,a_h)$ and incurs a cost $g_h(x_h,a_h).$ The MDP then moves to the next state $x_{h+1}$ based on the transition kernel $\bP_h(\cdot \vert x_h,a_h).$ The episode ends after the action $a_H$ is taken at the step $H.$

 Given a policy $\pi,$ let $V_h^\pi(x) : \cS\rightarrow \bR $ denote the expected value of the cumulative reward function starting from step $h$ and state $x,$ when the agent selects action using the policy $\pi=\{\pi_h\}_{h=1}^H,$ which is defined as
$$V_h^\pi(x) = \bE \left[\sum_{i=h}^H r_i(x_i,a_i)\vert x_h=x,\pi  \right], \forall x\in\cS,h\in[H],$$
where $\bE$ is taken with respect to the policy $\pi$ and the transition kernels $\bP.$ Accordingly, we also let $Q_{h}^\pi(x,a):\cS\times\cA\rightarrow \bR$ denote the expected value of the cumulative reward starting from step $h$ and the state-action pair $(x,a)$ and follows the policy $\pi$ as 
\begin{align}
     Q_h^\pi(x,a) = &~\bE \left[\sum_{i=h}^H r_i(x_i,a_i)\vert    x_h=x,a_h=a,\pi  \right], \nonumber \\
      & \forall (x,a)\in \cS\times\cA,\forall h\in[H].
\end{align}
To simplify the notation, we define 
\begin{align}
    [\bP_h V_{h+1}](x,a):= \bE_{x'\sim\bP_h(\cdot\vert x,a)} V_{h+1}(x').
\end{align}
Then we can express the Bellman equation for a given policy $\pi$ as follows:
\begin{align}
  & Q_h^\pi(x,a) = (r_h+\bP_{h}V_{h+_1}^\pi)(x,a), \\
  & V_h^\pi(x) = Q_h^\pi(x,\pi_h(x)), \\
  &  V_{H+1}^\pi(x) =  0.
\end{align}
For an episodic MDP with instantaneous hard constraints, the agent needs to learn the optimal policy while satisfying the constraints at each step of any episode by interacting with the environment. 
The objective of the agent is to find a safe and optimal policy to solve the following problem:
\begin{align}
   &\max_\pi~~ V_1^\pi(x_1) \label{eq:obj} \\
  & \text{s.t.} \quad g_h(x_h,\pi(x_h))\leq 0,\forall h\in[H]. \label{eq:cons}
\end{align}

\begin{assumption}(Feasibility)\label{as:fea}
There exists at least a ``safe'' action for each state $x_h\in\cS,\forall h \in [H].$ 
\end{assumption}
Assumption \ref{as:fea} is necessary to ensure the feasibility of the problem. We remark that the safe actions are unknown to the learner. 

Note that given complete knowledge of reward functions $r_{h}$, cost functions $g_{h}$, and the transition kernel $\mathbb P_{h}$, one could use dynamic (constrained) programming to determine the optimal policy $\pi^{*}$ to \eqref{eq:obj}-\eqref{eq:cons} (thought dynamic programming might suffer from high computational overhead). However, this knowledge is not available in advance, and we have to learn this information while interacting with the environment. 

To measure the performance of an agent in an online learning setting, we consider two metrics w.r.t. rewards and constraints. Let a policy selected by the agent at episode $k$ be $\pi^k=\{\pi_{h}^k\}_{h=1}^H.$ We define the performance metrics:
\begin{align}
    \text{Regret}(K) & = \sum_{k=1}^K V_1^{\pi^*}(x_1^k) - V_1^{\pi^k}(x_1^k),\\
    \text{Violation}(K) & = \sum_{k=1}^K \sum_{h=1}^H \left[g_h(x_h^k,a_h^k)\right]_+,
\end{align}
where $[\cdot]_+=\max\{\cdot,0\}.$ The regret is defined as the gap between the total rewards returned by the optimal policy $\pi^*,$ and that obtained by following the agent's policy $\pi^k$ over $K$ episodes. The constraint violation captures the total constraint violation {\bf without} cancellation over all the episodes $K.$ Note that the violation is unavoidable for an online policy because we do not have the knowledge of the environment (e.g., the cost functions $g_h$). Moreover, the ``hard'' violation is much stricter than the ``soft'' violation $[\sum_{k=1}^K \sum_{h=1}^H g_h(x_h^k,a_h^k)]_+,$ which is especially important for safety-critical applications. 

\subsection{Linear Constrained Markov Decision Processes}
In order to handle a large number or even an infinite number of states, we consider the following linear MDPs.
\begin{assumption}\label{as:linear}
The MDP is a linear MDP with feature map $\phi:\cS\times\cA\rightarrow \bR^d,$ if for any $h,$ there exists $d$ unknown measures $\mu_h=\{\mu_h^1,\dots,\mu_h^d \}$ over $\cS$ such that for any $(x,a,x')\in\cS\times\cA\times\cS,$
\begin{align}
    \bP_h(x'\vert x,a) = \langle\phi(x,a),\mu_h(x')\rangle,
\end{align}
and there exists vector $\theta_{r,h} \in\bR^d$ such that for any $(x,a)\in\cS\times\cA,$ $$r_h(x,a)=\langle \phi(x,a),\theta_{r,h}\rangle.$$ 
\end{assumption}

With loss of generality, we assume $\Vert \phi(x,a)\Vert \leq 1,$ for all $(x,a)\in\cS\times\cA,$ and $\max\{\Vert \mu_h(\cS)\Vert,\Vert \theta_{r,h}\Vert \}\leq \sqrt{d}$ for all $h\in[H].$ 

Under Assumption \ref{as:linear}, we know that \citep{JinYanWan_20} for a linear MDP and any policy $\pi,$ there exists $\{w_{h}^\pi\}_{h=1}^H$ such that $$Q_h^\pi(x,a)=\langle w_h^\pi,\phi(x,a)\rangle,\forall (x,a,h)\in \cS\times\cA\times[H].$$

\section{Algorithm}
In this section, we propose our algorithm, called Least-Squares Value Iteration with Aggressive Exploration (LSVI-AE), in Algorithm \ref{alg:hard}. The design of our algorithm is based on an {\it adaptive penalty-based optimization with double optimistic learning framework} to minimize the cumulative hard constraint violation by encouraging aggressive exploration. In this framework, in episode $k,$ at step $h,$ our algorithm learns both the $Q-$value function  $(Q_h^k)$ and the cost function $(\hat{g}_h(x,a))$ optimistically. By imposing an adaptive rectified operator on the estimated cost, actions are selected at each step $h$ to maximize a surrogate function:
\begin{align}\label{eq:choose_action}
    a_h^k = \arg\max_{a} \{ Q_h^k(x_h^k,a)- Z_h^k(\hat{g}_h^k(x_h^k,a)_+)  \}.
\end{align}
The agent's decision-making process encourages aggressive exploration throughout the learning, in contrast to the conservative policies commonly employed in addressing safe RL with episode constraints or budget limitations. This insight highlights a crucial observation: in the context of safe RL with instantaneous hard constraints and no prior knowledge of safe actions, finding a safe policy requires the agent's prompt exploration of actions that might initially appear unsafe. This strategic emphasis on early exploration of potentially risky actions stands as a foundational principle in our approach.

The nonnegative value $Z_h^k$ is an adaptive penalty factor to control cumulative constraint violation. Note that a standard approach to solving an constrained optimization problem is to optimize the Lagrange function instead, that is, to select an action to maximize:
\begin{align}
    L(x_h^k, \nu) := Q_h^k (x_h^k,a) - \nu \hat{g}_h(x_h^k,a), 
\end{align}
where $\nu$ is the dual variable related to the cost $g_h(x,a)\leq 0.$ We approximate the dual variable $\nu$ with an adaptive penalty factor $Z_h^k$ which is updated according to the observed cost function: $ Z_h^{k+1} := Z_h^k + g_h(x_h^k,a_h^k)_+$ to track the constraint violation during learning. The idea behind the adaptive factor $Z_h^k$ lies in two folds. First the operator $\hat{g}_h^k(x,a)_+$ only penalizes the ``unsafe'' actions that do not satisfy the constraints. Secondly, a minimum penalty price $\eta_h^k$ is established as a lower bound for $Z_h^k$ to prevent aggressive decisions when the constraint does not satisfy. Therefore the adapive rectified factor $Z_h^k$ is updated as 
\begin{align}
    Z_h^{k+1} := \max\{Z_h^k + g_h(x_h^k,a_h^k)_+,\eta_h^k\}. \label{eq:zkupdate}
\end{align}
This design is inspired by constrained online convex optimization \cite{GuoLiuWei_22} and constrained bandit optimization \cite{GuoZhuLiu_22}. However, reinforcement learning with instantaneous constraints is much more complicated due to its stateful nature where the states/actions and rewards/costs are all coupled. For example, if a dangerous/unfavorable action has been taken at the initial step in an episode, it might result in cascade effects to the sequential steps. The setting in \cite{GuoLiuWei_22, GuoZhuLiu_22} can be regarded as a special case of $H=1$ in this paper.  

We remark here that another classical method to track constraint violation is using a virtual queue update approach such that the dual variable is updated as 
\begin{align}
    Z_h^{k+1} := \max \{ Z_h^k + g_h(x_h^k,a_h^k), 0\}. \label{eq:zkupdate-primal}
\end{align}
This approach is usually referred to as the primal-dual approach or the drift-plus-penalty method, which is the most commonly used method for dealing with constraint RL/bandits \citep{EfrManPir_20,DinZhaBas_20,DinZhaBas_22,BaiBedAga_22,LiuLiShi_21} or online convex optimization \citep{YiLiYan_22,YiLiYan_21,YuNee_20}. However, this approach or its variants usually require an assumption of Slater's condition or the knowledge of the Slater/slackness constant to achieve a safe policy. The design is primarily due to their target on ``soft violation'', where the virtual queues/dual variables are the proxy for ``soft violation'' and the Slater's condition is to guarantee the bounded violation. Apparently, this design  
 cannot handle the RL setting with instantaneous hard constraints. This observation also has been justified in the simulation results in Section \ref{sec:sim}. 

\begin{algorithm}[!ht]
\caption{Least-Squares Value Iteration with Aggressive Exploration (LSVI-AE) }\label{alg:hard}
{\bf Initialization:} $Z_h^1 = 1, \forall h\in[H],\eta_h^k= k,\forall k\in[K]$ \;
\For{episode $k=1,\dots,K$}{
Receive the initial state $x_1^k=x_1.$\
\For{$h=H,H-1\dots,1$}{
$\Lambda_h^k = \sum_{\tau=1}^{k-1} \phi(x_h^\tau,x_h^\tau)\phi(x_h^\tau,a_h^\tau)^\top +\lambda I$ \;
$w_h^k\leftarrow (\Lambda_h^k)^{-1}[\sum_{\tau=1}^{k-1} \phi(x_h^\tau,a_h^\tau)[r_h(x_h^\tau,a_h^\tau) + V_{h+1}(x_{h+1}^\tau)  ]$ \;
$Q_h^k(\cdot,\cdot)\leftarrow \min\{\langle w_h^k,\phi(\cdot,\cdot)\rangle + \beta(\phi(\cdot,\cdot)^\top(\Lambda_h^k)^{-1}\phi(\cdot,\cdot))^{1/2},H \} $ \;
$a_x = \arg\max_{a} \{Q_h^k(x,a)- Z_h^k(\hat{g}_h^k(x,a  )_+ )\}.$ \\
$V_h^k(x) = Q_h^k(x, a_x ).$  \
}
\For{h=1,\dots,H}{
Take action $a_h^k$ according to Eq.~\eqref{eq:choose_action} and observe the next state $x_{h+1}^k,$ and cost $g_h(x_h^k,a_h^k)$\;
Update estimates of the cost $\hat{g}_h^k(x,a)$ \;
}
\For{h=1,\dots,H}{
$Z_h^{k+1} =  \max\{Z_h^k + ({g}_h(x_h^k,a_h^k))_{+},\eta_h^k\} .$
}
}
\end{algorithm}

Next, we present the idea of double optimism in estimating $Q-$value functions and cost functions $g.$

{\bf Optimistic Estimates of $Q$:} Estimating $Q-$value functions need to solve a regularized least-squares problem \cite{JinYanWan_20}; however, we should use a SARSA-type update instead of $Q-$learning because Bellman optimally is no longer hold in RL with constraints, i.e., the $V_{h+1}(\cdot)$ in Line $8$ in Algorithm \ref{alg:hard} is not a maximize of the $Q_{h+1}$ functions but from the $Q$ function under the current policy.

To encourage exploration, an additional UCB bonus term $\beta(\phi^\top \Lambda_h^{-1}\phi )^{1/2}$ (Line $6$ in Algorithm \ref{alg:hard}) is added when estimating the $Q-$value functions, where $\Lambda_h$ is the Gram matrix of the regularized least-square problem, and $\beta$ is a scalar. The term $(\phi^\top\Lambda_h^{-1}\phi)^{-1}$ basic represents the effective number of samples that the agent has observed so far along the $\phi$ direction, and the bonus term represents the uncertainty along the $\phi$ direction. Therefore, we can prove that the estimate $Q-$value function $Q_h^k$ is always an upper bound of $Q_h^*$ for all state-action pairs (see Lemma \ref{le:over-est}). Proving this property also leverages the design of the adaptive penalty operator on the cost function.

{\bf Optimistic Estimates of $g$:} 
Assuming the cost functions belongs to RKHS, we present the optimistic estimation when $g_h$ are approximated by GP and also illustrate a special case when $g_h$ are approximated by linear functions. 
\begin{itemize}[leftmargin=*]
\item {\bf Gaussian Process approximation of cost functions:} When $G_h(x,a)$ is a Gaussian process. We let $y=(x,a)$ denote a state-action pair and denote $\cY=\cS\times\cA$ to simplify the notation. Gaussian process $GP(\mu(y),ker(y,y'))$ over a state space $y\in\cY$ is specified by its mean $\mu(y)$ and covariance $ker(y,y').$ If we assume that for any $h\in[H]$ the cost function $G_h(y)$ is a Gaussian process such that $g_h(y)=\bE[G_h(y)],$ and $ker_h(y,y')=\bE[(g_h(y)-\mu_h(y))(g_h(y')-\mu_h(y')) ],$ where $ker_h$ is the kernel function associated with the Reproducing Kernel Hilbert Space (RKHS) with a bounded norm. Then given a collection of states and actions $\cB_h^k= \{ y_h^1,\dots,y_h^{k-1} \},$ we use the GP-LCB \cite{ChoGop_17} to optimistically estimate the cost function for $ h\in[H],k\in[K], y\in\cY$ $$\hat{g}_h^k (y) = g_h^k(y)- \beta_h^k(p/H)  \sigma_h^k(y),$$ 
where $\beta_h^k(p) = 1+ \sqrt{2(\gamma_h^k+1+\ln(2/p) )}$ with $p\in (0,1).$ The information gain $\gamma^k_h:=\max_{y\in\cY: }\frac{1}{2}\ln \vert I+\lambda^{-1} KER_h^k \vert .$ The estimate model includes parameters $\{\mu_h^k,\sigma_h^k\}_{h=1}^H$ and for $h\in[H]$ they are updated as:
\begin{align*}
    & g_h^k(y) = ker_h^k(y)(V_h^k(\lambda))^{-1}g_h^{1:k} \\
    & ker_h^k(y,y') =  ker_h(y,y')
     - ker_h^k(y)^\top (V_h^k(\lambda))^{-1}ker_h^k(y') \\
     &\sigma_h^k(y) = \sqrt{ ker_h^k(y,y)},
\end{align*}
where 
$V_h^k(\lambda) = KER_h^k+\lambda I, \lambda = 1+2/K,
    KER_h^k = [ker_h(y,y')]_{y,y'\in\cB_h^k }, 
 g_h^{1:k} =  \{g_h^1(y_h^1),\dots g_h^{k-1}(y_h^{k-1}) \},
$
and 
$ker_h^k(y) = [ker_h(y_h^1,y), \dots,ker_h(y^{k-1}_h,y) ]^\top.$ Without loss of generality, we assume that the RKHS norm of the cost function is bounded, i.e., $\Vert f\Vert_{ker}=\sqrt{\langle f,f\rangle_{ker}}\leq 1.$ 
\item  {\bf Linear function approximation for cost functions:} For any $h\in[H],(x,a)\in\cS\times\cA,$ the cost function $g_h(x,a):\cS\times\cA\rightarrow [-1,1]$ is assumed to be linear such that there exists vector $\theta_{g,h}\in\bR^d$ and $g_h(x,a)= \langle \phi(x,a),\theta_{g,h}\rangle.$ Recall that at the $k$th episode, we have the Gram matrix $\Lambda_h^k = \sum_{\tau=1}^{k-1} \phi(x_h^\tau,a_h^\tau)\phi(x_h^\tau,a_h^\tau)^\top +\lambda I$ and then we can have an optimization for any $(x,a)$ at the step $h$ with high probability according to:
\begin{align*}
\hat\theta_h^k(x,a) =&~ (\Lambda_h^{k})^{-1}\sum_{\tau=1}^{k-1} \phi(x_h^\tau,a_h^\tau)g_h(x_h^\tau,a_h^\tau)\\
\tilde\beta_h^k(p)=&~ \sqrt{\lambda d} + \sqrt{d\log((1+k/\lambda)/p) } \\
  \hat{g}_h^k(x,a) =&~\langle \phi(x,a), \hat{\theta}_h^k(x,a)\rangle - \tilde\beta_h^k(p/H)\Vert \phi(x,a)\Vert_{(\Lambda_h^k)^{-1}},
\end{align*}
where $\Vert x\Vert_{\Sigma}=\sqrt{x^\top\Sigma x}.$
\end{itemize} 
Note that $\hat g_h^k(x,a)$ is called an optimistic estimation of $g_h(x,a)$ because we are optimistic about $g_h(x,a)\leq 0$, which would imply $\hat{g}_h^k(x,a)\leq 0$ with high probability. Next, we introduce an important condition on the estimation error, which is the key to quantify regret and violation. 
\begin{condition}\label{condi}
    There exist nonnegative values $e^k_h(p,x,a),$ we have for all $x\in\cS,a\in\cA,$ and $h\in[H], k\in[K],$ for any $p\in (0,1)$ we have with probability at least $1-p:$
    \begin{equation}\label{eq:condition}
        0\leq g_h(x,a) - \hat{g}_h^k(x,a) \leq e^k_h(p,x,a),
    \end{equation}
    where $e_h^k(p,x,a)=2\tilde\beta_h^k(p/H)\Vert\phi(x,a)\Vert_{(\Lambda_h^k)^{-1}}$ for the linear case and $2\beta_h^k(p/H)\sigma_h^k(x,a)$ for the Gaussian approximation case.
\end{condition}
We will show that Condition \ref{condi} is satisfied by our optimistic learning in Lemma \ref{le:under_est}, and we defer the proof to the appendix due to the page limit.

\section{Main Results}
In this section, we present the main theoretical result of our algorithm (LSVI-AE), which includes a double optimistic estimation and an adaptive penalty-based rectified factor to encourage aggressive exploration. We also present a theorem that establishes an information-theoretic lower bound for episodic MDP with instantaneous hard constraints to show the tightness of our results. 
\subsection{Performance Guarantee}\label{sec:regret}
Our results are shown as follows:
\begin{theorem}\label{the:main}
Under Condition \ref{condi} and Assumptions \ref{as:fea} and \ref{as:linear}, there exists an absolute constant $c>0$ that for any fixed $p\in(0,1/2),$ if we set $\lambda=1,\beta=cd H\sqrt{\iota}$ in Algorithm \ref{alg:hard} with $\iota=\log(2dHK/p),$ then with probability at least $1-2p$, the total regret and violation of Algorithm \ref{alg:hard} satisfy:
 \begin{align*}
   \text{Regret}(K)  = & \sum_{k=1}^K V_1^{\pi^*}(x_1^k) - V_1^{\pi^k}(x_1^k) =  {\cO}(\sqrt{d^3H^4K\iota^2}),  \\
    \text{Violation}(K)  =& \sum_{k=1}^K \sum_{h=1}^H g_h (x_h^k,a_h^k)_+  \\
    \leq  &\sum_{k=1}^K\sum_{h=1}^H e_h^k(p,x,a) + 2H^2\log(K).
    \end{align*}   
\end{theorem}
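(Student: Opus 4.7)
My plan is to prove the two bounds by first isolating the structural property that couples the optimistic $Q$-iterate with the pessimistic cost estimate through the adaptive penalty, and then to treat regret and violation separately.

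\textbf{Step 1 (Double optimism and comparator inequality).} First I would prove two induction-style lemmas. Lemma A: $\hat g_h^k(x,a)\le g_h(x,a)$ for all $(x,a,h,k)$ on the high-probability event of Condition \ref{condi}; this is immediate from \eqref{eq:condition}. Lemma B: with $\beta=c d H\sqrt{\iota}$, the $Q$-iterate in Algorithm \ref{alg:hard} satisfies $Q_h^k(x,a)\ge Q_h^{\pi^*}(x,a)$ and hence $V_h^k(x)\ge V_h^{\pi^*}(x)$ for every $(x,a,h,k)$. The proof is the standard linear-MDP concentration argument (control the regression error $\langle w_h^k,\phi\rangle-(r_h+\mathbb P_h V_{h+1}^k)(x,a)$ by $\beta\|\phi\|_{(\Lambda_h^k)^{-1}}$ via a covering argument), combined with a backward induction on $h$. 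The only nontrivial twist is that, to carry the induction, I have to show $V_h^k(x)\ge V_h^{\pi^*}(x)$ despite the presence of the penalty term. For that I exploit the structural comparator inequality: by definition of $a_x$,
\begin{equation*}
V_h^k(x)\ge Q_h^k(x,\pi^*_h(x))-Z_h^k\bigl(\hat g_h^k(x,\pi^*_h(x))\bigr)_+,
\end{equation*}
and by Lemma A together with $g_h(x,\pi^*_h(x))\le 0$ we get $(\hat g_h^k(x,\pi^*_h(x)))_+=0$, so $V_h^k(x)\ge Q_h^k(x,\pi^*_h(x))$. This is the key structural observation that makes aggressive exploration compatible with optimism.

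\textbf{Step 2 (Regret).} Given $V_h^k\ge V_h^{\pi^*}$, I decompose per-episode regret using the standard telescoping identity
\begin{equation*}
V_1^k(x_1^k)-V_1^{\pi^k}(x_1^k)=\sum_{h=1}^H \bigl(Q_h^k(x_h^k,a_h^k)-(r_h+\mathbb P_h V_{h+1}^{\pi^k})(x_h^k,a_h^k)\bigr)+\text{(martingale)},
\end{equation*}
and bound the Bellman gap by $2\beta\|\phi(x_h^k,a_h^k)\|_{(\Lambda_h^k)^{-1}}$ from Lemma B. Summing over $k$ and $h$, invoking the elliptical potential lemma to get $\sum_{k,h}\|\phi\|_{(\Lambda_h^k)^{-1}}\le\sqrt{2HKd\log(1+K/\lambda)}$, and using Azuma on the martingale piece, I obtain $\tilde{\mathcal O}(\sqrt{d^3H^4K})$. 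This mirrors the LSVI-UCB analysis of Jin et al.; no new ingredient is required here once optimism is established.

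\textbf{Step 3 (Violation).} For the violation bound I again use the comparator inequality but now to upper bound the penalty. Since $V_h^k(x_h^k)\ge Q_h^k(x_h^k,\pi^*_h(x_h^k))$ and $V_h^k(x)\le H$ pointwise,
\begin{equation*}
Z_h^k\bigl(\hat g_h^k(x_h^k,a_h^k)\bigr)_+\le Q_h^k(x_h^k,a_h^k)-Q_h^k(x_h^k,\pi^*_h(x_h^k))\le H.
\end{equation*}
From the lower bound $Z_h^k\ge \eta_h^{k-1}=k-1$ (forced by the $\max\{\cdot,\eta_h^k\}$ update with $\eta_h^k=k$), this yields $(\hat g_h^k(x_h^k,a_h^k))_+\le H/(k-1)$ for $k\ge 2$. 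Combining with Condition \ref{condi}, which gives $g_h(x_h^k,a_h^k)\le \hat g_h^k(x_h^k,a_h^k)+e_h^k(p,x_h^k,a_h^k)$ and hence $(g_h)_+\le(\hat g_h^k)_+ + e_h^k$, and summing over $h,k$ gives
\begin{equation*}
\sum_{k=1}^K\sum_{h=1}^H (g_h(x_h^k,a_h^k))_+\le \sum_{k,h} e_h^k(p,x_h^k,a_h^k)+H\sum_{h=1}^H\Bigl(1+\sum_{k=2}^K\tfrac{1}{k-1}\Bigr)\le \sum_{k,h} e_h^k+2H^2\log K.
\end{equation*}

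\textbf{Main obstacle.} The technically delicate part is Step 1, specifically showing that the inductive optimism of $Q_h^k$ still holds even though the policy used in the iterate mixes $Q$ and the penalty $Z_h^k(\hat g)_+$. The comparator trick at $\pi^*$ resolves this cleanly, but it is what makes the ``aggressive exploration'' penalty scheme admissible; any penalty design that could drive $(\hat g_h^k(x,\pi^*_h(x)))_+>0$ would break the proof. Once that is locked in, Steps 2 and 3 are essentially bookkeeping over the standard linear-MDP concentration and the $\Theta(1/k)$ decay induced by the $\eta_h^k=k$ floor.
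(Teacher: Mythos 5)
Your proposal is correct and follows essentially the same route as the paper: optimism of $Q_h^k$ is established by comparing against $\pi^*$, where the pessimistic cost estimate makes the penalty vanish, the regret then follows the standard LSVI-UCB telescoping/elliptical-potential analysis, and the violation follows from the action-selection inequality together with the floor $Z_h^k \gtrsim k$ and a harmonic sum, exactly as in the paper's Lemmas \ref{le:over-est}--\ref{le:comb-bound}. The only details the paper spells out that you gloss over are the modified covering-number bound for the value class (since $V_h^k$ is a penalized argmax rather than a plain max, the standard LSVI-UCB cover must be enlarged to account for $Z$ and $\hat g$) and the off-by-one in the penalty floor ($Z_h^k \ge k-1$ rather than $k$), both of which are routine and only affect constants.
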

We can observe that the dominant term for constraint violation comes from the error in the estimation of cost functions. For the different types of cost functions mentioned above, we have the following results. 
\begin{itemize}[leftmargin=*]
  \item {\bf Gaussian Processes:}
    \begin{lemma}\label{le:err-ga}
        Considering the cost function is a Gaussian process, the cumulative estimation error can be bounded as follows:
        \begin{align}
           & \sum_{k=1}^K\sum_{h=1}^H e_h^k(p,x,a) 
            \leq \cO(H \gamma_K \sqrt{K}),
        \end{align} 
        where $\gamma_K= \max_{h}\{ \gamma_h^K\}.$
    \end{lemma}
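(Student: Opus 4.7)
The plan is to reduce the claimed $\cO(H\gamma_K\sqrt{K})$ bound to two well-known ingredients from the GP-bandit literature, which control $\sum_k \sigma_h^k$ in terms of the maximum information gain. Recall that in the Gaussian approximation case, $e_h^k(p,x,a)=2\beta_h^k(p/H)\sigma_h^k(x,a)$ with $\beta_h^k(p/H)=1+\sqrt{2(\gamma_h^k+1+\ln(2H/p))}$, and the relevant evaluation points are the trajectory points $(x_h^k,a_h^k)$ actually visited by the algorithm.

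First I would handle the $\beta$-factor. Since $\gamma_h^k$ is nondecreasing in $k$ and $\gamma_h^k\le\gamma_h^K\le\gamma_K$, we have $\beta_h^k(p/H)\le\beta_h^K(p/H)=\cO(\sqrt{\gamma_K+\log(H/p)})=\tilde\cO(\sqrt{\gamma_K})$ uniformly in $k$. Thus
\begin{align*}
\sum_{k=1}^K e_h^k(p,x_h^k,a_h^k)
\;\le\; 2\beta_h^K(p/H)\sum_{k=1}^K \sigma_h^k(x_h^k,a_h^k).
\end{align*}

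Second, I would invoke the standard ``sum of posterior variances equals information gain'' identity: by the definition of $\gamma_h^K$ as $\tfrac12\ln\lvert I+\lambda^{-1}KER_h^K\rvert$ and the telescoping argument of Srinivas et al., one has
\begin{align*}
\sum_{k=1}^K \sigma_h^k(x_h^k,a_h^k)^2 \;\le\; \frac{2}{\log(1+\lambda^{-1})}\,\gamma_h^K \;=\; \cO(\gamma_h^K).
\end{align*}
Cauchy--Schwarz then yields $\sum_{k=1}^K\sigma_h^k(x_h^k,a_h^k)\le\sqrt{K}\bigl(\sum_{k=1}^K(\sigma_h^k)^2\bigr)^{1/2}=\cO(\sqrt{K\gamma_h^K})$. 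Combining with the $\beta$-bound gives $\sum_{k=1}^K e_h^k=\tilde\cO(\sqrt{\gamma_K}\cdot\sqrt{K\gamma_h^K})=\tilde\cO(\gamma_K\sqrt{K})$, and summing over the $H$ steps delivers the claimed $\cO(H\gamma_K\sqrt{K})$.

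The only subtle point, and the one I would flag as the main obstacle, is justifying the variance--information-gain inequality in the present setting: the information gain $\gamma_h^K$ is defined over the \emph{actually observed} design $\cB_h^K=\{y_h^1,\dots,y_h^{K-1}\}$ and the bound on $\sum_k(\sigma_h^k)^2$ must be applied with the \emph{posterior} variances computed from the same sequential conditioning. This is standard (it follows from $\log(1+x)\ge\tfrac{x}{1+x}$ together with the determinant-telescoping identity for sequential GP updates), but writing it cleanly requires being explicit that the $\sigma_h^k$ in the algorithm are exactly the sequential posterior standard deviations under the kernel $ker_h$ with ridge parameter $\lambda=1+2/K$. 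Everything else is routine.
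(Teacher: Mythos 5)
Your proposal is correct and follows essentially the same route as the paper's proof: bound $\beta_h^k(p/H)$ by its monotone limit $\beta_h^K(p/H)=\cO(\sqrt{\gamma_h^K})$, apply Cauchy--Schwarz to $\sum_k\sigma_h^k$, and control $\sum_k(\sigma_h^k)^2$ by the information gain via the standard logarithmic/telescoping inequality (the paper cites Lemma~3 of \cite{ChoGop_17} for exactly the sequential-posterior identity you flag as the subtle point). No gaps; the two arguments differ only in whether the variance-to-information-gain step is written out via $\ln(1+\alpha)\geq\alpha/2$ or quoted in its $2/\log(1+\lambda^{-1})$ form.
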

    \item {\bf Linear cost function:} 
    \begin{lemma}\label{le:err-linear}
       Considering the cost function in a linear structure, the cumulative estimation error can be bounded as:
    \begin{align}
        \sum_{k=1}^K\sum_{h=1}^H e_h^k(p,x,a) \leq \tilde{\cO}(H\sqrt{dK})
    \end{align}
    \end{lemma}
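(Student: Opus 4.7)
The plan is to substitute the explicit linear-case estimation error $e_h^k(p,x,a) = 2\tilde{\beta}_h^k(p/H)\,\Vert\phi(x,a)\Vert_{(\Lambda_h^k)^{-1}}$ into the double sum and analyze the two factors separately. After exchanging the order of summation, it suffices to bound $\sum_{k=1}^K \tilde{\beta}_h^k(p/H)\,\Vert\phi(x_h^k,a_h^k)\Vert_{(\Lambda_h^k)^{-1}}$ for each fixed $h\in[H]$ and then multiply by $H$.

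First, for the confidence radius, $\tilde{\beta}_h^k(p/H) = \sqrt{\lambda d} + \sqrt{d\log((1+k/\lambda)H/p)}$ is monotone non-decreasing in $k$ and of order $\tilde{\cO}(\sqrt{d})$ uniformly for $k\leq K$, where $\tilde{\cO}$ absorbs $\mathrm{polylog}(H,K,1/p)$ factors. I would therefore pull the worst-case value $\tilde{\beta}_h^K(p/H)$ out of the sum, reducing the task to bounding $\sum_{k=1}^K \Vert\phi(x_h^k,a_h^k)\Vert_{(\Lambda_h^k)^{-1}}$.

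Second, Cauchy--Schwarz yields $\sum_{k=1}^K \Vert\phi(x_h^k,a_h^k)\Vert_{(\Lambda_h^k)^{-1}} \leq \sqrt{K}\,\bigl(\sum_{k=1}^K \Vert\phi(x_h^k,a_h^k)\Vert^2_{(\Lambda_h^k)^{-1}}\bigr)^{1/2}$. The inner quadratic sum is exactly the quantity controlled by the standard elliptical potential lemma (Abbasi-Yadkori, P\'al, and Szepesv\'ari, 2011): because $\Vert\phi\Vert\leq 1$ and $\lambda=1$, one obtains $\sum_{k=1}^K \Vert\phi(x_h^k,a_h^k)\Vert^2_{(\Lambda_h^k)^{-1}} \leq 2d\log(1+K/\lambda) = \tilde{\cO}(d)$, and hence $\sum_{k=1}^K \Vert\phi\Vert_{(\Lambda_h^k)^{-1}} = \tilde{\cO}(\sqrt{dK})$. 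Combining this with the $\tilde{\cO}(\sqrt{d})$ confidence radius and summing over the $H$ stages delivers the announced $\tilde{\cO}(H\sqrt{dK})$ bound after absorbing logarithmic factors into $\tilde{\cO}$.

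The argument is largely bookkeeping with no substantive obstacle: the main technical points are the correct invocation of the elliptical potential lemma and the uniform-in-$h$ monotone control of $\tilde{\beta}_h^k$. This is in contrast to the Gaussian process analogue in Lemma~\ref{le:err-ga}, where the $d$ appearing in the elliptical potential bound must be replaced by the maximal kernel information gain $\gamma_K$ via a kernelized potential argument, so that only the product $H\gamma_K\sqrt{K}$ rather than $H\sqrt{dK}$ can be obtained.
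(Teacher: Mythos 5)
Your argument is essentially identical to the paper's proof: substitute $e_h^k(p,x,a)=2\tilde\beta_h^k(p/H)\Vert\phi(x,a)\Vert_{(\Lambda_h^k)^{-1}}$, pull out the monotone worst-case radius $\tilde\beta_h^K(p/H)$, apply Cauchy--Schwarz for each fixed $h$, and invoke the elliptical potential lemma of Abbasi-Yadkori, P\'al, and Szepesv\'ari before summing over the $H$ stages, which is exactly the paper's route. The only caveat is shared with the paper itself: multiplying the $\tilde{\cO}(\sqrt{d})$ confidence radius by the $\tilde{\cO}(\sqrt{dK})$ potential sum actually yields $\tilde{\cO}(d\sqrt{K})$ per stage, so your final absorption into $\tilde{\cO}(H\sqrt{dK})$ inherits the paper's own accounting of the $d$-dependence rather than introducing any new gap.
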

  
\end{itemize}

\subsection{Lower Bound}\label{sec:lower}
To demonstrate the sharpness of our results, We construct a hard-to-learn linear CMDP with the same state space $\cS,$ action space $\cA,$ episode length $H,$ reward function $\{r_h\}_{h=1}^H,$ cost function $\{g_h\}_{h=1}^H$ and the transition kernel $\{\bP_{h}\}_{h=1}^H$ as in \cite{ZhoGuSze_21,HuCheHua_22}. This is the first result in safe RL under instantaneous hard constraints. The information-theoretic lower bound for the episodic CMDP with hard instantaneous constraints setting studied in this paper is shown is the following theorem.
\begin{theorem}\label{the:lower}
    Let $d\geq 4,H\geq 3,$ and suppose that $K\geq \max\{(d-1)^2H/2,(d-1)/(32H(d-1))\}.$ Then there exists an episodic linear CMDP parameterized by $\mu_h,\theta$ and satisfies the norm assumption given in Assumption \ref{as:linear}, such that the expected regret and violation of constraints are lower bounded as follows by using any algorithm:
    \begin{align}
        \bE[\text{Regret}(K)] =&\Omega(Hd\sqrt{HK}), \\
        \bE[\text{Violation}(K)] = &\Omega(\sqrt{HK}).
    \end{align}
\end{theorem}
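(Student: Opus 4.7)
}

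The plan is to reduce the lower bound to two essentially independent constructions that are compatible on a single linear CMDP, and then invoke standard information-theoretic arguments. For the regret part, I would directly import the hard instance of \cite{ZhoGuSze_21,HuCheHua_22}: a family of linear MDPs $\{M_\theta\}_{\theta\in\{\pm 1\}^{d-1}}$ in which the transitions are parameterized by a $(d-1)$-dimensional sign vector $\theta$, the reward is a fixed linear function of $\phi$, and distinguishing any two neighboring instances requires observing $\Omega(H/\Delta^2)$ transitions at a specific layer for a per-episode suboptimality gap $\Delta$. Choosing $\Delta = \Theta(d\sqrt{H/K})$ and summing the per-coordinate Le Cam/Assouad bound over the $d-1$ coordinates yields $\mathbb{E}[\text{Regret}(K)] = \Omega(Hd\sqrt{HK})$. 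Crucially, this construction uses only the transition kernel $\bP_h$ and the reward $r_h$, so the cost channel is free to be designed separately.

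For the violation part I would overlay a stochastic cost signal that is identical across the family $\{M_\theta\}$ and does not interact with the transition structure used above. Concretely, I would place, at each step $h\in[H]$, two feature directions corresponding to a ``known safe'' action and an ``unknown'' action whose mean cost is $g_h = \epsilon_h \in \{-\epsilon,+\epsilon\}$ with $\epsilon = \Theta(\sqrt{1/K})$, observed through a bounded noise channel (e.g.\ Bernoulli). Because the unknown action is the only way to potentially improve reward within the instance while respecting the constraint when $\epsilon_h=-\epsilon$, any algorithm must pull it $\Omega(1/\epsilon^2) = \Omega(K)$ times in the worst case; by a standard two-point KL / Pinsker argument, on the instance where $\epsilon_h=+\epsilon$, the expected number of such pulls at step $h$ is $\Omega(\sqrt{K})$, each contributing $\epsilon$ to the violation. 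Summing over $h\in[H]$ and accounting for the $\epsilon$ cost per pull gives $\mathbb{E}[\text{Violation}(K)] = \Omega(H\epsilon\sqrt{K}) = \Omega(\sqrt{HK})$ after choosing $\epsilon$ optimally (or $\Omega(\sqrt{HK})$ after distributing the budget across layers, since the $H$ layers act independently with respect to the cost observations).

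To combine the two constructions on a single linear CMDP, I would embed the regret-hard feature directions and the cost-hard feature directions into orthogonal coordinates of $\phi(x,a)\in\bR^d$, rescaling so that the norm bounds $\Vert\phi\Vert\le 1$, $\Vert\mu_h(\cS)\Vert\le\sqrt{d}$ and $\Vert\theta_{r,h}\Vert\le\sqrt{d}$ required by Assumption~\ref{as:linear} continue to hold. The conditions $K\geq(d-1)^2H/2$ and $K\geq (d-1)/(32H(d-1))$ ensure that $\Delta$ and $\epsilon$ above lie in valid ranges so that the KL-divergence between any two instances in the family is $O(1)$, which is the regime where Assouad's lemma and Pinsker's inequality give the advertised rates. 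Feasibility (Assumption~\ref{as:fea}) is preserved by reserving a ``null'' action at every state whose cost is a deterministic $-1$.

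The main obstacle I expect is verifying that both lower bounds hold \emph{simultaneously} on the same instance rather than on two separate ones: an algorithm could, in principle, trade off regret against violation. I would address this by observing that the two constructions are informationally decoupled (the regret-hard coordinates carry no cost signal and vice versa), so a Bayesian argument on the product prior over $(\theta,\epsilon_{1:H})$ forces any algorithm to pay both costs; formally, conditioning on any history restricted to one channel leaves the posterior on the other channel unchanged, hence the standard single-channel lower bounds apply in parallel. The remaining calculations — KL tensorization across episodes, the coordinate-wise Assouad step, and the two-point Pinsker step for the cost channel — are routine and follow the template in \cite{ZhoGuSze_21}.
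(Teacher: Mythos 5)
Your regret argument follows the paper's route (importing the hard linear-MDP family of Zhou--Gu--Szepesv\'ari / Hu--Chen--Huang and applying the Assouad-type bandit lower bound layer by layer), and that part is fine. The genuine gap is in the violation half. You deliberately \emph{decouple} the cost channel from the regret-hard transition structure and give the learner a ``known safe'' action at every step, with an ``unknown'' action of mean cost $\pm\epsilon$. With that design the violation lower bound cannot hold for every algorithm: the algorithm that simply always plays the known safe action incurs \emph{zero} violation, and your own decoupling argument (``conditioning on one channel leaves the posterior on the other unchanged'') guarantees that nothing on the regret side forces it to touch the risky action. The only way to force pulls of the unknown-cost action is to make it reward-relevant, but then the channels are no longer informationally independent, the product-prior argument you invoke breaks down, and what you obtain is at best a regret-versus-violation trade-off statement (``either regret is large or violation is large''), not the simultaneous lower bounds claimed in Theorem~\ref{the:lower}. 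There is also an arithmetic inconsistency: with $\epsilon=\Theta(1/\sqrt K)$ and $\Omega(\sqrt K)$ forced pulls per step, $H\epsilon\sqrt K=\Theta(H)$, not $\Omega(\sqrt{HK})$.

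The paper avoids this obstacle precisely by \emph{coupling} the two objectives on a single unknown parameter: in its instance the transition parameter $u_h\in\{-\Delta,\Delta\}^{d-1}$ determines both the reward-maximizing action and the \emph{unique} safe action at each informative state (every other action there has cost $1$), and there is no a priori known safe action at those states. Information about $u_h$ can only be gathered by acting at those states, so the bandit lower bound on the number of mistaken actions simultaneously charges $\Omega(Hd\sqrt{HK})$ to the regret (each mistake loses value $\Theta(\Delta)$ per coordinate, amplified by the horizon) and $\Omega(\sqrt{HK})$ to the violation (each mistake costs $1$, with no $d$-dependence since any wrong action is equally unsafe). If you want to salvage your plan, you would need to remove the known safe action and tie the identity of the safe action to the same unknown parameter that governs the reward-hard structure, which is exactly the paper's construction.
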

We can observe that both our regret and violation have the optimal dependencies on the episode length $K$ when $p\leq 1/\sqrt{K}.$ The dependencies on $d$ and $H$ can be further improved to match the lower bound using the technique of the ``rare-switching'' idea in \cite{HuCheHua_22,HeZhaZHo_22}.
\subsection{Discussion on Extension to More General MDPs:} 
We would like to discuss that our adaptive penalty-based optimization with a double optimistic learning framework can be generalized to general function approximation beyond linear MDPs when the cost function belongs to RKHS. The more general LSVI-AE with function approximation is meant to solve a least-squares regression problem: 
\begin{align}
    \hat{Q}_h^k\leftarrow \min_{f\in\cF} \bigg\{ \sum_{\tau=1}^{k-1}[r_h(x_h^\tau,a_h^\tau) + V_{h+1}^k(x_{h+1}^\tau) \nonumber \\
    ~~~~~~~~~~~~~~~~~- f(x_h^\tau,a_h^\tau)   ]^2 + pen(f)\bigg\},
\end{align}
where $pen(f)$ is a regularization term, $\cF$ is a function class. Then to ensure an overestimation, we can update $Q$ function by adding a bonus term $b_h^k:\cS\times\cA\rightarrow \bR:$
\begin{align}
    Q_h^k(x,a) := \min \big\{ \hat Q_h^k+\beta\cdot b_h^k(x,a),H\big\},
\end{align}
and $V_h^k(x) = Q_h^k(x,a),$ where $$ a=\argmax_{a'\in\cA}\{Q_h^k(x,a') - Z_h^k(\hat g_h^k(x_h^k,a'))_+ \}.$$
The function $\cF$ can be chosen as a RKHS as in \cite{YanJinWan_20}, which covers the linear MDP discussed in this paper, or a more general function $\cF$ with low Bellman Eluder Dimension \cite{JinLiuMir_21}. The main contribution of this paper is proposing a framework for dealing with safe RL under instantaneous hard constraints. The framework can naturally be adopted in more advanced settings dealing with least-squares regression problems.  

\subsection{Proof of Theorem \ref{the:main}}
In this section, we briefly review the key intuitions behind the main results in Theorem \ref{the:main}. We first introduce two lemmas that are useful to prove the theorem. The first lemma shows that $Q_h^k$ is always an upper bound on $Q_h^*$ at any episode $k.$

\begin{lemma}\label{le:over-est}
    Given the event $\cE$ defined in Lemma \ref{le:concentration} and condition \ref{condi}, the following inequality holds simultaneously for all $(x,a)$, step $h$ and episode $k,$  
    \begin{align}
       Q_{h}^k(x,a) \geq Q_h^*(x,a).
    \end{align}
\end{lemma}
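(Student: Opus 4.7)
The plan is to proceed by backward induction on $h$, simultaneously maintaining the stronger invariant that both $Q_h^k(x,a) \geq Q_h^*(x,a)$ for all $(x,a)$ and $V_h^k(x) \geq V_h^*(x)$ for all $x$. The base case at $h = H+1$ is immediate since both functions are identically zero.

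For the inductive step at step $h$, I first control the regression parameter $w_h^k$ using the concentration event $\cE$ from Lemma \ref{le:concentration}. This is the familiar LSVI--UCB argument: because $w_h^k$ is the ridge regressor with targets $r_h(\xtao,\atao) + V_{h+1}^k(x_{h+1}^\tau)$, and because linearity of the MDP implies $r_h + \bP_h V_{h+1}^k$ is linear in $\phi$, on $\cE$ one obtains the pointwise bound
\[
\bigl|\langle \phi(x,a), w_h^k\rangle - r_h(x,a) - [\bP_h V_{h+1}^k](x,a)\bigr| \;\leq\; \beta\,\bigl(\phi(x,a)^\top(\Lambda_h^k)^{-1}\phi(x,a)\bigr)^{1/2}.
\]
Adding the UCB bonus in the definition of $Q_h^k$ (Line~7 of Algorithm~\ref{alg:hard}), invoking the inductive hypothesis $V_{h+1}^k \geq V_{h+1}^*$, and using $Q_h^* \leq H$ so that truncation at $H$ is harmless, I conclude $Q_h^k(x,a) \geq r_h(x,a) + [\bP_h V_{h+1}^*](x,a) = Q_h^*(x,a)$, which is exactly the statement of the lemma.

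The main obstacle, and the step that deviates from the standard LSVI--UCB optimism argument, is propagating this Q-inequality to the V-inequality that the induction needs. The difficulty is that $V_h^k(x) = Q_h^k(x, a_x)$ where $a_x$ maximizes the \emph{penalized surrogate} $Q_h^k(x,a) - Z_h^k\,\hat{g}_h^k(x,a)_+$ rather than $Q_h^k(x,a)$ itself, so one cannot simply take $\max_a$ on both sides. The resolution combines Assumption~\ref{as:fea} and Condition~\ref{condi}: the optimal policy picks a safe action, so $g_h(x,\pi^*(x)) \leq 0$, and by the lower half of \eqref{eq:condition} we have $\hat{g}_h^k(x,\pi^*(x)) \leq g_h(x,\pi^*(x)) \leq 0$, which forces $\hat{g}_h^k(x,\pi^*(x))_+ = 0$. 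The penalty therefore vanishes at $\pi^*(x)$, and the optimality of $a_x$ for the surrogate together with $Z_h^k \geq 0$ yields
\[
Q_h^k(x, a_x) \;\geq\; Q_h^k(x, a_x) - Z_h^k\,\hat{g}_h^k(x, a_x)_+ \;\geq\; Q_h^k(x, \pi^*(x)) \;\geq\; Q_h^*(x,\pi^*(x)) \;=\; V_h^*(x),
\]
closing the induction. The feasibility assumption is doing essential work here: without a guaranteed safe action at every state, the penalty term could fail to vanish anywhere, and the inductive chain would break. Aside from this new ingredient, the remaining work is routine, and the proof reduces to verifying the standard concentration event $\cE$ (deferred to Lemma~\ref{le:concentration}) and the under-estimation property of $\hat{g}_h^k$ supplied by Condition~\ref{condi}.
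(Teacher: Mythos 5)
Your proof is correct and follows essentially the same route as the paper's: backward induction where the concentration bound (the paper's Lemma \ref{le:error-bound} on event $\cE$) yields $Q_h^k \geq Q_h^*$ given $V_{h+1}^k \geq V_{h+1}^*$, and the value-level inequality is recovered because feasibility plus the under-estimation in Condition \ref{condi} forces $\hat{g}_h^k(x,\pi^*(x))_+ = 0$, so the penalized argmax still dominates $V_h^*$. The only cosmetic difference is that you carry both the $Q$- and $V$-inequalities as the explicit induction invariant, whereas the paper states only the $Q$-inequality and re-derives the $V$-step inside the induction.
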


Next, the lemma below is used to bound the difference between the value function maintained in Algorithm \ref{alg:hard} and the true value function under policy $\pi^k$ used in each episode $k.$ The error can be bounded with high probability.

\begin{lemma}\label{le:vk-vpik}
Under the event $\cE$ defined in Lemma \ref{le:concentration}, for any fixed $p\in(0,1),$ if we set $\lambda=1,\beta=c\cdot dH\sqrt{\iota}$ in Algorithm \ref{alg:hard} with $\iota=\log(2dHK/p),$ then with probability at least $1-p/2,$ we have :
\begin{align}
    \sum_{k=1}^K V_h^k(x_1^k)- V^{\pi^k}_1(x_1^k) = {\cO}(\sqrt{d^3H^4K\iota^2}). 
\end{align}
\end{lemma}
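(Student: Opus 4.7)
The plan is to adapt the standard LSVI-UCB regret decomposition of \citet{JinYanWan_20} to our setting, noting that although the policy $\pi^k$ is defined by the penalty-adjusted argmax in \eqref{eq:choose_action} rather than by $\argmax_a Q_h^k(x,a)$, the two value functions we are comparing, $V_h^k(x_h^k)$ and $V_h^{\pi^k}(x_h^k)$, are evaluated at the \emph{same} action $a_h^k=\pi_h^k(x_h^k)$. Consequently, the penalty term drops out entirely once we subtract, and the proof reduces to controlling the Bellman residual plus a martingale.

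Concretely, I would first define $\delta_h^k := V_h^k(x_h^k) - V_h^{\pi^k}(x_h^k) = Q_h^k(x_h^k,a_h^k) - Q_h^{\pi^k}(x_h^k,a_h^k)$. On the good event $\cE$ from Lemma \ref{le:concentration} (the standard self-normalized/covering concentration for the least-squares estimator $w_h^k$ over the class of clipped optimistic value functions), one has the pointwise bound
\begin{equation*}
\bigl|\langle w_h^k,\phi(x,a)\rangle - r_h(x,a) - [\bP_h V_{h+1}^k](x,a)\bigr| \leq \beta\,\|\phi(x,a)\|_{(\Lambda_h^k)^{-1}},
\end{equation*}
with $\beta = c\,dH\sqrt{\iota}$. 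Combined with the UCB bonus in Line~7 of Algorithm \ref{alg:hard}, this gives $Q_h^k(x,a) \le r_h(x,a) + [\bP_h V_{h+1}^k](x,a) + 2\beta\|\phi(x,a)\|_{(\Lambda_h^k)^{-1}}$, while the Bellman equation for $\pi^k$ supplies $Q_h^{\pi^k}(x,a) = r_h(x,a) + [\bP_h V_{h+1}^{\pi^k}](x,a)$. Subtracting and rewriting $[\bP_h(V_{h+1}^k - V_{h+1}^{\pi^k})](x_h^k,a_h^k) = \delta_{h+1}^k + \xi_{h+1}^k$, where $\xi_{h+1}^k$ is a martingale difference with respect to the natural filtration, yields the recursion
\begin{equation*}
\delta_h^k \;\leq\; \delta_{h+1}^k \;+\; 2\beta\,\|\phi(x_h^k,a_h^k)\|_{(\Lambda_h^k)^{-1}} \;+\; \xi_{h+1}^k.
\end{equation*}

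Unrolling from $h=1$ to $H$ (with $\delta_{H+1}^k=0$) and summing over $k$ gives
\begin{equation*}
\sum_{k=1}^K \delta_1^k \;\leq\; 2\beta \sum_{k=1}^K\sum_{h=1}^H \|\phi(x_h^k,a_h^k)\|_{(\Lambda_h^k)^{-1}} \;+\; \sum_{k=1}^K\sum_{h=1}^H \xi_{h+1}^k.
\end{equation*}
For the first sum, Cauchy–Schwarz together with the elliptical potential lemma gives $\sum_{k} \|\phi(x_h^k,a_h^k)\|_{(\Lambda_h^k)^{-1}} \leq \sqrt{K\cdot 2d\log(1+K/\lambda)}$ at each step $h$, so the whole term is $O(\beta H\sqrt{dK\iota}) = O(\sqrt{d^3H^4K\iota^2})$. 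For the second, since $|\xi_{h+1}^k|\leq 2H$, Azuma–Hoeffding bounds the martingale sum by $O(H\sqrt{HK\iota})$ with probability $\geq 1-p/2$, which is lower order. Combining these establishes the claim.

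The main obstacle is the uniform concentration underpinning the Bellman-residual bound: $V_{h+1}^k$ is a data-dependent function, so the standard self-normalized bound must be strengthened via an $\varepsilon$-net over the class of candidate value functions $\{\min\{\langle w,\phi\rangle + \beta'(\phi^\top \Sigma^{-1}\phi)^{1/2},H\}\}$ (our class is a subset of this once we take the penalized argmax, so no larger covering number is incurred). This is exactly the argument packaged inside Lemma \ref{le:concentration}, and the penalty $Z_h^k\hat g_h^k(\cdot)_+$ only affects which action $a_h^k$ is chosen, not the covering of the value-function class; once that uniform bound is in place, the rest of the proof is the standard LSVI-UCB telescoping computation.
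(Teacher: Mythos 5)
Your proposal follows essentially the same route as the paper: the identical recursion $\delta_h^k \le \delta_{h+1}^k + \zeta_{h+1}^k + O\bigl(\beta\,\Vert\phi_h^k\Vert_{(\Lambda_h^k)^{-1}}\bigr)$ derived from the Bellman-residual bound on the event $\cE$, followed by Azuma--Hoeffding for the martingale part and Cauchy--Schwarz plus the elliptical potential lemma for the bonus part, exactly as in the paper's argument via its error-bound lemma. The only imprecision is your side remark that the value class is "a subset" of the standard LSVI-UCB class: since $V_h^k(x)=Q_h^k(x,a_x)$ with $a_x$ the penalized argmax, the class is genuinely enlarged (parameterized additionally by $Z$ and the cost estimate), and the paper covers this larger class in its covering-number lemma; but because that uniform concentration is packaged inside Lemma \ref{le:concentration}, on which your statement conditions, this does not affect the validity of your proof.
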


In the next lemma, we show an upper bound on the entire ``regret plus violation'' term over $K$ episodes using the results from Lemma~\ref{le:over-est} and Lemma~\ref{le:vk-vpik}.

\begin{lemma}\label{le:comb-bound}
Under the event $\cE$ defined in Lemma $\ref{le:concentration}$ and condition \ref{condi}, for any fixed $p\in(0,1),$ we set the parameters in our algorithm as indicated in Lemma \ref{le:vk-vpik}, then with probability at least $1-p/2,$ we have:
    \begin{align}
  & \sum_{k=1}^K  V^*_h(x_h^k) - V_h^{\pi^k}(x_h^k) + Z_h^k(\hat{g}_h^k(x_h^k,a_h^k)_+ ) \nonumber \\
   =& {\cO}(\sqrt{d^3H^4K\iota^2}) 
\end{align}
\end{lemma}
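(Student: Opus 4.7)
The plan is to combine the greedy action selection rule with the two optimism ingredients already in hand: the $Q$-function optimism from Lemma \ref{le:over-est} and the cost optimism from Condition \ref{condi}. First I would fix an episode $k$ and a step $h$, and invoke the defining property of $a_h^k$ as the maximizer of the penalized surrogate $a \mapsto Q_h^k(x_h^k,a) - Z_h^k(\hat g_h^k(x_h^k,a))_+$, comparing the realized action against the optimal safe action $\pi^*(x_h^k)$.

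The key observation is that $\pi^*$ satisfies the instantaneous hard constraint, so $g_h(x_h^k,\pi^*(x_h^k)) \le 0$ by Assumption \ref{as:fea}. By Condition \ref{condi}, $\hat g_h^k \le g_h$ pointwise, hence $\hat g_h^k(x_h^k,\pi^*(x_h^k)) \le 0$ as well, and the rectifier $(\cdot)_+$ wipes out the penalty along the reference action. The greedy inequality therefore collapses to
\begin{align*}
V_h^k(x_h^k) = Q_h^k(x_h^k,a_h^k) &\geq Q_h^k(x_h^k,\pi^*(x_h^k)) + Z_h^k(\hat g_h^k(x_h^k,a_h^k))_+ \\
&\geq V_h^*(x_h^k) + Z_h^k(\hat g_h^k(x_h^k,a_h^k))_+,
\end{align*}
where the last step uses Lemma \ref{le:over-est} applied at $(x_h^k,\pi^*(x_h^k))$. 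Rearranging and adding $-V_h^{\pi^k}(x_h^k)$ to both sides yields the pointwise bound
\begin{equation*}
V_h^*(x_h^k) - V_h^{\pi^k}(x_h^k) + Z_h^k(\hat g_h^k(x_h^k,a_h^k))_+ \leq V_h^k(x_h^k) - V_h^{\pi^k}(x_h^k).
\end{equation*}

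The proof then finishes by summing the above inequality over $k=1,\dots,K$ and invoking Lemma \ref{le:vk-vpik} (with $h=1$, which is the regime relevant for bounding total regret from the root state) to control the right-hand side by $\mathcal{O}(\sqrt{d^3H^4K\iota^2})$ with probability at least $1-p/2$. No new concentration machinery is required: the heavy lifting, namely propagation of the least-squares errors through the Bellman backups and union bounds over the covering net, is already encapsulated in Lemma \ref{le:vk-vpik}. The main conceptual obstacle is simply making sure the penalty terms align correctly in the comparison step, i.e.\ that feasibility of $\pi^*$ combined with the one-sided optimism $\hat g_h^k \le g_h$ suffices to erase the rectified penalty on the reference side while retaining $+Z_h^k(\hat g_h^k(x_h^k,a_h^k))_+$ on the realized side. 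This asymmetry, arising jointly from Assumption \ref{as:fea} and Condition \ref{condi}, is exactly what converts the greedy comparison into a simultaneous regret-plus-violation bound and is the conceptual linchpin of the argument.
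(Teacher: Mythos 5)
Your proposal is correct and follows essentially the same route as the paper: compare the greedy action $a_h^k$ against the optimal safe action via the penalized surrogate, use feasibility of $\pi^*$ together with Condition \ref{condi} to zero out the rectified penalty on the reference action, use Lemma \ref{le:over-est} to replace $Q_h^k(x_h^k,\pi^*(x_h^k))$ by $V_h^*(x_h^k)$, and bound the remaining $\sum_k V_h^k(x_h^k)-V_h^{\pi^k}(x_h^k)$ term with Lemma \ref{le:vk-vpik}. The only cosmetic difference is that the paper phrases the rearrangement in terms of $Q$-functions (subtracting $Q_h^{\pi^k}(x_h^k,a_h^k)$) rather than value functions, which is equivalent.
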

\begin{proof}
For any $h\in[H], k\in[K],$ according to the action selection (Eq.\eqref{eq:choose_action}) in our algorithm we have 
\begin{align}
   & Q_h^k(x_h^k,a_h^k) -Z_h^k(\hat{g}_h^k(x_h^k,a_h^k)_+ ) \nonumber\\
     \geq & Q_{h}^k(x_h^k,a_h^*)- Z_h^k(\hat{g}_h^k(x_h^k,a_h^*)_+ )  \nonumber \\
     &+ Q_h^*(x_h^k,a_h^*)- Q_h^*(x_h^k,a_h^*),
\end{align}
where $a^*_h$ is the optimal action selected by the optimal policy $\pi^*.$ Therefore rearranging the equation and subtracting $Q_h^{\pi^k}(x_h^k,a_h^k)$ at both sides we have:
\begin{align}
   &Q_h^*(x_h^k,a_h^*)-Q_h^{\pi^k}(x_h^k,a_h^k) + Z_h^k (\hat{g}_h^k(x_h^k,a_h^k)_+ ) \nonumber\\
   \leq & Q_h^*(x_h^k,a_h^*) - Q_h^k(x_h^k,a_h^*)  \label{eq:self-term1}\\
   &+ Z_h^k (\hat{g}_h^k(x_h^k,a_h^*)_+ )  \label{eq:self-term2} \\
   &+Q_h^k(x_h^k,a_h^k)-Q^{\pi^k}_h(x_h^k,a_h^k). \label{eq:self-term3}
\end{align}   
 Eq.~\eqref{eq:self-term1} is nonpositive due to the overestimation Lemma \ref{le:over-est}. Eq.~\eqref{eq:self-term2} is also nonpositive because the optimistic estimation of the cost function ensures that $Z_h^k (\hat{g}_h^k(x_h^k,a_h^*)_+ ) \leq Z_h^k ({g}_h (x_h^k,a_h^*)_+ ) \leq 0.$ Bounding the last term \eqref{eq:self-term3} with Lemma \ref{le:vk-vpik} we prove the lemma.
\end{proof}
Using the results from Lemma \ref{le:comb-bound}, we are ready to prove  the main results:

{\noindent \bf Regret:} According to the results from Lemma \ref{le:comb-bound}, we have:
    \begin{align}
         \text{Regret}(K) =  &\sum_{k=1}^K V_1^{\pi^*}(x_1^k) - V_1^{\pi^k}(x_1^k)  \nonumber \\
         = & {\cO}(\sqrt{d^3H^4K\iota^2}) -  Z_1^k(\hat{g}_1^k(x_1^k,a_1^*)_+  \nonumber \\
         =  &  {\cO}(\sqrt{d^3H^4K\iota^2}) 
    \end{align}
{\noindent \bf Violation:} Using the intermediate results in Lemma \ref{le:comb-bound} (Eq.~\eqref{eq:self-term1}-\eqref{eq:self-term3}) we have that: 
\begin{align}
	\hat{g}_h^k(x_h^k,a_h^k)_+  \leq & \frac{1}{Z_h^k}  \bigg([ Q_h^k(x_h^k,a_h^k)-Q^{\pi^k}_h(x_h^k,a_h^k)] )\nonumber \\
	& - [Q_h^*(x_h^k,a_h^*) -Q_h^{\pi^k}(x_h^k,a_h^k) ]\bigg) \nonumber\\
	\leq &\frac{1}{{k}} \bigg\vert  [ Q_h^k(x_h^k,a_h^k)-Q^{\pi^k}_h(x_h^k,a_h^k) ]\nonumber \\
	&-[Q_h^*(x_h^k,a_h^*)-Q_h^{\pi^k}(x_h^k,a_h^k) ]\bigg\vert. \label{eq:vio-pro}
\end{align}
The inequality holds because our choice of $Z_h^k$ in our algorithm such that $Z_h^k\geq\eta_k={k}.$ Therefore we have:
\begin{align}
	& \text{Violation}(K) =  \sum_{k=1}^K \sum_{h=1}^H  g_h (x_h^k,a_h^k)_+ \nonumber \\
	= & \sum_{k=1}^K \sum_{h=1}^H  \left(  g_h (x_h^k,a_h^k)  -  \hat{g}_h (x_h^k,a_h^k) + \hat{g}_h (x_h^k,a_h^k)   \right)_+ \nonumber \\
	\leq  & \sum_{k=1}^K \sum_{h=1}^H  \left(  g_h (x_h^k,a_h^k)  -  \hat{g}_h (x_h^k,a_h^k)  \right)_+  \sum_{k=1}^K \sum_{h=1}^H  \hat{g}_h (x_h^k,a_h^k)_+  \nonumber  \\
	\leq &  \sum_{k=1}^K\sum_{h=1}^H e_h^k(p,x,a)   
	+  \sum_{k=1}^K\sum_{h=1}^H \frac{1}{{k}} \bigg\vert  [Q_h^k(x_h^k,a_h^k)-Q^{\pi^k}_h(x_h^k,a_h^k)]  \nonumber \\
	&~~~~~~~~~~~~~~~~~~~~~~-[(Q_h^*(x_h^k,a_h^*)-Q_h^{\pi^k}(x_h^k,a_h^k) ] \bigg\vert \nonumber \\
	\leq &   \sum_{k=1}^K\sum_{h=1}^H e_h^k(p,x,a) +  \sum_{k=1}^K \frac{2H^2}{{k}} \nonumber\\
	\leq &  \sum_{k=1}^K\sum_{h=1}^H e_h^k(p,x,a) + 2H^2\log(K),
\end{align}
where the first inequality holds because of the fact $(a+b)_+\leq a_+ + b_+,$ the second inequality is due to Eq.\eqref{eq:vio-pro}, the third inequality is because of the assumption that reward is bounded by $1,$ and the last inequality is true by using the fact that $\sum_{k=1}^K\frac{1}{{k}}\leq \int_1^K \frac{1}{{k}} dk \leq \log(K).$

\section{Simulation}\label{sec:sim}
In this section, we evaluate the performance of our algorithm in the Frozen Lake environment \cite{AmaThrYan_21}, as illustrated in Figure~\ref{fig:map}. The agent's objective is to navigate a $10\times 10$ grid map to reach a goal while avoiding hazards. At each time step, four actions are available, with a $0.9$ probability of moving in the intended direction, and a $0.05$ probability for each orthogonal direction. For this simulation, we set $H=15$, $K=1000$, and $d=\vert \cS\vert \times\vert \cA\vert$. The feature vector is defined as $\phi(x,a)=e_{x,a}$, where $e_{x,a}$ is a $d$-dimensional vector with the element corresponding to the state-action pair $(x,a)$ set to $1$ and zero for other values. The agent receives a reward of $6$ upon reaching the goal, and $0.01$ otherwise. Taking dangerous actions (hitting the hazards) incurs a cost of $1$, while safe actions result in a cost of $-1$. If the agent reaches the goal, it remains there until the end of the episode.

To highlight the benefits of our algorithm and its aggressive exploration strategy in addressing safe RL with instantaneous hard constraints, we compare our approach against two baselines: $(1)$ classical Least-Squares Value Iteration (LSVI) \cite{JinYanWan_20} without accounting for any constraints during learning; $(2)$ LSVI-Primal, representing the virtual queue (dual variable) update based on Eq. \eqref{eq:zkupdate-primal} in the traditional primal-dual/drift-plus-penalty approach for dealing with long-term or budget constraints in safe RL.

\begin{figure}[!ht]
	\centering
	\includegraphics[height=3.5cm, width=6cm]{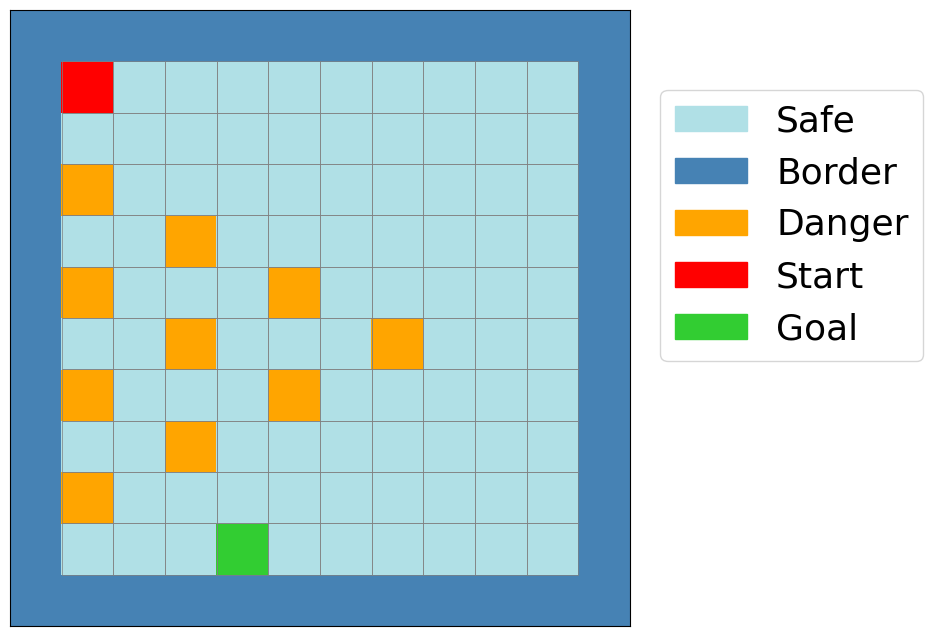}
	\caption{Frozen Lake Environment}
	\label{fig:map}
\end{figure}

We present the results of our evaluation in Figure \ref{fig:result}, depicting the moving average reward and the cost $(\sum_{h=1}^H g_h(x_h,a_h)+)$ return. Our LSVI-AE algorithm obtains an optimal reward comparable to that achieved by the LSVI algorithm designed for unconstrained MDPs. However, our approach significantly outperforms in terms of cost. Intriguingly, the LSVI-Primal approach designed for episodic constraint scenarios fails to perform effectively in this environment, exhibiting limited learning progress and only surpassing the unconstrained case in terms of cost. However, this cost improvement still fails to guarantee the desired performance of safe RL with instantaneous hard constraints, where the objective is to ensure $\sum_{h=1}^H g_h(x_h,a_h)_+ \leq 0$. These observations validate the key principles underlying our approach. More discussions can be found in Section \ref{ap:sim} in the appendix.
\begin{figure}[!ht]
	\centering
	\includegraphics[width=1.0\linewidth]{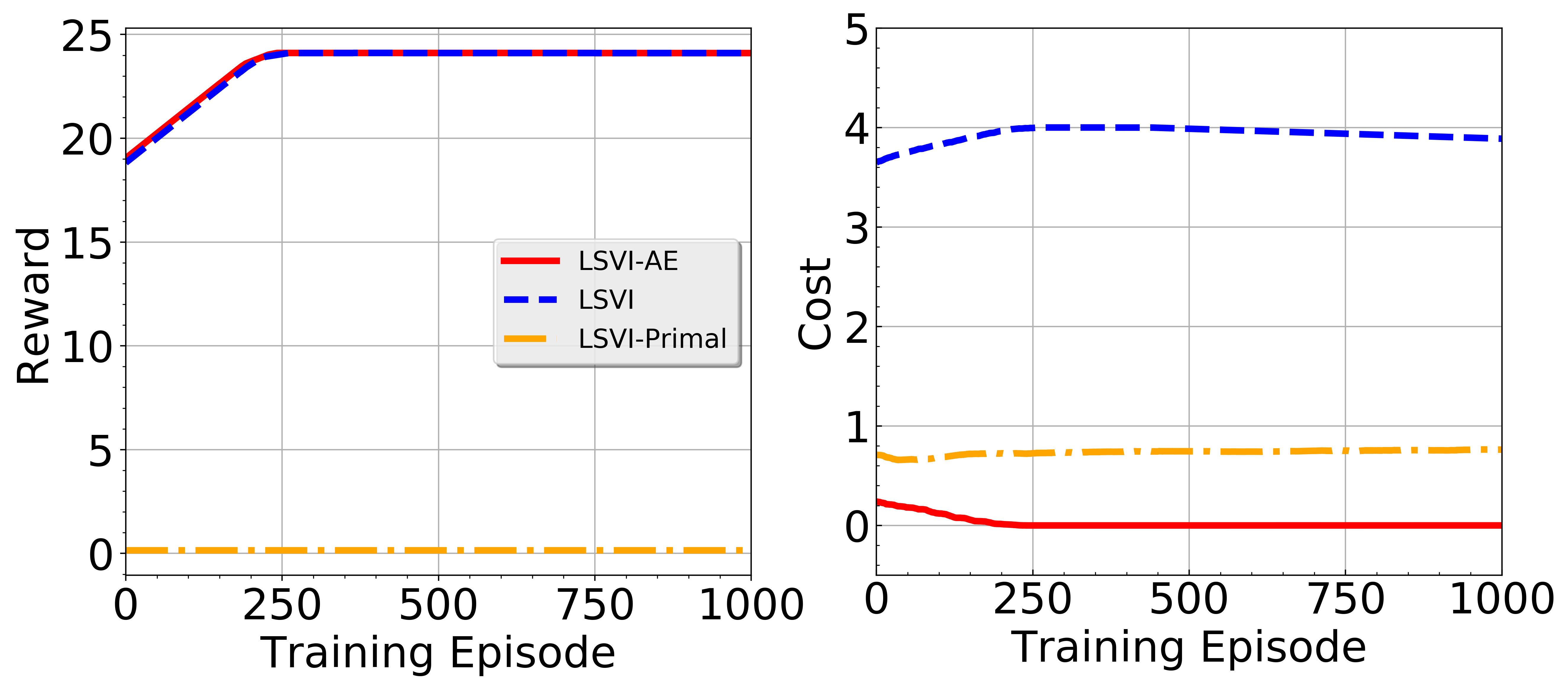}
	\caption{Reward and Cost Performance During Training}
	\label{fig:result}
\end{figure}
\section{Conclusion}
In this paper, we introduce LSVI-AE, an innovative algorithm designed for safe RL with instantaneous constraints, addressing scenarios where no prior knowledge of safe actions or safe graphs is available. For the first time, we propose an adaptive penalty-based optimization with a double optimistic learning framework for taking care of this setting under a more general cost function. Our approach establishes both sub-linear regret bound and hard constraint violation bound, which both are optimal w.r.t $K$ and match the information-theoretic lower bound. A notable feature of our approach lies in its emphasis on promoting aggressive policy exploration, contributing to the paradigm of algorithm design in this context. 


\newpage

\appendix
\onecolumn
\section*{Appendix}

{\bf Notations:} We denote $Q_h^k$ as the $Q-$value function estimate and $\Lambda_h^k,w_h^k$ as the parameters in episode $k.$ The value function $V_h^k$ is denoted as $V_h^k(x)= Q_h^k(x,a_h^k),$ $a_h^k$ is the action chosen at step $h$ in the $k-$th episode. To simplify the presentation, we denote $\phi_h^k :=\phi(x_h^k,a_h^k).$ With loss of generality, we assume that $\Vert \phi_h(x,a)\Vert \leq 1$ for all $(x,a)\in\cS\times \cA, \Vert \mu_h(\cS)\Vert \leq \sqrt{d},$ and $\Vert \theta_{r,h}\Vert \leq \sqrt{d}$ for all $h\in[H].$ 
\section{Auxiliary Lemmas}
We first present the lemma to show the optimistic estimation of $g_h.$
\begin{lemma} \label{le:under_est} For all $(x,a)\in \cS\times\cA,k\in[K], h\in[H]$ considering the following two cases:

\noindent {\bf Linear Case:} 
For the linear case, with probability at least $1-p,$ the estimated cost $\hat{g}_h^k(\cdot,\cdot)$ satisfies that $$0 \leq g_h(x,a) - \hat{g}_h^k(x,a)\leq 2\tilde\beta_h^k(p/H)\Vert\phi(x,a)\Vert_{(\Lambda_h^k)^{-1}}.$$ 
{\bf Gaussian Processes:} For the Gaussian processes, we assume that $ker_h((x,a),(x',a'))\leq 1, \forall (x,a)\in \cS\times\cA),$ then the following inequality holds with probability at least $1-p,$ the estimated cost $\hat{g}_h^k(\cdot,\cdot)$ satisfies that $$0\leq g_h(x,a) - \hat{g}_h^k(x,a)\leq 2\beta_h^k(p/H)\sigma_h^k(x,a)  $$
\end{lemma}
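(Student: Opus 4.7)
The plan is to invoke standard self-normalized concentration inequalities in each of the two cases and then observe that the two-sided bounds follow immediately from the lower-confidence-bound construction of $\hat g_h^k$. In both cases, the property $\hat g_h^k(x,a) \leq g_h(x,a)$ comes from the one-sided half of the concentration plus subtracting one copy of the bonus, while the width $g_h(x,a)-\hat g_h^k(x,a) \leq 2(\text{bonus})$ comes from combining the two-sided concentration with the same bonus being both subtracted in the estimator and added in the error bound. A union bound over $h\in[H]$ converts a pointwise-in-$h$ confidence $p$ into the required $p/H$.

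For the linear case, I would start from the decomposition
\begin{align*}
\hat\theta_h^k-\theta_{g,h} = -\lambda(\Lambda_h^k)^{-1}\theta_{g,h} + (\Lambda_h^k)^{-1}\sum_{\tau=1}^{k-1}\phi(x_h^\tau,a_h^\tau)\,\epsilon_h^\tau,
\end{align*}
where $\epsilon_h^\tau := G_h(x_h^\tau,a_h^\tau)-g_h(x_h^\tau,a_h^\tau)$ is a bounded (hence sub-Gaussian) martingale-difference sequence with respect to the natural filtration generated by the trajectories. Cauchy--Schwarz in the $\Lambda_h^k$-norm gives $|\langle\phi(x,a),\hat\theta_h^k-\theta_{g,h}\rangle|\leq \|\phi(x,a)\|_{(\Lambda_h^k)^{-1}}\|\hat\theta_h^k-\theta_{g,h}\|_{\Lambda_h^k}$. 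The self-normalized bound of \citep{AbbPalSze_11} combined with the prior bound $\|\theta_{g,h}\|\leq\sqrt{d}$ then yields $\|\hat\theta_h^k-\theta_{g,h}\|_{\Lambda_h^k}\leq \sqrt{\lambda d}+\sqrt{d\log((1+k/\lambda)/p)}=\tilde\beta_h^k(p)$ with probability at least $1-p$, and the two claims follow at once from the definition $\hat g_h^k(x,a):=\langle\phi(x,a),\hat\theta_h^k\rangle-\tilde\beta_h^k(p/H)\|\phi(x,a)\|_{(\Lambda_h^k)^{-1}}$.

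For the Gaussian-process case, I would invoke the RKHS self-normalized inequality of \citep{ChoGop_17}. Under the kernel boundedness $ker_h\leq 1$, the prior RKHS-norm bound $\|g_h\|_{ker}\leq 1$, and conditionally sub-Gaussian observation noise, their Theorem 2 gives $|g_h^k(y)-g_h(y)|\leq \beta_h^k(p)\sigma_h^k(y)$ uniformly in $y$ with probability at least $1-p$, where $\beta_h^k(p) = 1+\sqrt{2(\gamma_h^k+1+\ln(2/p))}$. Applying this at confidence $p/H$, together with a union bound over $h$, the LCB construction $\hat g_h^k = g_h^k-\beta_h^k(p/H)\sigma_h^k$ produces both inequalities exactly as in the linear case.

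The main obstacle is not the concentration itself, since both cited bounds apply essentially off-the-shelf, but the verification of the martingale/adaptedness structure: one must check that $\epsilon_h^\tau$ is conditionally mean zero given $(x_h^\tau,a_h^\tau)$ and all earlier randomness, because the algorithm's choice of $(x_h^\tau,a_h^\tau)$ depends on prior trajectories, on the $Q$-iterates, and on the adaptive penalty factors $Z_h^\tau$ defined in \eqref{eq:zkupdate}. Once this filtration bookkeeping is in place, boundedness of the cost ($g_h\in[-1,1]$) makes $\epsilon_h^\tau$ sub-Gaussian automatically, and the norm assumptions in Assumption \ref{as:linear} control the regularization term $\lambda(\Lambda_h^k)^{-1}\theta_{g,h}$, so the two cited self-normalized inequalities can be plugged in directly.
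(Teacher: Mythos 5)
Your proposal is correct and follows essentially the same route as the paper: the linear case rests on the Abbasi-Yadkori et al. self-normalized confidence ellipsoid plus Cauchy--Schwarz in the $\Lambda_h^k$-weighted norms, the GP case on the Chowdhury--Gopalan confidence bound with $\beta_h^k$, and in both cases the two-sided inequality follows from the LCB construction (bonus subtracted once in $\hat g_h^k$, width at most twice the bonus) together with a union bound over $h$ at level $p/H$. The only cosmetic difference is that you spell out the noise/regularization decomposition and filtration bookkeeping that the paper leaves implicit by citing the concentration theorems directly.
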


\begin{proof}
First for the linear case, the results come from using the following lemma,
\begin{lemma}[Theorem $2$ in \cite{AbbPalSze_11}]
    For any $h\in[H],p\geq 0,$ with probability at least $1-p,$ the following event occurs 
    $$\Vert (\Lambda_h^k)^{-1}\sum_{\tau=1}^{k-1}\phi(x_h^\tau,g_h^\tau)g_h(x_h^\tau,a_h^\tau) - \theta_{g,h}\Vert_{\Lambda_h^k} \leq \tilde\beta_h^k(p),\forall k \in [K],$$ 
\end{lemma}
where $\Vert x\Vert_{\Sigma}=\sqrt{x^\top\Sigma x}, \Lambda_h^k = \sum_{\tau=1}^{k-1} \phi(x_h^\tau,a_h^\tau)\phi(x_h^\tau,a_h^\tau)^\top +\lambda I,$ and $\tilde\beta_h^k(p)=\sqrt{\lambda d} + \sqrt{d\log((1+k/\lambda)/p) }.$

\noindent Recall that 
\begin{align*}
  \hat{\theta}_k^h =& (\Lambda_h^k)^{-1}\sum_{\tau=1}^{k-1}\phi(x_h^\tau,g_h^\tau)g_h(x_h^\tau,a_h^\tau)\\
    \hat g_h^k(x,a)= &\langle \phi(x,a),  \hat\theta_h^k\rangle - \tilde\beta_h^k(p/H)\Vert\phi(x,a)\Vert_{(\Lambda_h^k)^{-1}}.
\end{align*}
Since we have that
\begin{align*}
 & \vert  \langle \phi(x,a),\theta_{g,h}\rangle - \langle\phi(x,a),\hat{\theta}_h^k\rangle \vert \\
  =& \vert \phi(x,a),  \theta_{g,h} - \hat{\theta}_h^k\rangle \vert \\
  \leq & \Vert \theta_{g,h} - \hat{\theta}_h^k\Vert_{\Lambda_h^k} \Vert\phi(x,a)\Vert_{(\Lambda_h^k)^{-1}} \\
  \leq & \tilde\beta_h^k(p/H)\Vert\phi(x,a)\Vert_{(\Lambda_h^k)^{-1}}
\end{align*}
Using the union bound, it is straightforward to obtain that for all $(x,a)\in \cS\times\cA,k\in[K],h\in[H],$ with probability at least $1-p,$ we have 
$$0\leq g_h(x,a) - \hat{g}_h^k(x,a) \leq 2\tilde\beta_h^k(p/H)\Vert\phi(x,a)\Vert_{(\Lambda_h^k)^{-1}}.$$

\noindent For the Gaussian processes, recall that the cost function is updated as:
$$\hat g_h^k(x,a) = g_h^k(x,a) - \beta_h^k(p/H)\sigma_h^k(x,a), \forall (x,a)\in \cS\times\cA,$$ then we have $$\hat g_h^k(x,a) - g_h(x,a) =  g_h^k(x,a) - g_h(x,a) - \beta_h^k(p/H)\sigma_h^k(x,a).$$ For convenience, we let $y:=(x,a),\cY:=\cS\times\cA.$ We know that $\{g\}_{h=1}^H$ lie in RKHS, then we define $\psi_h(y)=ker_h(y,\cdot),$ it implies that $g_h(y)=\langle g_h,ker_h(y,\cdot)\rangle_{ker_h} = \langle g_h,\psi_h(y)\rangle_{ker_h} := g_h^\top \psi_h(y). $ Further define the RKHS norm $\Vert g_h\Vert_{ker_h}$ as $\sqrt{g_h^\top g_h},$ $\Psi_h^k=[\psi_h(x_1)^\top,\dots,\psi_h(x_{k-1})^\top ]^\top,$ then the kernel matrix is defined as $KER_h^k=\Psi_h^k (\Psi_h^k)^\top, ker_h^k=\Psi_h^k\psi_h(y)
$ for all $y\in\cY,$ and $g_{h}^{1:k} = \Psi_h^k g_h.$ Also we have $g_h^k(y)=ker_h^k(y)^\top (KER_h^k+\lambda I)^{-1} g_h^{1:k} $ Then according to the model update we have
\begin{align}
    g_h^k(y) - g_h(y) = & ker_h^k(y)^\top (KER_h^k+\lambda I )^{-1}g_h^{1:k} - g_h(y).
\end{align}
Using Theorem $2$ in \cite{ChoGop_17}, for any $h$ we can derive the above difference error term as follows 
\begin{align}
   & \vert ker_h^k(y)^\top (KER_h^k+\lambda I )^{-1}g_h^{1:k}  - g_h(y) \vert \nonumber\\
    = & \vert \psi_h(y)^\top (\Psi_h^k)^\top (\Psi_h^k(\Psi_h^k)^\top +\lambda I)^{-1}\Psi_h^k g_h - \psi_h(y)^\top g_h\vert  \nonumber \\
    = & \vert \psi_h(y)^\top( (\Psi_h^k)^\top \Psi_h^k +\lambda I )^{-1}(\Psi_h^k)^\top \Psi_h^k g_h -  \psi_h(y)^\top g_h\vert \nonumber\\
    = & \vert \lambda \psi_h(y)^\top ( (\Psi_h^k)^\top \Psi_h^k +\lambda I )^{-1}g_h\vert \nonumber\\
    \leq & \Vert \lambda \psi_h(y)^\top ( (\Psi_h^k)^\top \Psi_h^k +\lambda I )^{-1}\Vert_{ker_h}\Vert g_h\Vert_{ker_h} \nonumber\\
    = & \Vert g_h\Vert_{ker_h} \sqrt{\lambda \psi_h(y)^\top ((\Psi_h^k)^\top \Psi_h^k +\lambda I )^{-1}(\lambda I )((\Psi_h^k)^\top \Psi_h^k +\lambda I )^{-1} \psi_h(y) } \nonumber\\
    \leq & \sqrt{\lambda \psi_h(y)^\top ((\Psi_h^k)^\top \Psi_h^k +\lambda I )^{-1}((\Psi_h^k)^\top \Psi_h^k +\lambda I )((\Psi_h^k)^\top \Psi_h^k +\lambda I )^{-1} \psi_h(y) }\nonumber\\
    = & \sigma_h^k(y),
\end{align}
where the second equality holds because $((\Psi_h^k)^\top ((\Psi_h^k)^\top \Psi_h^k +\lambda I )^{-1} =  ((\Psi_h^k)^\top \Psi_h^k +\lambda I )^{-1}(\Psi_h^k)^\top  $ and the third equality comes from $\phi_h(y) =  ((\Psi_h^k)^\top ((\Psi_h^k)^\top \Psi_h^k +\lambda I )^{-1} ker_h^k(y) + \lambda ((\Psi_h^k)^\top \Psi_h^k +\lambda I )^{-1} \psi_h(y),$ by using the definition $((\Psi_h^k)^\top \Psi_h^k +\lambda I )\psi_h(y) = (\Psi_h^k)^\top ker_h^k(y)+\lambda\psi_h(y) .$ We can further have $$\lambda \psi_h(y)^\top  ((\Psi_h^k)^\top \Psi_h^k +\lambda I )^{-1} \psi_h(y) = ker_h(y,y)-ker_h^k(y)^\top (KER_h^k+\lambda I)^{-1} ker_h^k(y) = (\sigma_h^k(y))^2,$$ which implies the last equality.  

Therefor according to Theorem $1$ in \cite{ChoGop_17}, we have for all $h\in[H],y\in\cY,k\in[K],$ with probability at least $1-p,$ we have
$$ -\beta_h^k(p/H)\sigma_h^k(y) + \beta_h^k(p/H)\sigma_h^k(y)\leq \hat g_h^k(y)-g_h(y)\leq \beta_h^k(p/H)\sigma_h^k(y)+\beta_h^k(p/H)\sigma_h^k(y).$$ We finish the proof.
\end{proof}
\noindent In the following, we will state several lemmas that will be used in our analysis.

\begin{lemma}\label{le:whpi-bound}
    Under Assumption \ref{as:linear}, for any fixed policy $\pi,$ let $w_h^\pi$ be the corresponding weights such that $Q_h^\pi =\langle \phi(x,a), w_h^\pi\rangle,$ then we have for all $h\in[H],$
    $$\Vert w_h^\pi\Vert \leq 2H\sqrt{d}.$$
\end{lemma}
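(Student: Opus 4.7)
The plan is to explicitly construct $w_h^\pi$ from the Bellman equation under the linear MDP assumption and then bound its norm term by term using the normalization conditions stated after Assumption~\ref{as:linear}.

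First, I would invoke the Bellman equation for policy $\pi$, namely $Q_h^\pi(x,a) = r_h(x,a) + [\bP_h V_{h+1}^\pi](x,a)$. Under Assumption~\ref{as:linear}, the reward decomposes as $r_h(x,a) = \langle \phi(x,a), \theta_{r,h}\rangle$, and the transition kernel gives
\begin{align*}
[\bP_h V_{h+1}^\pi](x,a) = \int V_{h+1}^\pi(x') \langle \phi(x,a), \mu_h(x')\rangle\, dx' = \Big\langle \phi(x,a),\, \int V_{h+1}^\pi(x')\, \mu_h(x')\, dx'\Big\rangle,
\end{align*}
where the linearity of the inner product allows pulling $\phi(x,a)$ out of the integral. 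Therefore, a natural choice is
\begin{align*}
w_h^\pi = \theta_{r,h} + \int V_{h+1}^\pi(x')\, \mu_h(x')\, dx',
\end{align*}
and by construction $Q_h^\pi(x,a) = \langle \phi(x,a), w_h^\pi\rangle$ for every $(x,a)$.

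Next, I would bound $\Vert w_h^\pi\Vert$ by the triangle inequality. The reward part satisfies $\Vert \theta_{r,h}\Vert \le \sqrt{d}$ by the assumed normalization. For the transition part, since rewards are in $[0,1]$ we have $0 \le V_{h+1}^\pi(x') \le H$ pointwise, and hence
\begin{align*}
\Big\Vert \int V_{h+1}^\pi(x')\, \mu_h(x')\, dx'\Big\Vert \le H\, \Vert \mu_h(\cS)\Vert \le H\sqrt{d},
\end{align*}
again using the assumed bound on $\mu_h(\cS)$. Combining these two bounds yields $\Vert w_h^\pi\Vert \le \sqrt{d} + H\sqrt{d} \le 2H\sqrt{d}$, which is the claim.

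This proof is essentially routine once the linear MDP structure is unpacked; the only mild care is in writing the transition term as a vector-valued integral against $\mu_h$ and justifying moving $\phi(x,a)$ outside. If the state space is not countable, I would treat the integral formally as a Bochner integral (or equivalently note that the claim only depends on the resulting vector having norm at most $H\Vert \mu_h(\cS)\Vert$), which is standard in the linear MDP literature and requires no new idea beyond Assumption~\ref{as:linear}.
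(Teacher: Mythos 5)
Your proof is correct and is essentially the same as the paper's: both unpack the Bellman equation under Assumption \ref{as:linear} to write $w_h^\pi = \theta_{r,h} + \int_{\cS} V_{h+1}^\pi(x')\, d\mu_h(x')$ and then bound the two terms by $\sqrt{d}$ and $H\sqrt{d}$ respectively. The only difference is that you spell out the justification $\Vert \int V_{h+1}^\pi\, d\mu_h\Vert \le H\Vert\mu_h(\cS)\Vert$ (via $0 \le V_{h+1}^\pi \le H$), which the paper simply asserts from the normalization assumption.
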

\begin{proof}
    From the linear structure of the $Q-$value function we have for any policy $\pi$
    \begin{align}
     Q_{h}^\pi (x,a)  = & r_h(x,a) + \bP_h V_h^\pi (x,a) \nonumber \\
     = & \langle \phi(x,a), \theta_{r,h} \rangle + \int_{\cS} V_{h+1}^\pi(x')\langle \phi(x,a),d\mu_h(x')\rangle \nonumber \\
     = & \langle \phi (x,a), w_h^\pi\rangle ,
    \end{align}
    where $w_h^\pi = \theta_{r,h} + \int_{\cS} V_{h+1}^\pi(x') d\mu_h(x').$ 
    
    According to the assumption that $\Vert \theta_{r,h}\Vert \leq \sqrt{d},$ and $\Vert \int_\cS V_{h+1}^\pi(x') d\mu_h(x')\Vert \leq H\sqrt{d},$ the result follows.
\end{proof}

\begin{lemma}\label{le:lambdahk-bound}
    Let $\Lambda_k=\lambda I+\sum_{i=1}^k \phi_i\phi_i^\top,$ where $\phi_i\in\bR^d$ and $\lambda>0.$ Then $$\sum_{i=1}^k\phi_i^\top(\Lambda_k)^{-1}\phi_i\leq d. $$
\end{lemma}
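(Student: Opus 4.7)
The plan is to reduce the sum to a trace and exploit the identity $\phi^{\top} M \phi = \mathrm{tr}(M \phi\phi^{\top})$ for any symmetric $M$. Concretely, I would first observe that
\[
\sum_{i=1}^{k} \phi_i^{\top} (\Lambda_k)^{-1} \phi_i \;=\; \sum_{i=1}^{k} \mathrm{tr}\bigl((\Lambda_k)^{-1} \phi_i \phi_i^{\top}\bigr) \;=\; \mathrm{tr}\Bigl((\Lambda_k)^{-1} \sum_{i=1}^{k} \phi_i \phi_i^{\top}\Bigr),
\]
using linearity of the trace to pull the sum inside.

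The second step is to recognize from the definition $\Lambda_k = \lambda I + \sum_{i=1}^{k} \phi_i \phi_i^{\top}$ that $\sum_{i=1}^{k}\phi_i\phi_i^{\top} = \Lambda_k - \lambda I$. Substituting yields
\[
\mathrm{tr}\bigl((\Lambda_k)^{-1}(\Lambda_k - \lambda I)\bigr) \;=\; \mathrm{tr}(I_d) - \lambda\,\mathrm{tr}\bigl((\Lambda_k)^{-1}\bigr) \;=\; d - \lambda\,\mathrm{tr}\bigl((\Lambda_k)^{-1}\bigr).
\]

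The final step is to note that $\Lambda_k \succeq \lambda I \succ 0$, so $(\Lambda_k)^{-1}$ is positive definite and hence $\mathrm{tr}((\Lambda_k)^{-1}) \geq 0$. Combined with $\lambda > 0$, this gives the advertised bound $\sum_{i=1}^{k}\phi_i^{\top}(\Lambda_k)^{-1}\phi_i \leq d$.

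There is no real obstacle here; the entire proof hinges on the trace trick plus the algebraic identity $\sum_i \phi_i \phi_i^{\top} = \Lambda_k - \lambda I$. The only subtlety worth flagging is that this bound is sharper than the more commonly cited $\log\det$ potential bound $\sum_{i=1}^{k}\min\{1,\phi_i^{\top}(\Lambda_{i-1})^{-1}\phi_i\} \leq 2\log(\det(\Lambda_k)/\det(\lambda I))$, because here the quadratic form is measured against the \emph{final} Gram matrix $\Lambda_k$ (not the prefix $\Lambda_{i-1}$), which is exactly what makes the telescoping-into-trace argument work.
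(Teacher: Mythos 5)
Your proof is correct and follows essentially the same route as the paper: both reduce the sum to $\mathrm{tr}\bigl((\Lambda_k)^{-1}\sum_i\phi_i\phi_i^{\top}\bigr)$ via the trace trick and then exploit the $\lambda I$ regularization. The only cosmetic difference is that the paper verifies the final inequality by eigendecomposition (obtaining $\sum_j \lambda_j/(\lambda_j+\lambda)\leq d$), whereas you write $\sum_i\phi_i\phi_i^{\top}=\Lambda_k-\lambda I$ and use $\lambda\,\mathrm{tr}\bigl((\Lambda_k)^{-1}\bigr)\geq 0$; these are the same bound expressed two ways.
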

\begin{proof}
    We first have 
    \begin{align*}
       \sum_{i=1}^k\phi_i^\top(\Lambda_k)^{-1}\phi_i 
        =\sum_{i=1}^k tr(\phi_i^\top(\Lambda_k)^{-1}\phi_i) = tr((\Lambda_k)^{-1}\sum_{i=1}^k\phi_i\phi_i^\top).
    \end{align*}
    Using the eigenvalue decomposition $$\sum_{i=1}^k\phi_i\phi_i^\top=U diag(\lambda_1,\dots,\lambda_d)U^\top,$$ we obtain $$\Lambda_k = U diag(\lambda_1+\lambda,\dots,\lambda_d+\lambda)U^\top.$$
    Then we have $$ ((\Lambda_k)^{-1}\sum_{i=1}^k \phi_i\phi_i^\top) ) =\sum_{j=1}^d \lambda_j/(\lambda_j+\lambda)\leq d. $$
\end{proof}
\begin{lemma}\label{le:whk-bound}
    For any $k,h,$ the estimate weight parameter $w^k_h$ satisfies 
    \begin{equation}
        \Vert w_h^k\Vert \leq 2H\sqrt{dk/\lambda}
    \end{equation}
\end{lemma}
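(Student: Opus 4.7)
\textbf{Proof plan for Lemma \ref{le:whk-bound}.} The plan is to bound $\|w_h^k\|$ by passing the norm through the defining sum and then extracting the $\sqrt{d}$ factor via Lemma \ref{le:lambdahk-bound}. Writing $y_h^\tau := r_h(x_h^\tau,a_h^\tau) + V_{h+1}(x_{h+1}^\tau)$, I have $w_h^k = (\Lambda_h^k)^{-1}\sum_{\tau=1}^{k-1}\phi_h^\tau\, y_h^\tau$, and since $r_h \in [0,1]$ and $V_{h+1}\in[0,H]$ the targets satisfy $|y_h^\tau|\le H+1\le 2H$.

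Step one is the triangle inequality:
\begin{align*}
\|w_h^k\| \;\le\; \sum_{\tau=1}^{k-1} |y_h^\tau|\cdot \|(\Lambda_h^k)^{-1}\phi_h^\tau\|
\;\le\; 2H\sum_{\tau=1}^{k-1}\|(\Lambda_h^k)^{-1}\phi_h^\tau\|.
\end{align*}
Step two is to reshape the inner norm into a weighted norm. Since $\Lambda_h^k \succeq \lambda I$, we have $(\Lambda_h^k)^{-1}\preceq \lambda^{-1/2}(\Lambda_h^k)^{-1/2}$ in operator norm, hence
\begin{align*}
\|(\Lambda_h^k)^{-1}\phi_h^\tau\| \;\le\; \tfrac{1}{\sqrt{\lambda}}\,\|(\Lambda_h^k)^{-1/2}\phi_h^\tau\| \;=\; \tfrac{1}{\sqrt{\lambda}}\,\|\phi_h^\tau\|_{(\Lambda_h^k)^{-1}}.
\end{align*}

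Step three is Cauchy--Schwarz followed by Lemma \ref{le:lambdahk-bound}:
\begin{align*}
\sum_{\tau=1}^{k-1}\|\phi_h^\tau\|_{(\Lambda_h^k)^{-1}} \;\le\; \sqrt{k-1}\,\Bigl(\sum_{\tau=1}^{k-1}(\phi_h^\tau)^\top(\Lambda_h^k)^{-1}\phi_h^\tau\Bigr)^{1/2} \;\le\; \sqrt{(k-1)d}.
\end{align*}
Chaining the three steps yields $\|w_h^k\|\le 2H\sqrt{(k-1)d/\lambda}\le 2H\sqrt{dk/\lambda}$, as required.

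I do not expect any serious obstacle here: the lemma is the standard weight-bound argument (cf.\ \cite{JinYanWan_20}); the only subtlety is using the mild inequality $(\Lambda_h^k)^{-1}\preceq \lambda^{-1/2}(\Lambda_h^k)^{-1/2}$ to convert the Euclidean norm $\|(\Lambda_h^k)^{-1}\phi_h^\tau\|$ into the $(\Lambda_h^k)^{-1}$-seminorm so that Lemma \ref{le:lambdahk-bound} becomes applicable and produces the $\sqrt{d}$ factor. Everything else is the triangle inequality, Cauchy--Schwarz, and the trivial bound $|r_h+V_{h+1}|\le 2H$.
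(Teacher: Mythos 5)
Your proof is correct and follows essentially the same route as the paper's: bound the regression targets by $2H$, apply Cauchy--Schwarz over $\tau$, invoke Lemma \ref{le:lambdahk-bound} for the $\sqrt{d}$ factor, and use $\Lambda_h^k \succeq \lambda I$ for the $1/\sqrt{\lambda}$ factor. The only (cosmetic) difference is that the paper works through the dual characterization $\Vert w_h^k\Vert = \max_{\Vert v\Vert = 1}\vert v^\top w_h^k\vert$, whereas you use the triangle inequality together with the splitting $(\Lambda_h^k)^{-1} = (\Lambda_h^k)^{-1/2}(\Lambda_h^k)^{-1/2}$ (your step two is cleanest stated as $\Vert(\Lambda_h^k)^{-1}\phi\Vert \le \Vert(\Lambda_h^k)^{-1/2}\Vert_{\mathrm{op}}\,\Vert\phi\Vert_{(\Lambda_h^k)^{-1}} \le \lambda^{-1/2}\Vert\phi\Vert_{(\Lambda_h^k)^{-1}}$ rather than as a matrix ordering).
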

\begin{proof}
    For any vector $v\in\bR^d,$ we have
    \begin{align}
    \vert v^\top w_h^k\vert & = \left\vert v^\top(\Lambda_h^k)^{-1}\sum_{\tau=1}^{k-1} \phi_h^\tau\left( r_h(x_h^\tau,a_h^\tau) + \sum_a \pi_{h+1}^k(a\vert x_{h+1}^\tau) Q_{h+1}^k(x_{h+1}^\tau,a) \right) \right\vert \nonumber \\
    & \leq_{(i)} \left\vert  v^\top(\Lambda_h^k)^{-1}\sum_{\tau=1}^{k-1} \phi_h^\tau \right\vert \cdot 2H \nonumber \\
    & \leq  \sum_{\tau=1}^{k-1} \left\vert  v^\top(\Lambda_h^k)^{-1} \phi_h^\tau \right\vert \cdot  2H \nonumber \\
    &\leq_{(ii)}  \sqrt{ \sum_{\tau=1}^{k-1}   v^\top(\Lambda_h^k)^{-1} v }\sqrt{  \sum_{\tau=1}^{k-1}   (\phi_h^\tau) ^\top(\Lambda_h^k)^{-1} \phi_h^\tau } \cdot 2H \nonumber \\
    & \leq_{(iii)}  2H\Vert v\Vert \frac{\sqrt{dk}}{\lambda},
    \end{align}
    where the inequality $(i)$ holds because for any $(x,a)$ we have $Q_{h+1}^k(x,a)\leq H,$  inequality $(ii)$ comes from using Cauchy-Schwarz inequality, and the last inequality $(iii)$ is true dute to Lemma \ref{le:lambdahk-bound}. Note that $\Vert w_h^k\Vert = \max_{v:\Vert v\Vert=1}\vert v^\top w_h^k\vert,$ hence we prove the lemma.  
\end{proof}

\begin{lemma}\label{le:v-ev-bound}
Let $\{x_\tau\}_{\tau=1}^\infty$ be a stochastic process on state space $\cS,$ with corresponding filtration $\{\cF_\tau\}_{\tau=0}^\infty.$ Let $\{\phi_\tau\}_{\tau=0}^\infty$ be an $\bR^d-$valued stochastic process where $\phi_\tau\in\cF_{\tau-1}.$ We assume that $\Vert \phi_\tau\Vert\leq1,$ let $\Lambda_k=\lambda I+\sum_{\tau=1}^k\phi_\tau\phi_\tau^\top.$ For all $K\geq 0,$ and any $V\in\cV,\sup_x\vert V(x)\Vert\leq H,$ we have for any $\delta>0,$ with probability at least $1-\delta,$
\begin{align}
    \left\vert \sum_{\tau=1}^k \phi_\tau \{ V(x_\tau)-\bE[V(x_\tau)\vert \cF_{\tau-1} ]\} \right\Vert_{\Lambda_k^{-1}}^2 
    \leq  4H^2\left[ \frac{d}{2}\log\left(\frac{k+\lambda}{\lambda}\right) + \log\frac{\cN_\epsilon}{\delta}  \right] + \frac{8k^2\epsilon^2}{\lambda},
\end{align}
where $\cN_\epsilon$ is the $\epsilon-$covering number of function class $\cV$ with respect to the distance $$\text{dist}(V,V')=\sup_x\vert V(x)-V'(x)\vert,V\in\cV,V'\in\cV.$$
\end{lemma}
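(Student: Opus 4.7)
\textbf{Proof proposal for Lemma \ref{le:v-ev-bound}.}

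The plan is a standard uniform self-normalized concentration argument: pointwise concentration for a fixed $V$ via a sub-Gaussian martingale bound, then a covering/union-bound argument to extend to all $V\in\cV$, absorbing the discretization error into the additive $8k^2\epsilon^2/\lambda$ term.

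First, fix a particular function $V$ with $\sup_x|V(x)|\le H$. Define the martingale differences $\xi_\tau = V(x_\tau) - \bE[V(x_\tau)\mid\cF_{\tau-1}]$; these are $\cF_\tau$-measurable, mean zero given $\cF_{\tau-1}$, and bounded in $[-2H,2H]$, hence $H$-sub-Gaussian (up to a constant that absorbs into $c$). I would apply the self-normalized vector martingale concentration inequality of Abbasi-yadkori, P\'al, and Szepesv\'ari, \cite{AbbPalSze_11} (Theorem~1), to the $\bR^d$-valued sequence $\{\phi_\tau\xi_\tau\}$. With regularizer $\lambda$ this yields, with probability at least $1-\delta'$,
\begin{align*}
\Big\|\sum_{\tau=1}^{k}\phi_\tau\xi_\tau\Big\|_{\Lambda_k^{-1}}^{2}
\;\le\; 2H^{2}\log\!\frac{\det(\Lambda_k)^{1/2}\det(\lambda I)^{-1/2}}{\delta'}
\;\le\; 2H^{2}\!\left[\tfrac{d}{2}\log\!\left(\tfrac{k+\lambda}{\lambda}\right)+\log\tfrac{1}{\delta'}\right],
\end{align*}
where the last step uses $\|\phi_\tau\|\le 1$ and the determinant-trace inequality.

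Next, I would lift this to all $V\in\cV$ by covering. Let $\cV_\epsilon$ be an $\epsilon$-net of $\cV$ in the sup-metric $\mathrm{dist}(\cdot,\cdot)$ of size $\cN_\epsilon$. Apply the pointwise bound above to every $V'\in\cV_\epsilon$ with $\delta'=\delta/\cN_\epsilon$ and take a union bound, so that simultaneously for all $V'\in\cV_\epsilon$,
\begin{align*}
\Big\|\sum_{\tau=1}^k \phi_\tau\{V'(x_\tau)-\bE[V'(x_\tau)\mid\cF_{\tau-1}]\}\Big\|_{\Lambda_k^{-1}}^2
\;\le\; 2H^2\!\left[\tfrac{d}{2}\log\!\left(\tfrac{k+\lambda}{\lambda}\right)+\log\tfrac{\cN_\epsilon}{\delta}\right].
\end{align*}
For an arbitrary $V\in\cV$, pick $V'\in\cV_\epsilon$ with $\sup_x|V(x)-V'(x)|\le\epsilon$, and split
\begin{align*}
\sum_{\tau=1}^k \phi_\tau\{V(x_\tau)-\bE[V(x_\tau)\mid\cF_{\tau-1}]\}
\;=\; \underbrace{\sum_{\tau=1}^k \phi_\tau\{V'(x_\tau)-\bE[V'(x_\tau)\mid\cF_{\tau-1}]\}}_{A}
\;+\;\underbrace{\sum_{\tau=1}^k \phi_\tau\{\Delta(x_\tau)-\bE[\Delta(x_\tau)\mid\cF_{\tau-1}]\}}_{B},
\end{align*}
where $\Delta:=V-V'$ satisfies $|\Delta|\le\epsilon$. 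For the residual $B$, each summand has Euclidean norm at most $2\epsilon$, so $\|B\|\le 2k\epsilon$ and hence $\|B\|_{\Lambda_k^{-1}}^2 \le \lambda^{-1}\|B\|^2 \le 4k^2\epsilon^2/\lambda$. Applying $\|A+B\|_{\Lambda_k^{-1}}^2 \le 2\|A\|_{\Lambda_k^{-1}}^2 + 2\|B\|_{\Lambda_k^{-1}}^2$ gives exactly the claimed bound $4H^2[\tfrac{d}{2}\log(\tfrac{k+\lambda}{\lambda})+\log\tfrac{\cN_\epsilon}{\delta}] + 8k^2\epsilon^2/\lambda$.

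The main technical point to get right is the pointwise self-normalized step: one must verify the sub-Gaussian parameter of the bounded martingale difference and reconcile the constants so that the leading factor after the $(A+B)$ split is exactly $4H^2$ rather than something larger. The rest (covering, union bound, triangle inequality on $B$) is routine, and the bound on $B$ only needs the crude inequality $\Lambda_k^{-1}\preceq \lambda^{-1}I$ together with $\|\phi_\tau\|\le 1$. Note that the argument does not require $\cV$ to have any linear structure beyond being equipped with the sup-metric, which is why this lemma will later be invoked with $\cV$ taken as the class of $Q$-value approximators produced by the algorithm.
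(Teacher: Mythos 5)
Your proposal is correct and follows essentially the same route as the paper's own proof: decompose $V=\hat V+\Delta_V$ via an $\epsilon$-net, apply $\Vert a+b\Vert^2\le 2\Vert a\Vert^2+2\Vert b\Vert^2$, bound the net term by the self-normalized concentration of \cite{AbbPalSze_11} with a union bound over the $\cN_\epsilon$ cover elements, and bound the residual crudely via $\Lambda_k^{-1}\preceq\lambda^{-1}I$ to get $8k^2\epsilon^2/\lambda$. The only caveat—which you flag yourself and which the paper glosses over identically—is the sub-Gaussian constant of the $[-2H,2H]$-bounded martingale difference needed to land exactly on the $4H^2$ prefactor rather than a larger absolute constant.
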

\begin{proof}
We know that for any $V\in\cV,$ there exists a $\hat{V}\in\cV$ in the $\epsilon-$covering such that $$V=\hat{V}+\Delta_V,\quad \sup_x\vert \Delta_V(x)\vert \leq\epsilon.$$ Thus we have the following decomposition:
\begin{align}
  &  \left\Vert \sum_{\tau=1}^k \phi_\tau \{ V(x_\tau)-\bE[V(x_\tau)\vert \cF_{\tau-1} ]\} \right\Vert_{\Lambda_k^{-1}}^2\nonumber\\
    \leq &2\left\Vert \sum_{\tau=1}^k \phi_\tau \{ \hat{V}(x_\tau)-\bE[\hat{V}(x_\tau)\vert \cF_{\tau-1} ]\} \right\Vert_{\Lambda_k^{-1}}^2  
    +2 \left\Vert \sum_{\tau=1}^k \phi_\tau \{ \Delta_V(x_\tau)-\bE[\Delta_{V}(x_\tau)\vert \cF_{\tau-1} ]\} \right\Vert_{\Lambda_k^{-1}}^2.
\end{align}
The first term is bounded by using the concentration of self-normalized processes \cite{AbbPalSze_11}, and the bound of the second term is easy to show as $8k^2\epsilon^2/\lambda.$
\end{proof}
\begin{lemma}\label{le:cover-num}
(Covering Number of Euclidean Ball). For any $\epsilon>0,$ the $\epsilon-$covering number of the Euclidean ball in $\bR^d$ with radius $R>0$ is upper bounded by $(1+2R/\epsilon)^d.$
\end{lemma}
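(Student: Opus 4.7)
}
The plan is to use the standard volume comparison (packing implies covering) argument. Let $B = B(0,R) \subset \mathbb{R}^d$ denote the Euclidean ball of radius $R$ centered at the origin. First, I would pick a maximal $\epsilon$-separated subset $\{x_1,\dots,x_N\} \subset B$, meaning no additional point of $B$ can be added while preserving pairwise distance $>\epsilon$. Maximality immediately implies that the open balls $B(x_i,\epsilon)$ cover $B$, for otherwise any uncovered point could be added to the packing. Hence $N$ upper bounds the $\epsilon$-covering number.

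Next, I would bound $N$ by a volume argument. By construction, the balls $B(x_i,\epsilon/2)$ are pairwise disjoint, and each is contained in the slightly enlarged ball $B(0, R+\epsilon/2)$ since $\Vert x_i\Vert \le R$. Therefore, summing $d$-dimensional Lebesgue volumes,
\begin{equation*}
N \cdot \mathrm{vol}\bigl(B(0,\epsilon/2)\bigr) \;\le\; \mathrm{vol}\bigl(B(0, R+\epsilon/2)\bigr).
\end{equation*}
Since the volume of a Euclidean ball of radius $r$ in $\mathbb{R}^d$ scales as $r^d$ (the dimension-dependent constant cancels on both sides), this reduces to
\begin{equation*}
N \;\le\; \frac{(R+\epsilon/2)^d}{(\epsilon/2)^d} \;=\; \Bigl(1 + \tfrac{2R}{\epsilon}\Bigr)^d,
\end{equation*}
which is the claimed bound.

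There is no real obstacle here: the proof is entirely standard, and the only minor care needed is in the two observations that (i) a maximal $\epsilon$-separated set is automatically an $\epsilon$-net, and (ii) the disjoint small balls stay inside $B(0, R+\epsilon/2)$, both of which are one-line checks. No probabilistic tools or structural assumptions beyond the definition of the Euclidean norm are required.
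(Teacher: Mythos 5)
Your proof is correct and is essentially the argument the paper relies on: the paper does not prove this lemma itself but cites Lemma 5.2 of Vershynin's notes, which is precisely this maximal $\epsilon$-separated set plus volume-comparison argument yielding $N \le (1+2R/\epsilon)^d$. Nothing is missing.
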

This lemma is a basic result on the covering number of a Euclidean ball. Detailed proof can be found in Lemma $5.2$ in \cite{Ver_10}.

\begin{lemma}
    Let $\cQ$ denote a class of functions mapping from $\cS\times\cA$ to $\bR$ with following parametric form 
    \begin{align}
        \cQ=\{Q\vert Q(\cdot,\cdot) 
        =\min\{w^\top \phi(\cdot,\cdot) + \beta\sqrt{\phi(\cdot,\cdot)^\top \Lambda^{-1}\phi(\cdot,\cdot) },H  \} \}
    \end{align}
 where the parameters $(w,\beta,\Lambda)$ satisfy $\Vert w\Vert\leq L,\beta\in[0,B],$ and the minimum eigenvalue of $\Lambda$ satisfies $\lambda_{\min}(\Lambda)\geq \lambda.$ Let $\cV$  denote a functions mapping from $\cS$ to $\bR,$   with the following parametric form 
    \begin{align}
        \cV:=\{V(\cdot),V(\cdot) = &\max_a (Q(\cdot,a)-Zg(\phi(\cdot,a))^+), Z\in[0,\infty),g\in[-1,L_\delta],Q\in\cQ \}. \label{eq:class-v}
    \end{align}
Further assume that $\Vert \phi(x,a)\Vert\leq 1$ for all $(x,a)$ pairs. Let $\cN_\epsilon$ be the $\epsilon-$covering number of $\cV,$ with respect to the distance, which is denoted as $$\text{dist}(V,V')=\sup_x\vert V(x) - V'(x)\vert.$$ Then 
    \begin{align}
        \log \cN_\epsilon \leq & d\log(2+8L/\epsilon) + d^2\log [4+32d^{1/2}B^2/(\lambda\epsilon^2) ] 
        + d^2 \log  [4+72L/\epsilon^2 ]
    \end{align}
\end{lemma}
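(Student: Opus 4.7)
The plan follows the standard parametric covering strategy pioneered by \cite{JinYanZha_20}, adapted to accommodate the extra penalty summand $-Zg(\phi(\cdot,a))^+$ that appears in our value-function class. I would first identify every $V\in\cV$ with a finite-dimensional parameter tuple, then prove that the map from parameters to $V$ is Lipschitz with respect to the sup-norm distance $\mathrm{dist}(V,V')$, and finally invoke Lemma~\ref{le:cover-num} (Euclidean-ball covering) on each coordinate and multiply.

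For the first parametrization, I would write $Q(\cdot,\cdot) = \min\{w^{\top}\phi(\cdot,\cdot) + \sqrt{\phi(\cdot,\cdot)^{\top} A \phi(\cdot,\cdot)},\,H\}$ with $A:=\beta^{2}\Lambda^{-1}$. The hypotheses $\|w\|\le L$, $\beta\le B$, and $\lambda_{\min}(\Lambda)\ge\lambda$ give $\|A\|_F \le \sqrt{d}\,B^{2}/\lambda$. For the penalty, I would absorb the scalar $Z$ into the parameters of $g$: since $g$ is assumed (as in the algorithm) to admit a linear-plus-bonus form $g(\phi)= u^{\top}\phi - \sqrt{\phi^{\top} M \phi}$, the product $Z\,g(\phi)$ takes the same form with parameters $\tilde u:=Zu\in\bR^{d}$ and $\tilde M:=Z^{2}M\in\bR^{d\times d}$, and the range constraint $g\in[-1,L_\delta]$ plus the truncation $(\cdot)^{+}$ yields effective norm bounds of order $L$ on $(\tilde u,\tilde M)$. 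Thus every $V\in\cV$ is identified with a tuple $(w,A,\tilde u,\tilde M)\in \bR^{d}\times\bR^{d\times d}\times\bR^{d}\times\bR^{d\times d}$ lying in a known bounded set.

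Next I would establish the Lipschitz dependence. For two tuples $(w_1,A_1,\tilde u_1,\tilde M_1)$ and $(w_2,A_2,\tilde u_2,\tilde M_2)$, I chain the following elementary facts: $|\max_{a}f(a)-\max_{a}f'(a)|\le \sup_{a}|f(a)-f'(a)|$; $t\mapsto\min(t,H)$ and $t\mapsto t^{+}$ are $1$-Lipschitz; $\|\phi\|\le 1$ gives $|(w_{1}-w_{2})^{\top}\phi|\le \|w_{1}-w_{2}\|$; and the square-root--Frobenius trick
\[
\bigl|\sqrt{\phi^{\top} A_{1}\phi}-\sqrt{\phi^{\top} A_{2}\phi}\bigr|\le\sqrt{|\phi^{\top}(A_{1}-A_{2})\phi|}\le\sqrt{\|A_{1}-A_{2}\|_F}.
\]
Applying these to both the $Q$-part and the $Zg$-part produces
\[
\mathrm{dist}(V_1,V_2)\le \|w_{1}-w_{2}\|+\sqrt{\|A_{1}-A_{2}\|_F}+\|\tilde u_{1}-\tilde u_{2}\|+\sqrt{\|\tilde M_{1}-\tilde M_{2}\|_F}.
\]

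Finally, I would split the $\epsilon$-budget across the four blocks (e.g.\ $\epsilon/4$ each), apply Lemma~\ref{le:cover-num} to each block (with Frobenius covering of a matrix in $\bR^{d\times d}$ treated as Euclidean covering in $\bR^{d^{2}}$), and take logarithms. The $w$-block contributes $d\log(1+8L/\epsilon)$, the $A$-block contributes $d^{2}\log(1+8\sqrt{d}B^{2}/(\lambda(\epsilon/4)^{2}))$ after squaring the tolerance because of the $\sqrt{\cdot}$, and the $(\tilde u,\tilde M)$ blocks contribute together $d^{2}\log(1+O(L/\epsilon^{2}))$. Summing these three terms and absorbing constants yields the stated bound. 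The main obstacle is the third term: one must argue that the effective parameters $(\tilde u,\tilde M)$ for the $Zg$-penalty are bounded in spite of $Z\in[0,\infty)$, which is where the range constraint $g\in[-1,L_{\delta}]$ (together with the truncation at $H$ inside $Q$) is essential; once this is granted the covering count is a routine adaptation of the $Q$-function covering argument.
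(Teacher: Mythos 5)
Your reduction of the penalty term to bounded effective parameters does not go through, and this is the step on which the whole proposal rests. You set $\tilde u = Z u$, $\tilde M = Z^2 M$ and claim that the range constraint $g\in[-1,L_\delta]$ together with the rectification $(\cdot)^+$ and the truncation of $Q$ at $H$ gives norm bounds of order $L$ on $(\tilde u,\tilde M)$. But the class allows $Z\in[0,\infty)$ (and in the algorithm $Z_h^k\geq \eta_h^k=k$, so $Z$ in fact grows linearly in $k$), the constraint $g\in[-1,L_\delta]$ bounds $g$ and not $Zg$, the rectifier only clips the penalty from below at $0$, and the cap at $H$ applies to the $Q$-part, not to $Z g(\phi(\cdot,a))^+$. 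Hence the parameter set $\{(\tilde u,\tilde M)\}$ you want to cover is unbounded, Lemma \ref{le:cover-num} cannot be applied to it, and even if you patched this by bounding $Z$ along the algorithm's trajectory (say $Z\lesssim K$), the resulting covering radius would scale with $ZL$ and introduce extra $\log K$-type dependence, which is not the bound stated in the lemma (whose third term $d^2\log[4+72L/\epsilon^2]$ has no such radius). Your approach is also tied to a specific linear-plus-bonus parametrization of $g$, which only matches one of the two cost models considered.

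The paper avoids this problem by taking a different route, consistent with how $V_h^k$ is actually defined in Algorithm \ref{alg:hard}: the value is $V(x)=Q(x,a^*(x))$ with $a^*(x)=\argmax_a\{Q(x,a)-Zg(\phi(x,a))^+\}$, so the penalty (and hence $Z$ and $g$) only influences which action is selected and never enters the value itself. The covering argument then splits $\mathrm{dist}(V_1,V_2)$ into $\sup_x|Q_1(x,a_1^*)-Q_2(x,a_1^*)|$, handled by covering $(w,A)$ exactly as you do, plus an action-mismatch term $\sup_x|Q_2(x,a_1^*)-Q_2(x,a_2^*)|$, which is controlled through $\Vert\phi(x,a_1^*)-\phi(x,a_2^*)\Vert$ and a cover $\cC_\phi$ of the unit feature ball; that is the source of the third term in the stated bound, and no cover of the penalty's parameters or of $Z$ is needed. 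If you want to keep your decomposition (covering the penalty block directly), you would need a uniform bound on $Z$ times the penalty parameters, which the lemma's hypotheses simply do not provide; the fix is to work with the $Q(x,a^*(x))$ form of the class, as the paper does.
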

\begin{proof}
    For notation simplicity, we represent $A=\beta^2\Lambda^{-1},$ so we have 
    \begin{align}
        \cQ=\{Q\vert Q(\cdot,\cdot)=\min\{w^\top \phi(\cdot,\cdot) + \sqrt{\phi(\cdot,\cdot)^\top A \phi(\cdot,\cdot) },H  \} \}, \label{eq:class-q}
    \end{align}
    for $\Vert w\Vert\leq L,\Vert A\Vert\leq B^2\lambda^{-1}.$ Then for any two functions $V_1,V_2,\in\cV,$ let them take the form in Eq.\eqref{eq:class-v} with parameters $(w_1,A_1,Z_1)$ and $(w_2,A_2,Z_2),$ respectively. Then we have 
    \begin{align}
       & \text{dist}(V_1,V_2) = \sup_x\vert V_1(x) - V_2(x)\vert \nonumber\\
        =& \sup_x \{\vert Q_1(x,a_1^*)-Q_2(x,a_2^*)\vert  \} \nonumber \\
        \leq & \sup_x\{\vert Q_1(x,a_1^*) - Q_2(x,a_1^*)\vert \} + \sup_x\{\vert Q_2(x,a_1^*)- Q_2(x,a_2^*) \vert  \} \nonumber \\
        \leq & \sup_{x,a} \bigg\vert \bigg[  w_1^\top \phi(x,a)+ \sqrt{\phi(x,a)^\top A_1\phi(x,a)} \bigg ]- \bigg[  w_2^\top \phi(x,a)+ \sqrt{\phi(x,a)^\top A_2\phi(x,a)} \bigg ]\bigg\vert  \nonumber \\
        & +  \sup_{x} \bigg\vert \bigg[  w_2^\top \phi(x,a_1^*)+ \sqrt{\phi(x,a_1^*)^\top A_2\phi(x,a_1^*)} \bigg ]- \bigg[  w_2^\top \phi(x,a_2^*)+ \sqrt{\phi(x,a_2^*)^\top A_2\phi(x,a_2^*)} \bigg ]\bigg\vert        \nonumber\\
        \leq & \sup_{\phi:\Vert\phi\Vert\leq 1}\left\vert  \left[w_1^\top\phi+\sqrt{\phi^\top A_1\phi} \right] - \left[ w_2^\top \phi + \sqrt{\phi^\top A_2\phi} \right]  \right\vert     \nonumber \\
        &+ \sup_{x} \vert w_2^\top (\phi(x,a_1^*)-\phi(x,a_2^*))\vert  + \sup_{x} \left| \sqrt{\phi(x,a_1^*)^\top A_2\phi(x,a_1^*)} - \sqrt{\phi(x,a_2^*)^\top A_2\phi(x,a_2^*)}   \right|   \nonumber \\ 
        \leq & \sup_{\phi:\Vert\phi\Vert\leq 1} \big| (w_1-w_2)^\top \phi\big| +  \sup_{\phi:\Vert\phi\Vert\leq 1} \sqrt{\big| \phi^\top (A_1-A_2)\phi^\top \big|} +  \sup_{x} \Vert w_2\Vert \Vert (\phi(x,a_1^*)-\phi(x,a_2^*))\Vert \nonumber \\
        &+ \sup_{x}\sqrt{\left| (\phi(x,a_1^*)^\top -\phi(x,a_2^*)^\top) A_2 \phi(x,a_1^*) - (\phi(x,a_2^*)^\top-\phi(x,a_1^*)^\top)A_2\phi(x,a_2^*)    \right|   }       \nonumber \\
        \leq & \Vert w_1-w_2\Vert + \sqrt{\Vert A_1-A_2\Vert} + \sup_{x} L\cdot \Vert (\phi(x,a_1^*)-\phi(x,a_2^*))\Vert  \nonumber \\
        &+ \sup_x\sqrt{\Vert \phi(x,a_1^*)^\top -\phi(x,a_2^*)^\top\Vert\Vert A_2\Vert    } +  \sup_x\sqrt{\Vert \phi(x,a_2^*)^\top -\phi(x,a_1^*)^\top\Vert \Vert A_2\Vert    }  \nonumber \\
        \leq & \Vert w_2-w_2\Vert + \sqrt{\Vert A_1-A_2\Vert_F} + \sup_{x} L\cdot \Vert (\phi(x,a_1^*)-\phi(x,a_2^*))\Vert\nonumber \\
         &+ 2\sup_x\sqrt{\Vert \phi(x,a_1^*)^\top -\phi(x,a_2^*)^\top\Vert \Vert A_2\Vert_F    } ,
    \end{align}
    where $a_i^*=\arg\max_a(Q_i(x,a)-Z_i g(\phi(x,a) )_+ ),$ the third inequality follows from the fact that $\vert \sqrt{x} -\sqrt{y}\vert \leq \sqrt{\vert x-y\vert}, x,y\geq 0.$ We use $\Vert \cdot\Vert$ and $\Vert \cdot\Vert_F$ to denote the matrix operator norm and Frobenius norm respectively. 

    Let $\cC_w$ be an $\epsilon/2-$cover of $\{w\in\bR^d\vert \Vert w\Vert\leq L \}$ with respect to the $2-$norm, we use $\cC_A$ to denote the $\epsilon^2/4-$cover of $\{A\in\bR^{d\times d}\vert \Vert A\Vert_F\leq d^{1/2}B^2\lambda^{-1} \}$ with respect to the Frobenius norm, and let $\cC_\phi$ be an $\epsilon^2/9(\min\{L,d^{1/2}B^2\lambda^{-1} \})-$cover of $\{\phi \in \bR^d \vert \Vert \phi\Vert \leq 1 \}$. By using Lemma \ref{le:cover-num} in \cite{JinYanZha_20}, we know that: 
    \begin{align}
        \vert \cC_w\vert \leq (1+4L/\epsilon)^d,\quad \vert \cC_A\vert \leq [1+8d^{1/2}B^2/(\lambda\epsilon^2) ]^{d^2}, \quad \vert \cC_\phi\vert \leq [1+18L/\epsilon^2 ]^{d^2}  .
    \end{align}
    Then we have for any $V_1\in\cV,$ there exists  $V_2$ parameterized by $(w_2\in \cC_w, A_2\in\cC_A, Z_2)$ such that $\text{dist}(V_1,V_2)\leq 2\epsilon.$ Hence it holds that $\cN_{2\epsilon}\leq \vert \cC_w\vert \cdot \vert \cC_A\vert \cdot \vert \cC_\phi\vert, $ which gives:
    \begin{align}
        \log \cN_{2\epsilon} \leq &\log\vert \cC_w\vert +\log\vert \cC_A\vert  + \log\vert \cC_\phi\vert \nonumber \\
        \leq & d\log(1+4L/\epsilon) + d^2\log [1+8d^{1/2}B^2/(\lambda\epsilon^2) ] + d^2 \log  [1+18L/\epsilon^2 ]
    \end{align}
    Rescaling $\epsilon$ to $\epsilon/2,$ we prove the lemma.
\end{proof}

\section{Proof of technical Lemmas}
\subsection{Proof of Lemma \ref{le:err-ga}}
\begin{proof}
    Using Lemma $3$ in \cite{ChoGop_17}, we have 
    $$\gamma_h^k \geq \frac{1}{2}\sum_{\tau=1}^k \log(1+\lambda^{-1}(\sigma_h^\tau(y_h^\tau)) )^2,y_h^\tau= (x_h^\tau,a_h^\tau).$$
    By Cauchy-Schwartz inequality, we have $\sum_{\tau=1}^K \sigma_h^\tau(y_h^\tau)\leq \sqrt{K\sum_{\tau=1}^k(\sigma_h^{\tau}(y_h^\tau))^2}.$ Since $0\leq (\sigma_h^\tau(y_h^\tau))^2\leq 1,$ for all $y,$ we also have $\lambda^{-1}(\sigma_h^\tau(y_h^\tau))^2\leq 2 \ln(1+\lambda^{-1}(\sigma_{h}^\tau(y_h^\tau))^2),$ and using the fact that $0\leq\alpha\leq 1, ln(1+\alpha)\geq \alpha/2,$ we can obtain
    \begin{align}
        \sum_{\tau=1}^k \sigma_h^\tau(y_h^\tau) \leq \sqrt{2K\sum_{\tau=1}^K\lambda \ln(1+\lambda^{-1}(\sigma_{h}^\tau(y_h^\tau))^2) } \leq \sqrt{4K\lambda\sum_{\tau=1}^K \frac{1}{2}\ln (1+ (\sigma_{h}^\tau(y_h^\tau))^2))  }\leq \sqrt{4K\gamma_h^K}
    \end{align}
    Since $\beta_h^k(p/H)$ is increasing with episode $k,$ then we have 
    $$\sum_{\tau=1}^K \beta_h^\tau(p/H)\sigma_h^\tau(x_h^\tau)\leq \beta_h^K(p/H)\sqrt{4(K+2)\gamma_h^K } .$$ Substituting the definition of $\beta_h^K(p/H)$ we prove the result.
\end{proof}
\subsection{Proof of Lemma \ref{le:err-linear}}
Recall that for the linear case we have 
$$g_h(x,a) - \hat{g}_h^k(x,a)\leq e_h^k(p,x,a)= 2\tilde\beta_h^k(p/H)\Vert\phi(x,a)\Vert_{(\Lambda_h^k)^{-1}}.$$
Then 
\begin{align*}
   & \sum_{k=1}^K\sum_{h=1}^H e_h^k(p,x,a) = \sum_{k=1}^K\sum_{h=1}^H 2\tilde\beta_h^k(p/H)\Vert\phi(x,a)\Vert_{(\Lambda_h^k)^{-1}}\\
   \leq &\sum_{k=1}^K\sum_{h=1}^H2\tilde\beta_h^k(p/H) \min\{\Vert\phi(x,a) \Vert_{(\Lambda_h^k)^{-1}},1 \} \\
   \leq & \sum_{h=1}^H2 \tilde\beta_h^K(p/H) \sqrt{ K\sum_{k=1}^K \min\{\Vert\phi(x,a) \Vert_{(\Lambda_h^k)^{-1}}^2,1 \} } \\
    \leq & \sum_{h=1}^H \tilde\beta_h^K(p/H)  \sqrt{ 8dK\log \bigg( \frac{d+K}{d}\bigg)  },
\end{align*}
where the first inequality is true because $\Vert \phi(x,a)\Vert\leq 1$ by assumption, and the last inequality is using the following Elliptical Potential Lemma (Theorem $11.7$ in \cite{CesLug_06}, Lemma $11$ in \cite{AbbPalSze_11} and Theorem $19.4$ in \cite{LatSze_20}).
\begin{lemma}
    Let $\Lambda_0=I,$ and $\phi_0,\dots,\phi_t\in\bR^d$ be a sequence of vector with $\Vert \phi_t\Vert \leq 1$ for any $t,$ and $\Lambda_t = I + \sum_{\tau=1}^t \phi_\tau\phi_\tau^\top,$ then,
    $$\sum_{\tau=1}^t\min\{1,\Vert \phi_\tau\Vert_{\Lambda_\tau^{(-1)}}^2 \}\leq 2\log\bigg(\frac{\text{det}\Lambda_t}{\text{det} \Lambda_0}\bigg)\leq 2d\log\bigg(\frac{d+t}{d}\bigg).$$
\end{lemma}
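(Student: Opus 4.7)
The plan is to establish the two inequalities separately. The first (``potential telescoping'') inequality reduces to a log-determinant identity combined with a scalar inequality, while the second (``trace bound'') follows from AM--GM applied to the eigenvalues of $\Lambda_t$.

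For the first inequality, I would start from the matrix determinant lemma applied to the rank-one update $\Lambda_\tau = \Lambda_{\tau-1} + \phi_\tau \phi_\tau^\top$, which yields
\begin{equation*}
\det(\Lambda_\tau) = \det(\Lambda_{\tau-1})\bigl(1 + \phi_\tau^\top \Lambda_{\tau-1}^{-1}\phi_\tau\bigr).
\end{equation*}
Writing $u_\tau := \phi_\tau^\top \Lambda_{\tau-1}^{-1}\phi_\tau$ and taking logarithms yields the telescoping identity $\log(\det\Lambda_t / \det\Lambda_0) = \sum_{\tau=1}^t \log(1+u_\tau)$. The Sherman--Morrison formula gives the companion identity $\phi_\tau^\top \Lambda_\tau^{-1}\phi_\tau = u_\tau/(1+u_\tau) \leq u_\tau$, so $\min\{1, \Vert \phi_\tau\Vert_{\Lambda_\tau^{-1}}^2\} \leq \min\{1, u_\tau\}$. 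Combining with the scalar inequality $\min\{1, x\} \leq 2\log(1+x)$ valid for all $x \geq 0$ (verified by splitting into $x \leq 1$ and $x \geq 1$ and using concavity of $\log$) produces
\begin{equation*}
\sum_{\tau=1}^t \min\bigl\{1, \Vert \phi_\tau\Vert_{\Lambda_\tau^{-1}}^2\bigr\} \;\leq\; 2\sum_{\tau=1}^t \log(1+u_\tau) \;=\; 2\log\frac{\det\Lambda_t}{\det\Lambda_0}.
\end{equation*}

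For the second inequality, observe that $\Lambda_t$ is symmetric positive-definite. Letting $\lambda_1,\dots,\lambda_d$ denote its eigenvalues, AM--GM yields $\det\Lambda_t = \prod_i \lambda_i \leq \bigl(\operatorname{tr}(\Lambda_t)/d\bigr)^d$. Since $\operatorname{tr}(\Lambda_t) = \operatorname{tr}(I) + \sum_{\tau=1}^t \Vert \phi_\tau\Vert^2 \leq d + t$ using the hypothesis $\Vert \phi_\tau\Vert \leq 1$, and $\det\Lambda_0 = 1$, we obtain $\log(\det\Lambda_t/\det\Lambda_0) \leq d\log((d+t)/d)$, which combined with the previous step closes the chain.

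The main obstacle is mostly bookkeeping: verifying the scalar inequality $\min\{1,x\} \leq 2\log(1+x)$ carefully, and keeping track of whether the statement's norm involves $\Lambda_\tau^{-1}$ or $\Lambda_{\tau-1}^{-1}$ (the proof above handles the $\Lambda_\tau^{-1}$ version via Sherman--Morrison; the $\Lambda_{\tau-1}^{-1}$ version is identical but with $u_\tau$ in place of $u_\tau/(1+u_\tau)$). The matrix determinant identity and AM--GM steps are entirely routine, so no substantive technical barrier is anticipated.
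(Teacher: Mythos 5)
Your proof is correct, and since the paper does not prove this lemma itself but cites it as a standard result (Abbasi-yadkori et al.\ 2011; Lattimore and Szepesv\'ari 2020), your argument --- the rank-one determinant update with telescoping, the Sherman--Morrison identity $\Vert\phi_\tau\Vert_{\Lambda_\tau^{-1}}^2 = u_\tau/(1+u_\tau)$, the scalar bound $\min\{1,x\}\leq 2\log(1+x)$, and the determinant--trace (AM--GM) estimate --- is exactly the standard proof from those references. No gaps; the indexing caveat you flag ($\Lambda_\tau^{-1}$ vs.\ $\Lambda_{\tau-1}^{-1}$) is handled correctly.
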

Substituting the definition of $\tilde\beta_h^K(p/H),$ we prove the result.
\subsection{Proof of Lemma \ref{le:over-est}}
\begin{proof}
    First, for the step $H.$ It holds obviously, since $Q_{H+1}^k(x,a)=Q_{H+1}^*(x,a)=0,\forall x,a.$ Now suppose that it is true till step $h+1$ and consider step $h.$ Then we have for all $(x,a),$
    \begin{align}
  Q_{h+1}^k(x,a) -Z^k_{h+1} \hat{g}^k_{h+1}(x,a)_+ \geq Q_{h+1}^*(x,a) - Z^k_{h+1} \hat{g}^k_{h+1}(x,a)_+
    \end{align}
    Then \begin{align}
         & Q_{h+1}^k(x,a^*) -Z^k_{h+1} \hat{g}^k_{h+1}(x,a^*)_+ \nonumber \\
         \geq & Q_{h+1}^*(x,a^*) - Z^k_{h+1} \hat{g}^k_{h+1}(x,a^*)_+  \nonumber \\
         \geq &  Q_{h+1}^*(x,a^*) - Z^k_{h+1} {g}_{h+1}(x,a^*)_+ ,
    \end{align}
        where $a^*$ is the action selected by the optimal policy, and the last inequality is true due to Lemma \ref{le:under_est}. Then we can obtain that 
$$ \max_{a} \{Q_{h+1}^k(x,a) -Z^k_{h+1} \hat{g}^k_{h+1}(x,a)_+\} \geq Q_{h+1}^k(x,a^*) -Z^k_{h+1} \hat{g}^k_{h+1}(x,a^*)_+ \geq Q_{h+1}^*(x,a^*) = V_{h+1}^*(x).$$ Therefore we have 
\begin{equation}
     V_{h+1}^k(x) \geq  V_{h+1}^*(x).
\end{equation}
Thus we have $$\bP_h(V_{h+1}^* - V_{h+1}^k )(x,a) \leq  0.$$ 
According to Lemma \ref{le:error-bound}, we know that: 
\begin{align}
    \vert\langle \phi(x,a),w_h^k\rangle - Q_h^*(x,a) - \bP_h(V_{h+1}^k - V_{h+1}^*)(x,a)\vert \leq \beta\sqrt{\phi(x,a)^\top (\Lambda_h^k)^{-1}\phi(x,a)}. 
\end{align}
Then we can obtain
\begin{align}
    Q_h^*(x,a) \leq \min\{ \langle \phi(x,a) ,w_h^k\rangle +\beta \sqrt{\phi(x,a)^\top (\Lambda_h^k)^{-1}\phi (x,a)}, H \}=Q_h^k(x,a).
\end{align}
    We finish proving the lemma.
\end{proof}

\subsection{Proof of Lemma \ref{le:vk-vpik}}
First, we present the concentration lemma, which controls the fluctuations in the least square value iteration.
\begin{lemma}\label{le:concentration}
   Given the constant $\beta=c_\beta\cdot dH\sqrt{\iota},\iota=\log(2dKH/p),$ defined in out algorithm \ref{alg:hard}, there exists a constant $C$ such that for any fixed $p\in(0,1),$ if we let $\cE$ be the event that:
   \begin{align}
    \forall (k,h)\in[K]\times[H]: \quad \bigg\| \sum_{\tau=1}^{k-1}\phi_h^\tau[V_{h+1}^k(x_{h+1}^\tau)-\bP_h V_{h+1}^k(x_h^\tau,a_h^\tau)  ]\bigg\|_{(\Lambda_h^k)^{-1}} \leq C\cdot dH\sqrt{\chi},
   \end{align}
   where $\chi=log[3(c_\beta+1)dHK/p ],$ then $\bP(\cE)\geq 1-p/2.$
\end{lemma}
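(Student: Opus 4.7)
The plan is to reduce the bound to a self-normalized concentration inequality applied to a \emph{fixed} $V\in\cV$, then close the gap to the data-dependent $V_{h+1}^k$ by a covering argument over the class $\cV$. All the machinery is already in the excerpt: Lemma~\ref{le:v-ev-bound} supplies the pointwise bound together with the discretization slack, while the preceding covering-number lemma supplies $\log\cN_\epsilon$ for $\cV$.

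First, I fix $(k,h)$ and observe that $V_{h+1}^k$ lies in $\cV$ with controlled parameters: by Lemma~\ref{le:whk-bound} we have $\Vert w_{h+1}^k\Vert\leq 2H\sqrt{dK/\lambda}$, so I take $L=2H\sqrt{dK/\lambda}$; writing the UCB bonus matrix as $A=\beta^2(\Lambda_{h+1}^k)^{-1}$ and using $\lambda_{\min}(\Lambda_{h+1}^k)\geq \lambda$ gives $\Vert A\Vert\leq \beta^2/\lambda$, so I take $B=\beta=c_\beta dH\sqrt{\iota}$. The adaptive penalty $Z_h^k$ satisfies $Z_h^k=O(K)$ from its update rule (it increases by at most $1$ per episode and is otherwise governed by $\eta_h^k=k$), so I restrict to this bounded range; its contribution to the cover adds at most a $\log K$ factor absorbed into $\chi$.

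Next, for each \emph{fixed} $V\in\cV$ the increments $M_\tau=\phi_h^\tau\{V(x_{h+1}^\tau)-\bP_h V(x_h^\tau,a_h^\tau)\}$ form a vector-valued martingale-difference sequence in $\cF_\tau$, since $\phi_h^\tau\in\cF_{\tau-1}$ and $\bE[V(x_{h+1}^\tau)\mid\cF_{\tau-1}]=\bP_h V(x_h^\tau,a_h^\tau)$. Lemma~\ref{le:v-ev-bound}, applied with confidence $\delta=p/(2HK)$ so that a union bound over $(k,h)$ is free, gives
\begin{align*}
\Bigl\Vert \sum_{\tau=1}^{k-1}\phi_h^\tau\{V_{h+1}^k(x_{h+1}^\tau) - \bP_h V_{h+1}^k(x_h^\tau,a_h^\tau)\}\Bigr\Vert_{(\Lambda_h^k)^{-1}}^2 \leq 4H^2\Bigl[\tfrac{d}{2}\log\tfrac{k+\lambda}{\lambda} + \log\tfrac{2HK\cN_\epsilon}{p}\Bigr] + \tfrac{8k^2\epsilon^2}{\lambda}.
\end{align*}

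Plugging in the covering estimate $\log\cN_\epsilon=O(d^2\log(HBL/(\lambda\epsilon)))$ and picking $\epsilon=dH/K$ so that $8k^2\epsilon^2/\lambda=O(d^2H^2)$ is a lower-order contribution, the right-hand side simplifies to $O(d^2H^2\chi)$ with $\chi=\log[3(c_\beta+1)dHK/p]$; taking square roots yields the desired $C\cdot dH\sqrt{\chi}$ bound. The main obstacle, and the reason the argument is nontrivial, is keeping $\sqrt{\log\cN_\epsilon}$ at order $d$ rather than $d^{3/2}$: this forces $\epsilon$ to be polynomial in $1/K$ only, and requires careful tracking of the outer $\max_a$ and the rectified penalty term $Z\cdot(\hat g(\cdot))_+$ inside the cover---which is precisely why the covering lemma was stated over the parametric class \eqref{eq:class-v} with explicit $(L,B,\lambda)$-dependence rather than over a generic function class.
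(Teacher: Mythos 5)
Your proposal follows essentially the same route as the paper's own proof: bound $\Vert w_{h+1}^k\Vert$ via Lemma~\ref{le:whk-bound}, apply the self-normalized concentration with discretization slack from Lemma~\ref{le:v-ev-bound} together with the covering-number lemma for the class $\cV$ (which already absorbs the penalized argmax structure, so your separate remark about covering $Z_h^k$ is subsumed by that lemma), and choose $\epsilon\approx dH/K$ (the paper uses $dH/k$) with $\lambda=1$, $\beta=c_\beta dH\sqrt{\iota}$ to obtain the $C\, dH\sqrt{\chi}$ bound after a union bound over $(k,h)$. This matches the paper's argument, with your explicit $\delta=p/(2HK)$ union bound being a slightly more careful rendering of the same step.
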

\begin{proof}
    We know that for all $k\in[K],h\in[H],$ we have $\Vert w_h^k\Vert \leq 2H\sqrt{dk/\lambda}$ from Lemma \ref{le:whk-bound}. Using Lemma \ref{le:v-ev-bound} and Lemma \ref{le:cover-num} and, we have for any $\epsilon>0,$
    \begin{align}
     &   \bigg\| \sum_{\tau=1}^{k-1}\phi_h^\tau[V_{h+1}^k(x_{h+1}^\tau)-\bP_h V_{h+1}^k(x_h^\tau,a_h^\tau)  ]\bigg\|_{(\Lambda_h^k)^{-1}}^2 \nonumber \\
        \leq &  4H^2\bigg[ \frac{d}{2}\log\left(\frac{k+\lambda}{\lambda}\right)  + d\log\left(2+\frac{16H\sqrt{dk}}{\epsilon\sqrt{\lambda}} \right) + d^2\log \left(4+ \frac{32d^{1/2}\beta^2}{\epsilon^2\lambda}\right) \nonumber \\
        &+ d^2\log \left(\frac{4+144H\sqrt{dk}}{\epsilon^2\sqrt{\lambda}} \right) + \log\left(\frac{2}{p}\right)  \bigg] + \frac{8k^2\epsilon^2}{\lambda},
    \end{align}
    Since in our algorithm, we choose $\lambda=1,\beta=c_\beta\cdot dH\sqrt{\iota},$ then by plugging $\epsilon=dH/k,$ we conclude that there exists a absolute constant $C>0$ which is independent of $c_\beta,$ such that 
    $$  \bigg\| \sum_{\tau=1}^{k-1}\phi_h^\tau[V_{h+1}^k(x_{h+1}^\tau)-\bP_h V_{h+1}^k(x_h^\tau,a_h^\tau)  ]\bigg\|_{(\Lambda_h^k)^{-1}}^2 \leq C\cdot d^2H^2\log[3(c_\beta+1)dHK/p].$$
\end{proof}
Next, we provide a recursive lemma, which bounds the difference between the estimate $Q-$value function $Q_h^k$ and the true $Q-$value function $Q_h^\pi$ of any given policy $\pi$ with high probability.
\begin{lemma}\label{le:error-bound}
   Given the constant $\beta=c_\beta\cdot dH\sqrt{\iota},\iota=\log(2dKH/p),$ defined in out algorithm \ref{alg:hard}. For any given policy $\pi,$ on the event $\cE$ defined in Lemma \ref{le:concentration}, we have for all $(x,a,h,k)\in\cS\times \cA \times [H]\times [K]$ that: 
   \begin{align}
       \langle \phi(x,a), w_h^k\rangle -Q_h^\pi(x,a) = \bP_h(V_{h+1}^k - V_{h+1}^\pi) (x,a) + \Delta_h^k(x,a),
   \end{align}
   for some $\Delta_h^k(x,a),$ that satisfies $\vert \Delta_h^k(x,a)\vert \leq \beta\sqrt{\phi(x,a)^\top (\Lambda_h^k)^{-1}\phi(x,a) }.$
\end{lemma}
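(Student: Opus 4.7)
}
The plan is to exploit the linear MDP structure to write the target quantity as (regression target with a known closed-form linear parameter) $+$ (regression noise), so that the Bellman-like term $\bP_h(V_{h+1}^k-V_{h+1}^\pi)$ falls out exactly and the remainder is controlled by the concentration event $\cE$ together with a Cauchy--Schwarz argument weighted by $(\Lambda_h^k)^{-1}$.

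First I would introduce an auxiliary weight vector
\[
\tilde{w}_h^k := \theta_{r,h} + \int_{\cS} V_{h+1}^k(x')\,d\mu_h(x'),
\]
which by Assumption \ref{as:linear} satisfies $\phi(x,a)^\top \tilde{w}_h^k = r_h(x,a) + [\bP_h V_{h+1}^k](x,a)$. Recalling from Lemma \ref{le:whpi-bound} that $Q_h^\pi(x,a) = \langle\phi(x,a), w_h^\pi\rangle$ with $w_h^\pi = \theta_{r,h} + \int V_{h+1}^\pi d\mu_h$, the difference $\phi(x,a)^\top \tilde{w}_h^k - Q_h^\pi(x,a)$ is exactly $\bP_h(V_{h+1}^k - V_{h+1}^\pi)(x,a)$. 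Thus it suffices to set $\Delta_h^k(x,a) := \phi(x,a)^\top (w_h^k - \tilde{w}_h^k)$ and show $|\Delta_h^k(x,a)| \le \beta \sqrt{\phi(x,a)^\top (\Lambda_h^k)^{-1} \phi(x,a)}$.

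Next I would algebraically decompose $w_h^k-\tilde w_h^k$ using the definition of $w_h^k$ in Algorithm \ref{alg:hard} and the identity $\Lambda_h^k \tilde w_h^k = \lambda \tilde w_h^k + \sum_{\tau=1}^{k-1} \phi_h^\tau \phi_h^{\tau\top}\tilde w_h^k$. Substituting $\phi_h^{\tau\top}\tilde w_h^k = r_h(x_h^\tau,a_h^\tau) + [\bP_h V_{h+1}^k](x_h^\tau,a_h^\tau)$ yields the clean decomposition
\[
w_h^k - \tilde w_h^k = \underbrace{(\Lambda_h^k)^{-1}\sum_{\tau=1}^{k-1}\phi_h^\tau\bigl[V_{h+1}^k(x_{h+1}^\tau) - \bP_h V_{h+1}^k(x_h^\tau,a_h^\tau)\bigr]}_{\text{(I): martingale-type noise}}\; -\; \underbrace{\lambda(\Lambda_h^k)^{-1}\tilde w_h^k}_{\text{(II): regularization bias}}.
\]
Applying Cauchy--Schwarz in the $(\Lambda_h^k)^{-1}$-norm, the contribution of (I) to $|\Delta_h^k(x,a)|$ is at most $\|\phi(x,a)\|_{(\Lambda_h^k)^{-1}}$ times the quantity controlled by event $\cE$ of Lemma \ref{le:concentration}, giving a $C\,dH\sqrt{\chi}$ factor. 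For (II), a standard estimate gives $\lambda|\phi(x,a)^\top(\Lambda_h^k)^{-1}\tilde w_h^k| \le \sqrt{\lambda}\|\phi(x,a)\|_{(\Lambda_h^k)^{-1}}\|\tilde w_h^k\|$, and $\|\tilde w_h^k\|\le 2H\sqrt{d}$ by the same argument as in Lemma \ref{le:whpi-bound}. Combining both pieces and choosing $c_\beta$ large enough absorbs all constants into $\beta = c_\beta\cdot dH\sqrt{\iota}$.

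The main obstacle, and the reason the concentration event must be stated for a whole function class rather than a single $V_{h+1}^k$, is that the ``noise'' term in (I) is not a true martingale: $V_{h+1}^k$ is constructed from the same data $\{(x_h^\tau,a_h^\tau,x_{h+1}^\tau)\}_{\tau<k}$ that appears in the sum. This is precisely why Lemma \ref{le:concentration} is proved via a uniform concentration over the $\epsilon$-covering of the class $\cV$ (which accommodates the extra $Zg(\phi(\cdot,a))_+$ penalty structure specific to LSVI-AE). Once that uniform bound is in hand, the argument above is a direct, mostly algebraic, verification; aside from checking the norm bound on $\tilde w_h^k$ and tuning $c_\beta$, no additional probabilistic work is required.
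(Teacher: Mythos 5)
Your proposal is correct and follows essentially the same route as the paper: the paper decomposes $w_h^k-w_h^\pi$ into a regularization term, the martingale-type noise term controlled by the event $\cE$, and a term that (via the linear MDP structure) yields $\bP_h(V_{h+1}^k-V_{h+1}^\pi)$ plus a second regularization-type bias, then applies Cauchy--Schwarz in the $(\Lambda_h^k)^{-1}$-norm and tunes $c_\beta$. Your use of the auxiliary weight $\tilde w_h^k=\theta_{r,h}+\int V_{h+1}^k\,d\mu_h$ merely merges the paper's two bias pieces into one (since $\tilde w_h^k=w_h^\pi+\int(V_{h+1}^k-V_{h+1}^\pi)\,d\mu_h$), and your remark about why the concentration must be uniform over the class $\cV$ matches the paper's covering-number argument.
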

\begin{proof}
    Let $r_h^\tau := r_h(x_h^\tau,a_h^\tau).$ We first know that for any $(x,a,h)\in\cS\times\cA\times[H],$ $$Q_h^\pi(x,a) := \langle \phi(x,a),w_h^\pi\rangle = r_h(x,a)+\bP_h V_{h+1}^\pi(x,a).$$ Then we have:
    \begin{align}
        w_h^k - w_h^\pi = & (\Lambda_h^k)^{-1}\sum_{\tau=1}^{k-1}\phi_h^\tau[r_h^\tau + V_{h+1}^k(x_{h+1}^\tau) ] - w_h^\pi\nonumber\\
        = & \underbrace{-\lambda(\Lambda_h^k)^{-1}(w_h^\pi)}_{\bf{q1}} + \underbrace{(\Lambda_h^k)^{-1}\sum_{\tau=1}^{k-1} \phi_h^\tau [ V_{h+1}^k(x_{h+1}^\tau )-\bP_h V_{h+1}^k(x_h^\tau,a_h^\tau)]}_{\bf{q2}} \nonumber \\
        & + \underbrace{(\Lambda_h^k)^{-1}\sum_{\tau=1}^{k-1} \phi_h^\tau [ \bP V_{h+1}^k(x_{h}^\tau,a_h^\tau )-\bP_h V_{h+1}^\pi (x_h^\tau,a_h^\tau)]}_{\bf{q3}}. \label{eq:qterms}
    \end{align}
    Now we bound each time on the right-hand side of the expression in Eq. \eqref{eq:qterms} individually. For the first term,
    \begin{align}
        \vert \langle \phi(x,a), {\bf{q1}}\rangle\vert = \vert \lambda\langle \phi(x,a), (\Lambda_h^k)^{-1} w_h^\pi \rangle\vert \leq \sqrt{\lambda }\Vert w_h^\pi\Vert \sqrt{\phi(x,a)^\top (\Lambda_h^k)^{-1}\phi(x,a) }. \label{eq:q1-bound}
    \end{align}
    For the second term, given the event $\cE$ defined in Lemma \ref{le:concentration}, we have:
    \begin{align}
        \vert \langle \phi(x,a),{\bf{q2}}\rangle \vert \leq c_0\cdot dH\sqrt{\chi}\sqrt{\phi(x,a)^\top (\Lambda_h^k)^{-1}\phi(x,a)} \label{eq:q2-bound}
    \end{align} for an absolute constant $c_0$ independent of $c_\beta,$ and $\chi=\log[3(c_\beta+1)dHK/p ].$ For the third term,
    \begin{align}
        &  \vert \langle \phi(x,a),{\bf{q3}}\rangle \vert = \bigg\langle \phi(x,a), (\Lambda_h^k)^{-1} \sum_{\tau=1}^{k-1}\phi_h^\tau \bP_h(V_{h+1}^k - V_{h+1}^\pi)(x_h^\tau,a_h^\tau)\bigg\rangle \nonumber \\
         = &  \bigg\langle \phi(x,a), (\Lambda_h^k)^{-1} \sum_{\tau=1}^{k-1}\phi_h^\tau (\phi_h^\tau)^\top \int (V_{h+1}^k - V_{h+1}^\pi)(x') d \mu_h(x')   \bigg\rangle \nonumber \\
         = &  \bigg\langle  \phi(x,a), \int (V_{h+1}^k - V_{h+1}^\pi)(x') d \mu_h(x')   \bigg\rangle -  \bigg\langle  \phi(x,a), \lambda(\Lambda_h^k)^{-1}  \int (V_{h+1}^k - V_{h+1}^\pi)(x') d \mu_h(x')   \bigg\rangle.\label{eq:q3terms}
    \end{align}
    The first term in Eq. \eqref{eq:q3terms} is equal to $$\bP_h(V_{h+1}^k -V_{h+1}^\pi )(x,a).$$ 
    The second term in Eq. \eqref{eq:q3terms} can be bounded as:
    \begin{align}
        \bigg\vert  \bigg\langle  \phi(x,a), \lambda(\Lambda_h^k)^{-1}  \int (V_{h+1}^k - V_{h+1}^\pi)(x') d \mu_h(x')   \bigg\rangle \bigg\vert \leq 2H\sqrt{d\lambda} \sqrt{\phi(x,a)^\top (\Lambda_h^k)^{-1}\phi(x,a)}. \label{eq:q3-bound}
    \end{align}
    Since $\langle \phi(x,a),w_h^k\rangle - Q_h^\pi(x,a) = \langle \phi(x,a), w_h^k-w_h^\pi \rangle + \langle \phi(x,a), {\bf{q_1+q_2+q_3}} \rangle,$ then combing the results from Equations \eqref{eq:q1-bound},\eqref{eq:q2-bound},\eqref{eq:q3-bound}, Lemma \ref{le:whpi-bound} and our choice of parameter $\lambda=1,$ we can obtain 
\begin{align}
    \vert \langle \phi(x,a),w_h^k\rangle - Q_h^\pi(x,a) -\bP_h(V_{h+1}^k -V_{h+1}^\pi )(x,a)\vert \leq c'\cdot dH\sqrt{\chi}\sqrt{\phi(x,a)^\top (\Lambda_h^k)^{-1}\phi(x,a)},
\end{align}
    for an absolute constant $c'$ independent of $c_\beta.$ Finally, to prove this lemma, we only need to show that  
    \begin{align}
        c'\sqrt{\chi} = & c' \sqrt{\log[ 3 (c_\beta+1) dHK/p ] } \nonumber \\
        = & c'\sqrt{\iota + \log(\frac{3}{2}(c_\beta+1))} \nonumber \\
        \leq & c_\beta\sqrt{\iota} \label{eq:c'cbeta}
    \end{align}
    where $\iota= \log(2 dHK/p).$ We know that $\iota \in [\log 2,\infty  ] $ and $c'$ is an absolute constant independent of $c_\beta.$ Therefore, by choosing $c_\beta$ to make it satisfy $c'\sqrt{\log 2 + \log (3/2(c_\beta+1))  }\leq c_\beta\sqrt{\log 2},$ we can observe that Eq.\ref{eq:c'cbeta} holds for all $\iota\in[\log 2,\infty).$ 
\end{proof}

\begin{lemma}
Under the event defined in Lemma \ref{le:concentration}, for any fixed $p\in(0,1),$ if we set $\lambda=1,\beta=c\cdot dH\sqrt{\iota}$ in Algorithm \ref{alg:hard} with $\iota=\log(2dHK/p),$ then with probability at least $1-p/2,$ we have :
    \begin{align}
     \sum_{k=1}^K V_h^k(x_h^k)-V^{\pi^k}_h(x_h^k) \leq c'\sqrt{d^3H^4K\iota^2} = {\cO}(\sqrt{d^3H^4K\iota^2}).
    \end{align}
\end{lemma}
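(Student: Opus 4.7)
The plan is to establish a per-step upper bound on $V_{h'}^k(x_{h'}^k)-V_{h'}^{\pi^k}(x_{h'}^k)$ via Lemma~\ref{le:error-bound}, unroll it over $h'=h,\ldots,H$, and then sum over episodes using the elliptical potential lemma and an Azuma--Hoeffding concentration. First I observe that since both $V_{h'}^k$ and $V_{h'}^{\pi^k}$ plug in the same action --- the algorithm sets $\pi_{h'}^k(x)=\argmax_a\{Q_{h'}^k(x,a)-Z_{h'}^k \hat g_{h'}^k(x,a)_+\}$ and then defines $V_{h'}^k(x)=Q_{h'}^k(x,\pi_{h'}^k(x))$, while $V_{h'}^{\pi^k}(x)=Q_{h'}^{\pi^k}(x,\pi_{h'}^k(x))$ --- we have the clean identity $V_{h'}^k(x_{h'}^k)-V_{h'}^{\pi^k}(x_{h'}^k)=Q_{h'}^k(x_{h'}^k,a_{h'}^k)-Q_{h'}^{\pi^k}(x_{h'}^k,a_{h'}^k)$.

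Applying Lemma~\ref{le:error-bound} with $\pi=\pi^k$, together with $Q_{h'}^k(x,a)\le \langle\phi(x,a),w_{h'}^k\rangle+\beta\|\phi(x,a)\|_{(\Lambda_{h'}^k)^{-1}}$ (from the $\min\{\cdot,H\}$ truncation), yields
\begin{align*}
Q_{h'}^k(x_{h'}^k,a_{h'}^k)-Q_{h'}^{\pi^k}(x_{h'}^k,a_{h'}^k)\le \mathbb{P}_{h'}(V_{h'+1}^k-V_{h'+1}^{\pi^k})(x_{h'}^k,a_{h'}^k)+2\beta\|\phi_{h'}^k\|_{(\Lambda_{h'}^k)^{-1}}.
\end{align*}
Decomposing the conditional expectation as $\mathbb{P}_{h'}(V_{h'+1}^k-V_{h'+1}^{\pi^k})(x_{h'}^k,a_{h'}^k)=(V_{h'+1}^k-V_{h'+1}^{\pi^k})(x_{h'+1}^k)+\xi_{h'}^k$, where $\xi_{h'}^k$ is a martingale difference with $|\xi_{h'}^k|\le 2H$, and unrolling from step $h$ to $H$ using $V_{H+1}^k\equiv V_{H+1}^{\pi^k}\equiv 0$, I obtain
\begin{align*}
V_h^k(x_h^k)-V_h^{\pi^k}(x_h^k)\le 2\beta\sum_{h'=h}^H\|\phi_{h'}^k\|_{(\Lambda_{h'}^k)^{-1}}+\sum_{h'=h}^H\xi_{h'}^k.
\end{align*}

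Summing over $k\in[K]$, Cauchy--Schwarz plus the elliptical potential lemma gives, for each fixed $h'$, $\sum_{k=1}^K\|\phi_{h'}^k\|_{(\Lambda_{h'}^k)^{-1}}\le\sqrt{K\sum_k\|\phi_{h'}^k\|_{(\Lambda_{h'}^k)^{-1}}^2}=\mathcal{O}(\sqrt{dK\iota})$. Combining with $\beta=\mathcal{O}(dH\sqrt{\iota})$ and summing over $h'\in[h,H]$ produces the dominant bound $\mathcal{O}(\sqrt{d^3H^4K\iota^2})$. The martingale tail $\sum_{k,h'}\xi_{h'}^k$ is controlled by Azuma--Hoeffding as $\mathcal{O}(H\sqrt{HK\iota})$ with probability at least $1-p/2$, which is strictly lower order.

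The main technical care is twofold. First, the truncation $Q_{h'}^k\le H$ must not break the recursion: I use the unclipped estimate plus bonus as the relevant upper bound (which keeps Lemma~\ref{le:error-bound} applicable), while the trivial $Q_{h'}^{\pi^k}\le H$ guarantees the difference never exceeds $H$ regardless of clipping. Second, for the probability bookkeeping, the deterministic inequality from Lemma~\ref{le:error-bound} already holds on the event $\cE$ of Lemma~\ref{le:concentration}; the only fresh randomness enters through Azuma--Hoeffding for $\{\xi_{h'}^k\}$, which can be absorbed into the $p/2$ budget in the statement by a union bound and an appropriate tightening of the constant in $\beta$ so that the $\iota=\log(2dHK/p)$ appearing in the final bound suffices.
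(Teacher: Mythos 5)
Your proposal is correct and follows essentially the same route as the paper's own proof: the per-step recursion $Q_h^k-Q_h^{\pi^k}\leq \bP_h(V_{h+1}^k-V_{h+1}^{\pi^k})+\cO(\beta)\Vert\phi_h^k\Vert_{(\Lambda_h^k)^{-1}}$ via Lemma~\ref{le:error-bound}, a martingale-difference decomposition unrolled to step $H$, Azuma--Hoeffding for the martingale part, and Cauchy--Schwarz plus the elliptical potential lemma for the bonus sum, with $\beta=cdH\sqrt{\iota}$ yielding $\cO(\sqrt{d^3H^4K\iota^2})$. Your explicit handling of the $\min\{\cdot,H\}$ truncation (giving a $2\beta$ rather than $\beta$ bonus term) is if anything slightly more careful than the paper's write-up, and the difference is absorbed into the absolute constant.
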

\begin{proof}
    Let $\delta_h^k = V_h^k(x_h^k) - V_h^{\pi^k}(x_h^k),$ and $\zeta_{h+1}^k = \bE[\delta_{h+1}^k\vert x_h^k,a_h^k ]-\delta_{h+1}^k.$ Then on the event defined in Lemma \ref{le:concentration}, then for and $k\in[K],h\in[H],$ according to Lemma \ref{le:error-bound} we know that for $(x_h^k,a_h^k),$
    \begin{align}
     \delta_h^k= &  V_h^k(x_h^k ) - V_h^{\pi^k}(x_h^k)=  Q_h^k(x_h^k,a_h^k) - Q_h^{\pi^k}(x_h^k,a_h^k) \nonumber \\
     \leq &\bP_h(V_{h+1}^k - V_{h+1}^{\pi^k}) (x_h^k,a_h^k) +  \beta\sqrt{(\phi_h^k)^\top (\Lambda_h^k)^{-1}\phi_h^k} \nonumber \\
        \leq & \delta_{h+1}^k + \zeta_{h+1}^k + \beta\sqrt{(\phi_h^k)^\top (\Lambda_h^k)^{-1}\phi_h^k}.
    \end{align}
Then taking the summation over $K$ episodes on both sides, we have
\begin{align}
  &  \sum_{k=1}^K[Q_1^k(x_1^k,a_1^k)  - Q_h^{\pi^k}(x_1^k,a_1^k) ] =   \sum_{k=1}^K[V_1^k(x_1^k)  - V_h^{\pi^k}(x_1^k) ] \nonumber\\
   = & \sum_{k=1}^K \delta_1^k  \leq  \sum_{k=1}^K\sum_{h=1}^H \zeta_h^k + \beta\sum_{k=1}\sum_{h=1}^h \sqrt{(\phi_h^k)^\top (\Lambda_h^k)^{-1}\phi_h^k }
\end{align}
For the first term $\{\zeta_h^k\}$ is a martingale difference sequence satisfying $\vert \zeta_h^k\vert \leq 2H$ for all $(k,h),$ then using Azuma-Hoeffding inequality, for any $t>0,$ with probability at least $1-p/2,$ we have 
\begin{align}
    \sum_{k=1}^K\sum_{h=1}^H\zeta_h^k\leq \sqrt{2KH\cdot H^2\log(2/p)} \leq H\sqrt{HK \iota},  \label{eq:esterror-term1}
    \end{align}
    where $\iota=\log(2dHK/p).$ For the second term, using Lemma $D.2$ in \cite{JinYanZha_20} we have for any $h\in[H],$ 
    \begin{align}
        \sum_{k=1}^K(\phi_h^k)^\top (\Lambda_h^k)^{-1}\phi_h^k \leq 2 \log\left[\frac{det(\Lambda_h^{k+1})}{det(\Lambda_h^1)}. \right]
    \end{align}
    We know that $\Vert \Lambda_h^{k+1}\Vert = \Vert \sum_{\tau=1}^k \phi_h^k(\phi_h^k)^\top + \lambda I \Vert \leq \lambda + k,$ which implies that 
    \begin{align}
        \sum_{k=1}^K (\phi_h^k)^\top (\Lambda_h^k)^{-1}\phi_h^k \leq 2d\frac{\lambda+1}{\lambda} \leq 2d\iota.
    \end{align}
    Using Cauchy-Schwartz inequality, we can further obtain
    \begin{align}
        \sum_{k=1}^K\sum_{h=1}^H \sqrt{  (\phi_h^k)^\top (\Lambda_h^k)^{-1}\phi_h^k  } \leq \sum_{h=1}^H \sqrt{K} \left[ \sum_{k=1}^K  (\phi_h^k)^\top (\Lambda_h^k)^{-1}\phi_h^k \right]^{1/2} \leq H\sqrt{2dK\iota}. \label{eq:esterror-term2}
    \end{align}
    Combing Eq. \eqref{eq:esterror-term1} and Eq. \eqref{eq:esterror-term2} and with our choice of $\beta=cdH\sqrt{\iota},$ we conclude that with probability at least $1-p:$
    \begin{align}
        \sum_{k=1}^K  Q_h^k(x_h^k,a_h^k)-Q^{\pi^k}_h(x_h^k,a_h^k) \leq H\sqrt{HK\iota} + \beta H\sqrt{2dK\iota} \leq c'\sqrt{d^3H^4K\iota^2}
    \end{align}
\end{proof}


\subsection{Proof of Theorem \ref{the:lower}}
Motivated by \cite{ZhoGuSze_21,HuCheHua_22}, in which they show that a hard-to-learn linear MDP has a lower bound of $\Omega(HD\sqrt{HK}).$ We will illustrate a similar CMDP and then present the specific linear parametrization. 

\noindent {\bf Hard CMDP Instance:} The CMDP instance is denoted as $\cM= \{\cS,\cA,H,\{\bP_h\}_{h=1}^H,\{r_h\}_{h=1}^H,\{g_h\}_{h=1}^H \}.$ The state space $\cS$ consist of states $x_1,\dots,x_{H+2}$ such that $\vert \cS\vert = H+2.$ There are total $2^{d-1}$ actions and $\cA=\{-1,1\}^{d-1}$ such that each action $a\in\cA$ is denoted in vector form.
\begin{itemize}
\item Reward: For any step $h\in[H+2],$ only transitions originating at $x_{H+2}$ incurs a reward.
\item Cost: All actions at state $x_{H+1}, x_{H+2}$ are safe. For any other states, only the transition leads to the highest probability to $x_{H+2}$ is safe.
    \item Transition: $x_{H+1}$ and $x_{H+2}$ are absorbing states regardless of what action is taken. For state $s_i,i\in[H],$ the transition probability is given as 
       \begin{align}
        \bP_h(x'\vert x_i,a) =   \begin{cases}
    \delta + \langle u_h,a\rangle,   \quad  & x'=x_{H+2} \\
    1 - \langle  \delta + \langle u_h,a\rangle, \quad & x'= x_{i+1}, \\
    0 \quad & \text{otherwise}
\end{cases}
    \end{align}
    where $\delta= 1/H$ and $u_h\in\{-\Delta,\Delta\}^{d-1},$ with $\Delta = \sqrt{\delta/K}/(4\sqrt{2})$ 
\end{itemize}
{Linear Parametrization:} Then we will present the linear parametrization of this CMDP. Given the definition, for any $h\in[H],$ the transition probability matrix $\bP_h$ is defined as $$\bP_h(x'\vert x,a ) = \langle \phi(x,a),\mu_h(x')\rangle.$$ The reward function is defined as: $r_h(x,a)=\langle \phi(x,a),\theta_h\rangle,$ where $\phi(x,a)$ is a known feature mapping. $\mu_h = (\mu_h(x_1),\dots,\mu_h(x_{H+2}))\in\bR^{(d+1)\times(H+2)} $ and $\theta_h\in\bR^{d+1}$ are unknown parameters in linear CMDPs. Here $\phi(x,a),\mu_h,\theta_h$ are specified as:
\begin{align}
\phi(x,a) = & \begin{cases}
    (\alpha,\beta a^\top,0 )^\top, \quad  & x=x_h,h\in[H+1] \\
    (0,0^\top,1)^\top, \quad & x= x_{H+2} 
\end{cases}\\
\mu_h(x') = & \begin{cases}
    ((1-\delta)/\alpha, -u_h^\top / \beta, 0 )^\top, \quad & x'=x_{H+1} \\
    (\delta/\alpha,u_h^\top / \beta,1)^\top, \quad & x'=x_{H+2} \\
    0, \text{otherwise}
\end{cases}\\
\theta_h & = (0^\top,1)^\top ,\\
g_h(\phi(x,a)) = & \begin{cases}
    0 \quad & x=x_{H+2},x=_{H+1} \\
    0 \quad & x=x_h,h\in[H+1],a = \argmax_{a'} \langle u_h,a'\rangle \\
    1 \quad & \text{otherwise}\\
\end{cases}
\end{align}
where $\alpha=\sqrt{1/(1+\Delta(d+1))},$ and $\beta=\sqrt{\Delta/(1+\Delta(d-1))}.$ 
Given the definitions, we can easily check the Assumption \ref{as:linear} is satisfied. In particular, we have 
\begin{itemize}
    \item For $x=x_h,h\in[H+1],$ we have $\Vert \phi(x,a)\Vert_2 = \sqrt{\alpha^2+(d-1)\beta^2}=1.$ For $x_{H+2},$ we have $\Vert\phi(x_{H+2},a) \Vert_2=1.$ Thus $\Vert \phi(x,a)\Vert_2\leq 1,\forall x\in\cS,a\in \cA.$
    \item For any $v=(v_1,\dots,v_{H+2})\in \bR^{H+2}$ such that $\vert v\Vert_\infty \leq 1,$ we have
    $$\vert \mu_h  v\Vert_2^2=\left[ \frac{v_1(1-\delta)}{\alpha} + \frac{v_1\delta}{\alpha}  \right]^2 + v_{H+2}^2 \leq \frac{1}{\alpha^2} + 1 = [1+\Delta(d-1)]^2+1=\left[1+ \frac{\sqrt{\delta/K}}{4\sqrt{2}}(d-1) \right]^2+1\leq d+1,  $$
    where the last inequality holds due the assumption that $K\geq (d-1)/(32H(\sqrt{d}-1)).$ Therefore we also have $\Vert \mu_h v\Vert_2\leq \sqrt{d+1},h\in[H].$
    \item For the $\theta_h,$ it is obvious that $\Vert \theta_h\Vert_2\leq 1\leq \sqrt{d+1},h\in[H].$
\end{itemize}

Now, clearly, since the only rewarding transitions are those from $x_{H+2},$ and the definition of the cost functions, the optimal strategy in stage $h$ when in state $x_h$ is to take action $\argmax_{a\in\cA}\langle u_h,a\rangle.$

We first restate a Lemma which shows that the regret in this CMDP can be lower bounded by the regret of $H/2$ bandit instances:
\begin{lemma}
    Suppose $H\geq 3$ and $3(d-1)\Delta\leq \delta.$ Fix $u\in(\{-\Delta,\Delta\}^{d-1})^H.$ Fix a policy $\pi$ and define $\bar{a}_h^\pi = \bE_{u}[ a_h\vert s_h=x_h,s_1=x_1 ] $ which indicates the action taken by the policy when it visits state $x_h$ in step $h$ with the initial state $x_1.$ Let $V^*,V^\pi$ be the optimal value function and value function under policy $\pi$ respectively, then we have
    \begin{align}
        V_1^*(x_1) - V_1^\pi(x_1) \geq & \frac{H}{10} \sum_{h=1}^{H/2} \left( \max_{a\in\cA}\langle u_h,a\rangle - \langle u_h,\bar{a}_h^\pi\rangle
  \right) \\
    \sum_{h=1}^{H}g_h(s_h,a_h)_+  \geq & \sum_{h=1}^{H/2}(1-\frac{4}{3H})\bI_{\{\bar{a}_h^\pi \neq  \max_{a\in\cA}\langle u_h,a\rangle  \}}
    \end{align}
\end{lemma}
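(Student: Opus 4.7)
My plan is to decompose the regret via the standard performance-difference identity and then bound each per-step Bellman gap. In this hard CMDP the only reachable states from $x_1$ are the chain states $\{x_1,\dots,x_H,x_{H+1}\}$ together with the absorbing state $x_{H+2}$; crucially, at $x_{H+2}$ all actions have the same $Q^*$-value (identical reward and transition), and $x_{H+1}$ is absorbing with zero reward. So in
\[V_1^*(x_1)-V_1^\pi(x_1)=\sum_{h=1}^H\bE_\pi\bigl[V_h^*(s_h)-Q_h^*(s_h,a_h)\bigr],\]
only the terms with $s_h=x_h$ contribute.

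At $s_h=x_h$, the linearity of the transition in $a$ yields
\[V_h^*(x_h)-Q_h^*(x_h,a)=\bigl[\max_{a'}\langle u_h,a'\rangle-\langle u_h,a\rangle\bigr]\,\Delta_h,\qquad \Delta_h:=V_{h+1}^*(x_{H+2})-V_{h+1}^*(x_{h+1}).\]
I would then lower-bound $\Delta_h$ using $V_{h+1}^*(x_{H+2})=H-h$ (reward $1$ at every subsequent step) together with the estimate
\[V_{h+1}^*(x_{h+1})=\sum_{j=1}^{H-h-1}(H-h-j)\,p^*(1-p^*)^{j-1}\le\frac{p^*(H-h)^2}{2},\]
where $p^*=\delta+(d-1)\Delta$. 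The hypothesis $3(d-1)\Delta\le\delta=1/H$ yields $p^*\le 4/(3H)$, which gives $\Delta_h\ge(H-h)\bigl(1-2(H-h)/(3H)\bigr)\ge H/3$ uniformly for $h\le H/2$. Taking expectations under $\pi$ and using the definition of $\bar a_h^\pi$ together with the linearity of $\langle u_h,\cdot\rangle$, the per-step contribution becomes $\bP_\pi(s_h=x_h)\cdot\Delta_h\cdot(\max_{a'}\langle u_h,a'\rangle-\langle u_h,\bar a_h^\pi\rangle)$.

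The survival probability $\bP_\pi(s_h=x_h)=\prod_{j<h}(1-\bar p_j)\ge(1-4/(3H))^{h-1}\ge 1-4(h-1)/(3H)\ge 1/3$ for $h\le H/2$ by Bernoulli's inequality, since each $\bar p_j\le p^*\le 4/(3H)$. Combining the three lower bounds gives
\[V_1^*(x_1)-V_1^\pi(x_1)\ge\frac{1}{3}\cdot\frac{H}{3}\sum_{h=1}^{H/2}\bigl(\max_{a'}\langle u_h,a'\rangle-\langle u_h,\bar a_h^\pi\rangle\bigr)\ge\frac{H}{10}\sum_{h=1}^{H/2}(\cdots).\]
For the violation, by construction any $a\ne a_h^*:=\arg\max_{a'}\langle u_h,a'\rangle$ at $s_h=x_h$ produces $g_h=1$, while $g_h=0$ elsewhere; after taking expectations, $\bE[\sum_h g_h(s_h,a_h)_+]\ge\sum_h\bP_\pi(s_h=x_h)\cdot\bP_\pi(a_h\ne a_h^*\mid s_h=x_h)$. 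When $\bar a_h^\pi\ne a_h^*$ (which, for deterministic policies, is exactly the event that the policy chooses a non-optimal action at $x_h$) the conditional probability is $1$, and the per-step survival factor yields the $(1-4/(3H))$ coefficient via the same Bernoulli estimate, restricting the sum to $h\le H/2$.

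The hard part will be tightening the constants cleanly. The factor $H/10$ hinges on the careful $\Delta_h\ge H/3$ estimate, which in turn relies on $3(d-1)\Delta\le\delta$ precisely to keep $p^*$ close to $1/H$; restricting to $h\le H/2$ is essential because for $h$ close to $H$ both $\Delta_h$ and the survival probability degrade and can no longer be bounded away from zero. A subtlety on the violation side is that the LHS of the claim is a trajectory-level random variable while the RHS depends only on the conditional-mean action $\bar a_h^\pi$; I would interpret the statement in expectation over $\pi$, and then a non-matching $\bar a_h^\pi$ forces a strictly positive lower bound on the conditional probability of a wrong action, which for deterministic policies is $1$.
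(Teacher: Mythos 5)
Your regret half is correct and is actually more self-contained than the paper's: for that inequality the paper simply cites Lemma C.7 of Zhou, Gu and Szepesv\'ari (2021) and does not reprove it, whereas you rebuild it from the performance-difference identity, the explicit values $V_{h+1}^*(x_{H+2})=H-h$ and $V_{h+1}^*(x_{h+1})\le p^*(H-h)^2/2$ with $p^*=\delta+(d-1)\Delta\le\frac{4}{3H}$ (this is where $3(d-1)\Delta\le\delta$ enters), and the survival estimate $\bP_\pi(s_h=x_h)=\prod_{j<h}\left(1-\delta-\langle u_j,\bar a_j^\pi\rangle\right)\ge\left(1-\frac{4}{3H}\right)^{h-1}\ge 1/3$ for $h\le H/2$. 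The constants check out: $\Delta_h\ge H/3$ on $h\le H/2$ and $\frac13\cdot\frac H3=\frac H9\ge\frac H{10}$, so this part stands on its own and is a legitimate alternative to the citation.

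The violation half uses the same survival-probability idea as the paper, but the step where you claim ``the per-step survival factor yields the $(1-4/(3H))$ coefficient via the same Bernoulli estimate'' does not go through. The relevant factor at step $h$ is the cumulative survival probability $\prod_{j<h}\left(1-\delta-\langle u_j,\bar a_j^\pi\rangle\right)$, and your Bernoulli bound gives only $\ge 1-\frac{4(h-1)}{3H}\ge\frac13$ for $h\le H/2$, not $\ge 1-\frac{4}{3H}$ (which is at least $5/9$ when $H\ge 3$); a nearly-optimal policy that deviates only around step $H/2$ shows you cannot do better than a constant like $e^{-2/3}$ from this accounting. The paper reaches the larger constant differently: it discounts each indicator only by $\bP(N_{h-1}\mid s_1=x_1)$, the probability of absorption at the single preceding step, which is at most $\frac{4}{3H}$, rather than by the cumulative absorption probability $\sum_{j<h}\bP(N_j)$ — that weaker accounting is what produces $(1-\frac{4}{3H})$, and it is not recoverable from your (correct) product formula. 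So as written you prove the lemma with coefficient $\frac13$ in place of $(1-\frac{4}{3H})$; this constant-factor loss is harmless for the $\Omega(\sqrt{HK})$ bound in Theorem 2, but it does not match the lemma verbatim. Your caveat about stochastic policies (reading both inequalities in expectation and replacing $\bI\{\bar a_h^\pi\ne a_h^*\}$ by the conditional probability of a wrong action, which equals it only for deterministic policies) is a real subtlety, but it is equally present in the paper's own argument, so it is not an additional gap on your side.
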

\begin{proof}
    The proof of the regret can be found in \cite{ZhoGuSze_21}(Lemma $C.7$). We only provide proof of the violation for completeness. Fix $u,$ and then we can drop the subscript from $\bP$ and $\bE.$ Recall that $\cA=\{+1,-1\}^{d-1}$ and $u_h\in \{-\Delta,+\Delta\}^{d-1}.$ Let $N_h$ denote the event of visiting state $x_h$ in step $h$ then moves to the absorbing state $x_{H+2}:$
    \begin{align}
        N_h = \{s_{h+1}=x_{H+2},s_h=x_h\}.
    \end{align}
    Therefore we can write down the violation for one episode as 
    \begin{align}
        \text{Violation} = \sum_{h=1}^H g_h(s_h,a_h)_+ = \bI_{\{a_1\neq a_1^*\}} + \sum_{h=2}^{H} (1-  \bP(N_{h-1} \vert s_1=x_1)) \bI_{\{a_h\neq a_h^* \}}  ,
    \end{align}
    where we use $a_h^*$ to denote the action chosen by the optimal policy. By the law of total probability, we have that 
    \begin{align}
        &\bP(s_{h+1}=x_{H=2} \vert s_h=x_h,s_1=x_1) \nonumber\\
        =& \sum_{a\in\cA} \bP(s_{h+1}=x_{H+2}\vert s_h=x_h,a_h=a)\bP(a_h=a\vert s_h=x_h,s_1=x_1) \nonumber\\
        = & \sum_{a\in\cA} (\delta + \langle u_h,a\rangle)\bP(a_h=a\vert s_h=x_h,s_1=x_1) \nonumber \\
        = & \delta + \langle u_h,\bar{a}_h^\pi\rangle,
    \end{align}
    where the last equality holds because of the definition that $\bar{a}_h^\pi=\sum_{a\in\cA}\bP(a_h=a\vert s_h=x_h,s_1=x_1)a.$ Also we have that $P(s_{h+1}=x_{h+1} \vert s_h=x_h,s_1=x_1) = 1- (\delta+\langle u_h,\bar{a}_h^\pi\rangle),$ therefore we have:
    \begin{align}
        \bP(N_h) = & (\delta+\langle u_h,\bar{a}_h^\pi\rangle) \prod_{j=1}^{h-1}(1-\delta-\langle u_j,\bar{a}_j^\pi\rangle) 
    \end{align}
    Define $a_h=\langle u_h,\bar{a}_h^\pi\rangle,$ then we have 
    $$\sum_{h=1}^{H}g_h(s_h,a_h)_+=   \bI_{\{a_1\neq a_1^*\}}  + \sum_{h=2}^H  (1- (a_{h-1}+\delta)\prod_{j=1}^{h-1}(1-a_j-\delta))\bI_{\{a_h\neq a_h^* \}}.$$
Since we have that $$ (a_{h-1}+\delta)\prod_{j=1}^{h-1}(1-a_j-\delta) \leq (d-1)\Delta +\delta \leq \frac{4}{3}\delta,$$ where the last inequality holds because $a_{h-1}\leq (d-1)\Delta.$ Therefore by substituting $\delta=1/H,$ we have 
\begin{align}
    \sum_{h=1}^{H}g_h(s_h,a_h)_+ \geq \bI_{\{a_1\neq a_1^*\}} + \sum_{h=2}^H(1-\frac{4}{3H})\bI_{\{a_h\neq a_h^* \}} \nonumber \\
    \geq \sum_{h=1}^{H/2}(1-\frac{4}{3H})\bI_{\{\bar{a}_h^\pi \neq  \max_{a\in\cA}\langle u_h,a\rangle  \}} .
\end{align}
 \end{proof}
\noindent The next Lemma (Lemma $C.8$ in \cite{ZhoGuSze_21}) provides a lower bound on linear bandits with the action set $\cA=\{-1,1\}^{d-1}.$
\begin{lemma}
    For a positive real $0<\delta\leq 1/3,$ and assume that $K\geq d^2.$ Let $\Delta = \sqrt{\delta/K}(4\sqrt{2})$ and consider the linear bandit problem $\cL_u,$ which is parameterized with a parameter vector $u\in\{-\Delta,\Delta \}^d$ and an action set $\cA\in\{-1,1\}^d$ so that the reward distribution for taking action $a\in\cA$ is a Bernoulli distribution $B(\delta+\langle u^*,a\rangle).$ Then for any bandit algorithm $\cB,$ there exists a $u^*\in\{-\Delta,\Delta \}^d $ such that the expected regret of $\cB$ over $K$ steps on bandit $\cL_{u^*}$ is lower bounded as follows:
    \begin{align}
        \bE_{u^*} \text{Regret}(K) \geq \frac{d\sqrt{K\delta}}{8\sqrt{2}}
    \end{align}
\end{lemma}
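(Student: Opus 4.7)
The plan is to use the standard information-theoretic lower-bound machinery for stochastic linear bandits on the hypercube, coupling pairs of problem instances that differ in exactly one coordinate of $u$. Because the reward is $\delta+\langle u,a\rangle$ with $a\in\{-1,1\}^d$, the optimal arm under $u$ is $a^\star(u)=\text{sign}(u)$, and the instantaneous regret of playing $a$ decomposes coordinatewise as $\langle u,a^\star(u)-a\rangle = 2\Delta\sum_{i=1}^d \bI\{a_i\neq \text{sign}(u_i)\}$. Writing $X_i := \sum_{t=1}^K \bI\{a_{t,i}=-1\}$, the expected regret under any algorithm $\cB$ is
\begin{align*}
\bE_u[\text{Regret}(K)] = 2\Delta \sum_{i=1}^d \begin{cases} \bE_u[X_i] & \text{if } u_i=+\Delta,\\ \bE_u[K-X_i] & \text{if } u_i=-\Delta.\end{cases}
\end{align*}

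For each coordinate $i$ and each $u_{-i}\in\{-\Delta,+\Delta\}^{d-1}$, I would pair the instance $u^{(+)}:=(u_{-i},+\Delta)$ with $u^{(-)}:=(u_{-i},-\Delta)$ and use the elementary identity
\begin{align*}
\bE_{u^{(+)}}[X_i]+\bE_{u^{(-)}}[K-X_i] \;=\; K + \bE_{u^{(+)}}[X_i]-\bE_{u^{(-)}}[X_i].
\end{align*}
Since $X_i\in[0,K]$, the generic bound $|\bE_{u^{(+)}}[X_i]-\bE_{u^{(-)}}[X_i]|\leq K\cdot \mathrm{TV}(P_{u^{(+)}},P_{u^{(-)}})$ gives $\bE_{u^{(+)}}[X_i]+\bE_{u^{(-)}}[K-X_i]\geq K\bigl(1-\mathrm{TV}(P_{u^{(+)}},P_{u^{(-)}})\bigr)$, and Pinsker's inequality then bounds the TV distance by $\sqrt{\mathrm{KL}(P_{u^{(+)}},P_{u^{(-)}})/2}$.

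To control the KL, I would invoke the bandit divergence-decomposition identity $\mathrm{KL}(P_{u^{(+)}},P_{u^{(-)}})=\sum_{a\in\cA}\bE_{u^{(+)}}[T_a]\,\mathrm{KL}(\mathrm{Ber}(\mu_+^a),\mathrm{Ber}(\mu_-^a))$, where $T_a$ is the number of pulls of arm $a$ and $\mu_\pm^a:=\delta+\langle u^{(\pm)},a\rangle$. Because $u^{(+)}$ and $u^{(-)}$ differ only in coord $i$ by $2\Delta$, $\mu_+^a-\mu_-^a=\pm 2\Delta$ for every $a$. The Bernoulli KL bound $\mathrm{KL}(\mathrm{Ber}(p),\mathrm{Ber}(q))\leq (p-q)^2/(q(1-q))$, combined with the fact that $d\Delta$ is much smaller than $\delta$ under the hypotheses $K\geq d^2$ and $\delta\leq 1/3$ (so the means stay bounded away from $0$ and $1$), yields a per-arm KL of order $\Delta^2/\delta$. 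Summing over the $K$ rounds and substituting $\Delta=\sqrt{\delta/K}/(4\sqrt{2})$ produces $\mathrm{KL}(P_{u^{(+)}},P_{u^{(-)}})\leq c_0$ for a small absolute constant, which turns the TV bound into a constant strictly less than $1$, hence each paired sum is at least a constant times $K$.

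Finally, I would sum the per-pair lower bound over the $2^{d-1}$ choices of $u_{-i}$, then over all $d$ coordinates, to obtain
\begin{align*}
\frac{1}{2^d}\sum_{u\in\{-\Delta,\Delta\}^d}\bE_u[\text{Regret}(K)] \;\geq\; c_1\,\Delta\,d\,K \;=\; \frac{c_1\, d\sqrt{K\delta}}{4\sqrt{2}}
\end{align*}
for an absolute constant $c_1$. An averaging argument then yields a single $u^\star\in\{-\Delta,\Delta\}^d$ with regret at least this large, and tight constant tracking is intended to deliver exactly the claimed $d\sqrt{K\delta}/(8\sqrt{2})$ bound. The main obstacle is the constant bookkeeping rather than the architecture: one needs to verify that $q(1-q)\geq \delta/2$ uniformly for all $q=\mu_\pm^a$ under the stated hypotheses (so that the Bernoulli KL bound is strong enough), and to choose the KL-to-TV conversion so that the resulting $c_1$ reaches at least $1/2$; this interacts with the assumed scaling $K\geq d^2$ through the requirement that $d\Delta$ remain a small fraction of $\delta$.
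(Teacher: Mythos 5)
The paper does not actually prove this lemma itself---it is quoted verbatim as Lemma C.8 of Zhou, Gu and Szepesv\'ari (2021)---and your reconstruction follows exactly the standard Assouad-style argument behind that citation: coordinatewise regret decomposition via $a^\star(u)=\mathrm{sign}(u)$, coupling instances that differ in one coordinate, divergence decomposition plus a Bernoulli KL bound, Pinsker, and averaging over the hypercube; your constants also check out, since $\mathrm{KL}\leq K\cdot(2\Delta)^2\cdot 2/\delta = 1/4$ gives $1-\mathrm{TV}\geq 1-1/(2\sqrt{2})>1/2$ and hence average regret at least $\Delta d K/2 = d\sqrt{K\delta}/(8\sqrt{2})$. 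The one caveat you correctly flag---that the step $q(1-q)\geq \delta/2$ needs $d\Delta\lesssim\delta$, i.e., $K\gtrsim d^2/\delta$ rather than merely $K\geq d^2$---is a looseness in the lemma's stated hypotheses inherited from the citation rather than a flaw in your argument, and it holds where the paper applies the lemma: there $\delta=1/H$ and $K\geq (d-1)^2H/2$ force $(d-1)\Delta\leq\delta/4$, keeping all Bernoulli means in $[\delta/2,\,1-\delta/2]$.
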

\noindent Then using the Lemmas above we can have:
\begin{align}
    \bE_u \left[\sum_{k=1}^K[V_1^*(x_1) -V^{\pi^k}_1(x_1)]  \right] \geq &\frac{H}{10} \sum_{h=1}^{H/2}\bE_u\left[\left(\max_{a\in\cA}\langle u_h,a\rangle - \langle u_h,\bar{a}_h^{\pi^k}\rangle \right) \right] \nonumber\\
    \geq &\frac{H^2}{20}\frac{d\sqrt{K\delta} }{8\sqrt{2}},
\end{align}
and
\begin{align}
  \bE_u \left[ \sum_{k=1}^K   \sum_{h=1}^{H}g_h(s_h,a_h)_+ \right] \geq & \sum_{h=1}^{H/2}(1-\frac{4}{3H})\bE\left[\bI_{\{\bar{a}_h^\pi \neq  \max_{a\in\cA}\langle u_h,a\rangle  \}}\right] \nonumber \\
  \geq & \frac{H}{2}(1-\frac{4}{3H})\frac{\sqrt{K\delta}}{8\sqrt{2}},
\end{align}
where the violation doesn't have the dependence on $d$ because by assumption on the cons function that only $1$ counts for the violation whenever the action is not the same as the optimal action. The result follows by plugging in $\delta=1/H.$

\section{More Experimental Discussions} \label{ap:sim}
\subsection{Heat Map}
To better illustrate the exploration strategies between LSVI-AE and the baseline LSVI-Primal, we present the heat map in Figure \ref{fig:heat}, where a darker grid represents fewer visitations. We can observe that our algorithm quickly restricts unsafe actions and finds the optimal solution after $500$ episodes.

\begin{figure}[!ht]
    \centering
    \includegraphics[scale=0.3]{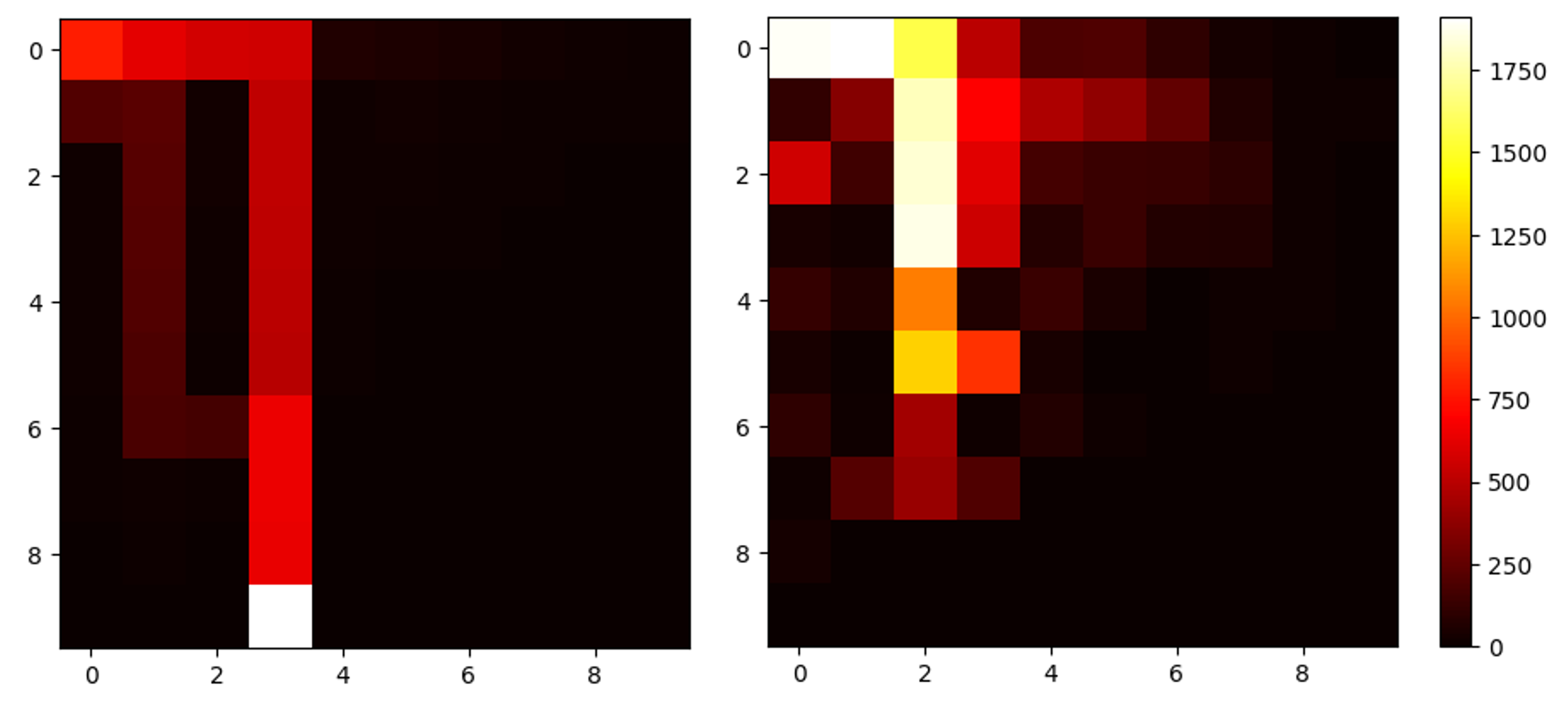}
    \caption{Heat Map after $500$ episodes}
    \label{fig:heat}
\end{figure}

\end{document}